\newcommand{\boldx}{\bm{x}}
\newcommand{\boldX}{\bm{X}}
\newcommand{\boldxS}{\bm{x_{S}}}
\newcommand{\boldxMinusS}{\bm{x_{-S}}}
\newcommand{\boldxMinusj}{\bm{x_{-j}}}
\newcommand{\boldxi}{\bm{x^{(i)}}}
\newcommand{\boldxMinusji}{\bm{x_{-j}^{(i)}}}
\newcommand{\boldhS}{\bm{h}_{S}}
\begin{document}

\title{Marginal Effects for Non-Linear Prediction Functions
\thanks{This work has been partially supported by the German Federal Ministry of Education and Research (BMBF) under Grant No. 01IS18036A. The authors of this work take full responsibilities
for its content. We thank the anonymous reviewers for their constructive comments, specifically on structuring the paper, on the line integral for the non-linearity measure, and on the instabilities of decision trees.}
}

\author{Christian A. Scholbeck \and
        Giuseppe Casalicchio  \and
        Christoph Molnar \and
        Bernd Bischl  \and
        Christian Heumann
}

\authorrunning{Scholbeck et al.} 

\institute{Christian A. Scholbeck \at
            Ludwig-Maximilians-Universität Munich, Department of Statistics, Munich, Germany \\
            \email{christian.scholbeck@stat.uni-muenchen.de}           
            \and
            Giuseppe Casalicchio \at
            Ludwig-Maximilians-Universität Munich, Department of Statistics, Munich, Germany
            \and 
            Christoph Molnar \at
            Ludwig-Maximilians-Universität Munich, Department of Statistics, Munich, Germany
            \and
            Bernd Bischl \at
            Ludwig-Maximilians-Universität Munich, Department of Statistics, Munich, Germany
             \and
            Christian Heumann \at
            Ludwig-Maximilians-Universität Munich, Department of Statistics, Munich, Germany
}

\def\makeheadbox{\relax}
\setlength{\textwidth}{\dimexpr\pdfpagewidth-2in}
\date{}

\maketitle

\begin{abstract}
Beta coefficients for linear regression models represent the ideal form of an interpretable feature effect. However, for non-linear models and especially generalized linear models, the estimated coefficients cannot be interpreted as a direct feature effect on the predicted outcome. Hence, marginal effects are typically used as approximations for feature effects, either in the shape of derivatives of the prediction function or forward differences in prediction due to a change in a feature value. While marginal effects are commonly used in many scientific fields, they have not yet been adopted as a model-agnostic interpretation method for machine learning models. This may stem from their inflexibility as a univariate feature effect and their inability to deal with the non-linearities found in black box models. We introduce a new class of marginal effects termed forward marginal effects. We argue to abandon derivatives in favor of better-interpretable forward differences. Furthermore, we generalize marginal effects based on forward differences to multivariate changes in feature values. To account for the non-linearity of prediction functions, we introduce a non-linearity measure for marginal effects.
We argue against summarizing feature effects of a non-linear prediction function in a single metric such as the average marginal effect. Instead, we propose to partition the feature space to compute conditional average marginal effects on feature subspaces, which serve as conditional feature effect estimates.
\end{abstract}

\section{Introduction}
\label{section:introduction}

The lack of interpretability of most machine learning (ML) models has been considered one of their major drawbacks \citep{breiman_twocultures}. As a consequence, researchers have developed a variety of model-agnostic techniques to explain the behavior of ML models. These techniques are commonly referred to by the umbrella terms of interpretable machine learning (IML) or explainable artificial intelligence. In this paper, we focus on feature effects that indicate the direction and magnitude of a change in a predicted outcome due to a change in feature values. We distinguish between local explanations on the observational level and global ones for the entire feature space. In many applications, we are concerned with feature effects; for example, in medical research, we might want to assess the increase in risk of contracting a disease due to a change in a patient's health characteristics such as age or body weight.
\par
Consider the interpretation of a linear regression model (LM) without interaction terms where $\beta_j$ denotes the coefficient of the $j$-th feature. Increasing a feature value $x_j$ by one unit causes a change in predicted outcome of $\beta_j$. LMs are therefore often interpreted by merely inspecting the estimated coefficients. When the terms are non-linear, interactions are present, or when the expected target is transformed such as in generalized linear models (GLMs), interpretations are both inconvenient and unintuitive. For instance, in logistic regression, the expectation of the target variable is logit-transformed, and the predictor term cannot be interpreted as a direct feature effect on the predicted risk. It follows that even linear terms have a non-linear effect on the predicted target that varies across the feature space and makes interpretations through the model parameters difficult to impossible. A more convenient and intuitive interpretation corresponds to the derivative of the prediction function w.r.t. the feature or inspecting the change in prediction due to an intervention in the data. These two approaches are commonly referred to as marginal effects (MEs) in statistical literature \citep{bartus_marginal_effects}. MEs are often aggregated to an average marginal effect (AME), which represents an estimate of the expected ME. Furthermore, marginal effects at means (MEM) and marginal effects at representative values (MER) correspond to MEs where all features are set to the sample mean or where some feature values are set to manually chosen values \citep{williams_margins}.
These can be used to answer common research questions, e.g., what the average effect of age or body weight is on the risk of contracting the disease (AME), what the effect is for a patient with average age and body weight (MEM), and what the effect is for a patient with pre-specified age and body weight values (MER).
An increasing amount of scientific disciplines now rely on the predictive power of black box ML models instead of using intrinsically interpretable models such as GLMs, e.g., econometrics \citep{athey_policy} or psychology \citep{stachl_smartphone}. This creates an incentive to review and refine the theory of MEs for the application to non-linear models.

\subsection{Motivation} 

In their current form, MEs are not an ideal tool to interpret many statistical models such as GLMs. Furthermore, the shortcomings of MEs are exacerbated when applying MEs to black box models such as the ones created by many ML algorithms.
The ME is an unfit tool to determine multivariate feature effects, which is a requirement for model interpretability. Moreover, for non-linear prediction functions, MEs based on derivatives provide misleading interpretations, as the derivative is evaluated at a different location in the feature space where it may substantially deviate from the prediction function. The alternative and often overlooked definition based on forward differences suffers from a loss in information about the shape of the prediction function (see Section \ref{sec:recommendations_shortcomings}).
Furthermore, predictive models typically do not behave reliably in areas with a low density of training observations. MEs are often based on such model extrapolations in order to answer specific research questions \citep{hainmueller_trust}. For instance, one may evaluate feature effects on the disease risk for an artificially created patient observation which was not included in the training data. This problem is aggravated when switching from derivatives to forward differences (see Section \ref{sec:recommendations_shortcomings}).
Lastly, for linear models, the ME is identical across the entire feature space. For non-linear models, one typically estimates the global feature effect by computing the AME \citep{bartus_marginal_effects, onukwugha_marginaleffects}. However, a global average does not accurately represent the nuances of a non-linear predictive model. A more informative summary of the prediction function corresponds to the conditional feature effect on a feature subspace, e.g., patients with an entire range of health characteristics might be associated with homogeneous feature effects. Instead of global interpretations on the entire feature space, one should instead aim for semi-aggregated (semi-global) interpretations. More specifically, one should work towards computing multiple, semi-global conditional AMEs (cAMEs) instead of a single, global AME.

\subsection{Contributions}

The purpose of this research paper is twofold. First, we aim to provide a review of the theory of MEs to compute feature effects. Second, we refine MEs for the application to non-linear models. To avoid misinterpretations resulting from applying derivative MEs (dMEs) to non-linear prediction functions, we argue to abandon them in favor of forward differences which we term forward marginal effects (fMEs). Furthermore, we extend the univariate definition of fMEs to multivariate feature changes in order to determine multivariate feature effects. We demonstrate the superiority of fMEs over dMEs to interpret tree-based prediction functions. Although the resulting fMEs provide a precise feature effect measure on the predicted outcome, we are at risk of misjudging the shape of the prediction function as a linear function. To counteract this loss in information, we propose a non-linearity measure (NLM) for fMEs based on the similarity between the prediction function and the intersecting linear secant. Furthermore, for a more nuanced interpretation, we introduce conditional MEs (cMEs) on feature subspaces as a semi-global feature effect measure that more accurately describes feature effects across the feature space. Moreover, we propose to estimate cMEs with conditional AMEs (cAMEs), which can be computed by recursively partitioning the feature space with a regression tree on fMEs. Furthermore, we provide proofs on additive recovery for the univariate and multivariate fME and a proof on the relation between the individual conditional expectation (ICE) / partial dependence (PD) and the fME / forward AME.

\subsection{Structure of the Paper}

The paper is structured as follows: In Section \ref{sec:notation_background}, we introduce the notation and general background on predictive modeling and finite differences.
In Section \ref{sec:review}, we review existing methodology of MEs. We begin with a survey of related work in theoretical and applied statistics, IML, and sensitivity analysis. We review existing definitions of MEs, including categorical MEs, (numeric) derivatives and univariate forward differences, as well as variants and aggregations of MEs, i.e., the AME, MEM and MER. In Section \ref{sec:recommendations_shortcomings}, we provide recommendations for current practice and discuss shortcomings of existing methodology. We argue to abandon dMEs in favor of fMEs and discuss the selection of step sizes, including potential model extrapolations. In Section \ref{sec:novel_methodology}, we introduce a new class of fMEs, i.e., an observation-wise definition of the categorical ME, the multivariate fME, the NLM, the cME, and a framework to estimate the cME via cAMEs. In Section \ref{sec:comparison_lime}, we compare fMEs -- including our introduced novelties -- to the competing state-of-the-art method LIME. In Section \ref{sec:simulations}, we run multiple simulations with fMEs and the NLM. In Section \ref{sec:workflow_application}, we present our suggested application workflow and an applied example with the Boston housing data. The Appendix contains background information on additive decompositions of prediction functions and on model extrapolations, as well as several mathematical proofs.



\section{Notation and General Background}
\label{sec:notation_background}

This section introduces the notation and general background information on predictive modeling and finite differences. See Appendix \ref{app:background} for further background information on additive decompositions of prediction functions and on model extrapolations.

\subsection{Data and Predictive Model}
We consider a $p$-dimensional feature space $\mathcal{X} = \mathcal{X}_1 \times \dots \times \mathcal{X}_p$ and a target space $\mathcal{Y}$. The random variables on the feature space are denoted by $\boldX = (X_1, \dots, X_p)$\footnote{Vectors are denoted in bold letters.}. The random variable on the target space is denoted by $Y$. An undefined subspace of all features is denoted by $\mathcal{X}_{[\;]} \subseteq \mathcal{X}$. Correspondingly, $\boldX$ with a restricted sample space is denoted by $\boldX_{[\;]}$.
A realization of $\boldX$ and $Y$ is denoted by $\boldx = (x_1, \dots, x_p)$ and $y$. The probability distribution $\mathcal{P}$ is defined on the sample space $\mathcal{X} \times \mathcal{Y}$. A learning algorithm trains a predictive model $\widehat{f}: \mathbb{R}^p \mapsto \mathbb{R}$ on data drawn from $\mathcal{P}$, where $\widehat{f}(\boldx)$ denotes the model prediction based on the $p$-dimensional feature vector $\boldx$. To simplify our notation, we only consider one-dimensional predictions. However, the results on MEs can be generalized to multi-dimensional predictions, e.g., for multi-class classification.
We denote the value of the $j$-th feature in $\boldx$ by $x_j$. A set of features is denoted by $S \subseteq \{1, \dots, p\}$. The values of the feature set are denoted by $\boldxS$\footnote{As $\boldxS$ is the generalization of $x_j$ to vectors, we denote it in bold letters. However, it can in fact be a scalar. The same holds for $\boldxMinusS$ and $\boldxMinusj$.}. All complementary features are indexed by $-j$ or $-S$, so that $\boldxMinusj = \bm{x_{\{1, \, \dots \, , \, p\} \; \setminus \; \{j\}}}$, or $\mathbf{\boldxMinusS} = \bm{x_{\{1, \, \dots\, , \, p\} \; \setminus \; S}}$. An instance $\boldx$ can be partitioned so that $\boldx = (x_j, \boldxMinusj)$, or $\boldx = (\boldxS, \boldxMinusS)$. With slight abuse of notation, we may denote the vector $\boldxS$ by $(x_1, \dots, x_s)$ regardless of the elements of $S$, or the vector $(x_j, \boldxMinusj)$ by $(x_1, \dots, x_j, \dots, x_p)$ although $j \in \{1, \dots, p\}$.
The $i$-th observed feature vector is denoted by $\boldxi$ and corresponds to the target value $y^{(i)}$. 
We evaluate the prediction function with a set of training or test data $\mathcal{D} = \left\{\boldxi\right\}_{i = 1}^n$.

\subsection{Finite Differences and Derivatives}
A finite difference (FD) of the prediction function $\widehat{f}(\boldx)$ w.r.t. $x_j$ is defined as:
\begin{equation*}
FD_{j, \boldx, a, b} = \widehat{f}(x_1, \dots, x_j + a, \dots, x_p) - \widehat{f}(x_1, \dots, x_j + b, \dots, x_p)
\end{equation*}
The FD can be considered a movement on the prediction function (see Fig. \ref{fig:FD_movement}). There are three common variants of FDs: forward ($a = h$, $b = 0$), backward ($a = 0$, $b = -h$), and central differences ($a = h$, $b = -h$). In the following, we only consider forward differences with $b = 0$ where the FD is denoted without $b$. Dividing the FD by $(a-b)$ corresponds to the difference quotient:
\begin{equation}
\frac{FD_{j, \boldx, a, b}}{a - b} =\frac{\widehat{f}(x_1, \dots, x_j + a, \dots, x_p) - \widehat{f}(x_1, \dots, x_j + b, \dots, x_p)}{a - b}
\end{equation}
\begin{figure}
\centering
 \includegraphics[width=0.5\textwidth]{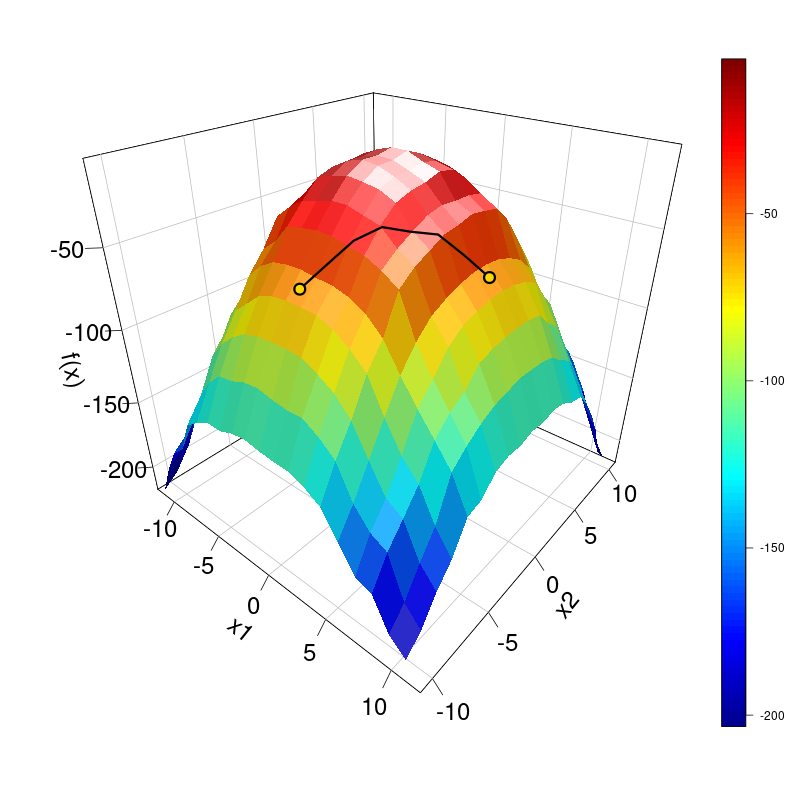}
\caption{\label{fig:FD_movement} The surface represents an exemplary prediction function dependent on two features. The FD can be considered a movement on the prediction function. We travel from point (0, -7) to point (7, 0).}
\end{figure}
\par
The derivative is defined as the limit of the forward difference quotient when $a = h$ approaches zero:
\begin{equation*}
\frac{\partial \widehat{f}(\boldX)}{\partial X_j}_{\big \vert \; \boldX = \boldx} = \lim\limits_{h \rightarrow 0} \frac{\widehat{f}(x_1, \dots, x_j + h, \dots, x_p) - \widehat{f}(\boldx)}{h}
\end{equation*}
We can numerically approximate the derivative with small values of $h$. We can use forward, backward, or symmetric FD quotients, which have varying error characteristics. As an example, consider a central FD quotient which is often used for dMEs \citep{leeper_margins}:
\begin{gather*}
\frac{\partial \widehat{f}(\boldX)}{\partial X_j}_{\big \vert \; \boldX = \boldx} \approx \frac{\widehat{f}(x_1, \dots, x_j + h, \dots, x_p) - \widehat{f}(x_1, \dots, x_j - h, \dots, x_p)}{2h} \quad , \quad h > 0
\end{gather*}

\section{Review of Existing Methodology}
\label{sec:review}
This section provides a review of existing methodology. We survey related work and review existing definitions of MEs, including categorical MEs, (numeric) derivatives and univariate forward differences, and variants and aggregations of MEs.

\subsection{Related Work}

\subsubsection{Statistics and Applied Fields}
MEs have been discussed extensively in the literature on statistics and statistical software, e.g., by \citet{ai_marginal_effects}, \citet{greene_econometrics}, \citet{norton_marginal_effects}, or \citet{mullahy_marginal_effects}.
The \texttt{margins} command is a part of Stata \citep{stata_manual} and was originally implemented by \citet{bartus_marginal_effects}. A brief description of the \texttt{margins} command is given by \citet{williams_margins}.
\citet{leeper_margins} provides an overview on dMEs and their variations as well as a port of \texttt{Stata's} functionality to \texttt{R}.
\citet{ramsey_me_neural_network} compute an ME for a single-hidden layer forward back-propagation artificial neural network by demonstrating its equivalent interpretation with a logistic regression model with a flexible index function. \citet{zhao_me_travel_application} apply model-agnostic dMEs to ML models in the context of analyzing travel behavior. Furthermore, they mention the unsuitability of derivatives for tree-based prediction functions such as random forests.
\par
\citet{mize_discrete_change} provide a test framework for cross-model differences of MEs. They refer to a marginal effect with a forward difference as a discrete change and to the corresponding AMEs as average discrete changes. 
\citet{gelman_predictive_comparisons} propose the predictive effect as a local feature effect measure. The predictive effect is a univariate forward difference, divided by the change in feature values (i.e., the step size). This differentiates it from the fME which we also define for multivariate feature changes and which is not divided by the step size, i.e., it provides a change in prediction as opposed to a rate of change. Furthermore, the authors propose an average predictive effect that corresponds to the average of multiple predictive effects that were measured at distinct feature values and model parameters. As such, it is a generalization of the AME that may be estimated with artificially created data points (as opposed to the sample at hand) and incorporates model comparisons (measured with different model parameters).

\subsubsection{Interpretable Machine Learning}

The most commonly used techniques to determine feature effects include the ICE \citep{goldstein_ice}, the PD \citep {friedman_pdp}, accumulated local effects (ALE) \citep{apley_ale}, Shapley values \citep{strumbelj_shapley}, Shapley additive explanations (SHAP) \citep{lundberg_shap}, local interpretable model-agnostic explanations (LIME) \citep{ribeiro_lime}, and counterfactual explanations \citep{wachter_counterfactuals}. Counterfactual explanations indicate the smallest necessary change in feature values to receive the desired prediction and represent the counterpart to MEs. \citet{goldstein_ice} propose derivative ICE (d-ICE) plots to detect interactions. The d-ICE is a univariate ICE where the numeric derivative w.r.t. the feature of interest is computed pointwise after a smoothing procedure. Symbolic derivatives are commonly used to determine the importance of features for neural networks \citep{ancona_gradients}.
While MEs provide interpretations in terms of prediction \textit{changes}, most methods provide an interpretation in terms of prediction \textit{levels}. LIME is an alternative option that returns interpretable parameters (i.e., rates of change in prediction) of a local surrogate model. LIME, and to a lesser extent SHAP, have been demonstrated to provide unreliable interpretations in some cases. For instance, LIME is strongly influenced by the chosen kernel width parameter \citep{slack_fooling_lime}. In Section \ref{sec:comparison_lime}, we compare our new class of fMEs to LIME.
Furthermore, many techniques in IML are interpreted visually (e.g., ICEs, the PD, ALE plots) and are therefore limited to feature value changes in at most two dimensions. MEs are not limited by the dimension of the intervention in feature values, as any change in feature values -- regardless of its dimensionality -- always results in a single ME.

\subsubsection{Sensitivity Analysis}

The goal of sensitivity analysis (SA) is to determine how uncertainty in the model output can be attributed to uncertainty in the model input, i.e., determining the importance of input variables \citep{saltelli_sa}. Techniques based on FDs are common in SA \citep{razavi_sa_review}.
The numeric derivative of the function to be evaluated w.r.t. an input variable serves as the natural definition of local importance in SA. The elementary effect (EE) was first introduced as part of the Morris method \citep{morris_method} as a screening tool for important inputs. The EE corresponds to a univariate forward difference quotient with variable step sizes, i.e., it is a generalization of the derivative. Variogram-based methods analyze forward differences computed at numerous pairs of points across the feature space \citep{razavi_variogram}.
Derivative-based global sensitivity measures (DGSM) provide a global feature importance metric by averaging derivatives at points obtained via random or quasi-random sampling.

\subsection{Background on Marginal Effects}

\subsubsection{Categorical Features}

MEs for categorical features are often computed as the change in prediction when the feature value changes from a reference category to another category \citep{williams_margins}. In other words, for each observation, the observed categorical feature value is set to the reference category, and we record the change in prediction when changing it to every other category. Given $k$ categories, this results in $k-1$ MEs for each observation. Consider a categorical feature $x_j$ with categories $C = \{c_1, \dots, c_k\}$. We denote the reference category by $c_r$. The categorical ME for an observation $\boldx$ and a single category $c_l$ corresponds to Eq. (\ref{eq:categorical_me}):
\begin{align}
\text{ME}_{j, \boldx, c_r, c_l} = \widehat{f}(c_l, \boldxMinusj) - \widehat{f}(c_r, \boldxMinusj) 
\label{eq:categorical_me}
\end{align}

\subsubsection{Derivatives for Continuous Features}

The most commonly used definition of MEs for continuous features corresponds to the derivative of the prediction function w.r.t. a feature. We will refer to this definition as the derivative ME (dME). In case of a linear prediction function, the interpretation of dMEs is simple: if the feature value increases by one unit, the prediction will increase by the dME estimate. Note that even the prediction function of a linear regression model can be non-linear if exponents of order $\geq 2$ are included in the feature term. Similarly, in GLMs, the linear predictor is transformed (e.g., log-transformed in Poisson regression or logit-transformed in logistic regression). Interpreting the dME is problematic for non-linear prediction functions, which we further explore in the following section.

\subsubsection{Forward Differences with Univariate Feature Value Changes}

A distinct and often overlooked definition of MEs corresponds to the change in prediction with adjusted feature values. Instead of computing the tangent at the point of interest and inspecting it at another feature value, this definition corresponds to putting the secant through the point of interest and the prediction with another feature value (see Fig. \ref{fig:ME_secant}). This definition of MEs is based on a forward difference instead of a symmetric difference. We refer to this variant as the forward ME (fME). Unlike the dME, this variant does not require dividing the FD by the interval width. Typically, one uses a one-unit-change ($h_j = 1$) in feature values \citep{mize_discrete_change}. 
\begin{align}
\text{fME}_{\boldx, h_j} &= \widehat{f}(x_j + h_j, \boldxMinusj) - \widehat{f}(\boldx)
\label{eq:univariate_fME}
\end{align}
The fME is illustrated in Fig. \ref{fig:ME_secant}. It corresponds to the change in prediction along the secant (green) through the point of interest (prediction at x = 0.5) and the prediction at the feature value we receive after the feature change (x = 1.5).
\begingroup
\renewcommand*{\arraystretch}{1.25}
\begin{table}
\caption{\label{tab:}\label{tab:boston_housing_observation}Observation used to compute the fME of the feature \texttt{rm} with a step size of 1, before (top) and after (bottom) the intervention in the Boston housing data.}
\centering
\resizebox{\linewidth}{!}{
\begin{tabular}[t]{|r|r|r|r|r|r|r|r|r|r|r|r|r|r|}
\hline
crim & zn & indus & chas & nox & \textbf{rm} & age & dis & rad & tax & ptratio & afram & lstat & medv\\
\hline
0.02498 & 0 & 1.89 & 0 & 0.518 & \textbf{6.54} & 59.7 & 6.2669 & 1 & 422 & 15.9 & 389.96 & 8.65 & 16.5\\
\hline
0.02498 & 0 & 1.89 & 0 & 0.518 & \textbf{7.54} & 59.7 & 6.2669 & 1 & 422 & 15.9 & 389.96 & 8.65 & 16.5\\
\hline
\end{tabular}}
\end{table}
\endgroup

Table \ref{tab:boston_housing_observation} gives an exemplary intervention in the Boston housing data to compute an fME. The goal is to determine the fME of increasing the average number of rooms per dwelling by one room on the median value of owner-occupied homes per Boston census tract.

\subsubsection{Variants and Aggregations of Marginal Effects}

There are three established variants or aggregations of univariate MEs: The AME, MEM, and MER \citep{williams_margins}, which can be computed for both dMEs and fMEs. In the following, we will use the notation of fMEs. Although we technically refer to the fAME, fMEM and fMER, we omit the \enquote{forward} prefix in this case for reasons of simplicity.
\begin{enumerate}[(i)]
\item Average marginal effects (AME): The AME represents an estimate of the expected ME w.r.t. the distribution of $\boldX$. We estimate it via Monte Carlo integration, i.e., we average the fMEs that were computed for each (randomly sampled) observation:
\begin{align*}
 \mathbb{E}_{\boldX}\left[\text{fME}_{\boldX, h_j} \right] &= \mathbb{E}_{\boldX}\left[\widehat{f}(X_j + h_j, \bm{X}_{-j}) - \widehat{f}(\boldX)\right] \\
\text{AME}_{\mathcal{D}, h_j} &= \frac{1}{n} \sum_{i = 1}^n \left[\widehat{f}\left(x_j^{(i)} + h_j, \boldxMinusji\right) - \widehat{f}\left(\boldxi\right) \right]
\end{align*}
\item Marginal effect at means (MEM): The MEM can be considered the reverse of the AME, i.e., it is the ME evaluated at the expectation of $\boldX$. We estimate the MEM by substituting all feature values with their sampling distribution means:
\begin{align*}
\text{fME}_{\left(\mathbb{E}_{X_j}(X_j) + h_j, \mathbb{E}_{\bm{X}_{-j}}(\bm{X}_{-j})\right), \;h_j} &= \widehat{f}\left(\mathbb{E}_{X_j}(X_j) + h_j, \mathbb{E}_{\bm{X}_{-j}}(\bm{X}_{-j})\right) - \widehat{f}\left(\mathbb{E}_{\boldX}(\boldX)\right) \\
\text{MEM}_{\mathcal{D}, h_j} &= \widehat{f}\left(\left(\frac{1}{n} \sum_{i = 1}^n x_j^{(i)}\right) + h_j, \frac{1}{n} \sum_{i = 1}^n \boldxMinusji \right) - \widehat{f}\left(\frac{1}{n} \sum_{i = 1}^n \boldxi \right)
\end{align*}
Note that averaging values is only sensible for continuous features. \citet{williams_margins} defines a categorical MEM where all remaining features $\boldxMinusj$ are set to their sample means (conditional on being continuous) and $x_j$ changes from each category to a reference category.
\item Marginal effects at representative values (MER): Furthermore, we can substitute specific feature values for all observations by manually specified values $\boldx^{*}$. It follows that the MEM is a special case of the MER where the specified values correspond to the sample means. MERs can be considered conditional MEs, i.e., we compute MEs while conditioning on certain feature values. The MER for a single observation with modified feature values $\boldx^{*}$ corresponds to:
\begin{align*}
 \text{MER}_{\boldx^{*}, h_j} &= \widehat{f}\left(x^{*}_j + h_j, \boldxMinusj^{*}\right) - \widehat{f}\left(\boldx^{*}\right)
\end{align*}
\end{enumerate}

\section{Recommendations for Current Practice and Shortcomings}
\label{sec:recommendations_shortcomings}

This section gives recommendations for current practice and discusses shortcomings of existing methods that form the basis for the modifications we present in the subsequent section.

\subsection{Forward Difference versus Derivative}

The most common way of using MEs in practice is through dMEs. In case of non-linear prediction functions, dMEs can lead to substantial misinterpretations (see Fig. \ref{fig:ME_secant}). The slope of the tangent (grey) at the point of interest (prediction at x = 0.5) corresponds to the dME. The default way to interpret the dME is to evaluate the tangent at the feature value we receive after a unit change (x = 1.5). This leads to substantial misinterpretations for non-linear prediction functions. In this case, there is an error (red) almost as large as the actual change in prediction. Although the computation of the dME does not require a step size, its interpretation does and is therefore error-prone. We recommend to use fMEs instead of the more popular dMEs. An fME always indicates an exact change in prediction for any prediction function and is therefore much more interpretable. For linear prediction functions, the interpretation of both variants is equivalent. MEs are often discarded in favor of differences in adjusted predictions (APs), i.e., predictions computed with different feature values \citep{williams_margins}. In the following section, we formulate a multivariate fME which bridges the gap between fMEs and differences in APs.

\subsection{Step Size and Extrapolation Risk}

The step size is determined both by the question that is being addressed and the scale of the feature \textit{at training time}. Consider a feature \texttt{weight} that is scaled in kilograms. If we want to evaluate the change in prediction if each observation's bodyweight increases by a kilogram, the appropriate step size is $1$. We could also assess the change in prediction if each observation's weight decreases by a kilogram by specifying a step size of $-1$. However, if \texttt{weight} is scaled in grams, the appropriate step sizes are $1000$ and $-1000$, respectively. Furthermore, one could think of dispersion-based measures to select step sizes. \citet{mize_discrete_change} suggest one standard deviation in observed feature values as an informative step size. Other options include, e.g., a percentage of the interquartile range or the mean / median absolute deviation.
\par
The step size cannot vary indefinitely without risking model extrapolations. This issue becomes especially prevalent when switching from dMEs (with instantaneous changes in feature values) to fMEs (with variable step sizes of arbitrary magnitude). Furthermore, when using non-training data (or training data in low-density regions) to compute fMEs, we are at risk of the model extrapolating without actually traversing the feature space. 
If the points $\boldx$ or $(x_j + h_j, \boldxMinusj)$ are classified as extrapolation points (EPs), the fME should be interpreted with caution. If the point $\boldx$ is located inside an extrapolation area (e.g., if it is located outside of the multivariate envelope of the training data), we suggest to exclude $\boldx$ from the data used to compute fMEs. Given the point $\boldx$ is not an EP, but $(x_j + h_j, \boldxMinusj)$ is, the step size can be modified until $(x_j + h_j, \boldxMinusj)$ is located inside a non-extrapolation area. It is debatable how to classify points as EPs. In Appendix \ref{app:extrapolation}, we describe several options for extrapolation detection. In Section \ref{sec:novel_methodology}, we extend the univariate definition of fMEs to multivariate steps. The same holds for multivariate fMEs, i.e., one can modify the marginal step widths independently until the model does not extrapolate anymore.
\par
Fig. \ref{fig:uniform_extrapolation} demonstrates the pitfall of model extrapolations when using fMEs. We draw points of a single feature $x$ from a uniform distribution on the interval $[-5, 5]$ with $y = x^2 + \epsilon$, where $\epsilon \sim N(0, 1)$. A random forest is trained to predict $y$. All points $x \not \in [-5, 5]$ fall outside the range of the training data and are classified as EPs. We compute fMEs with a step size of 1. By implication, all fMEs with $x > 4$ are based on model extrapolations. The resulting fMEs in the extrapolation area behave considerably different from the fMEs in the non-extrapolation area and should not be used for interpretation purposes, as they convey an incorrect impression of the feature effect of $x$ on $y$.
\begin{figure}
\centering
 \includegraphics[width=0.49\linewidth]{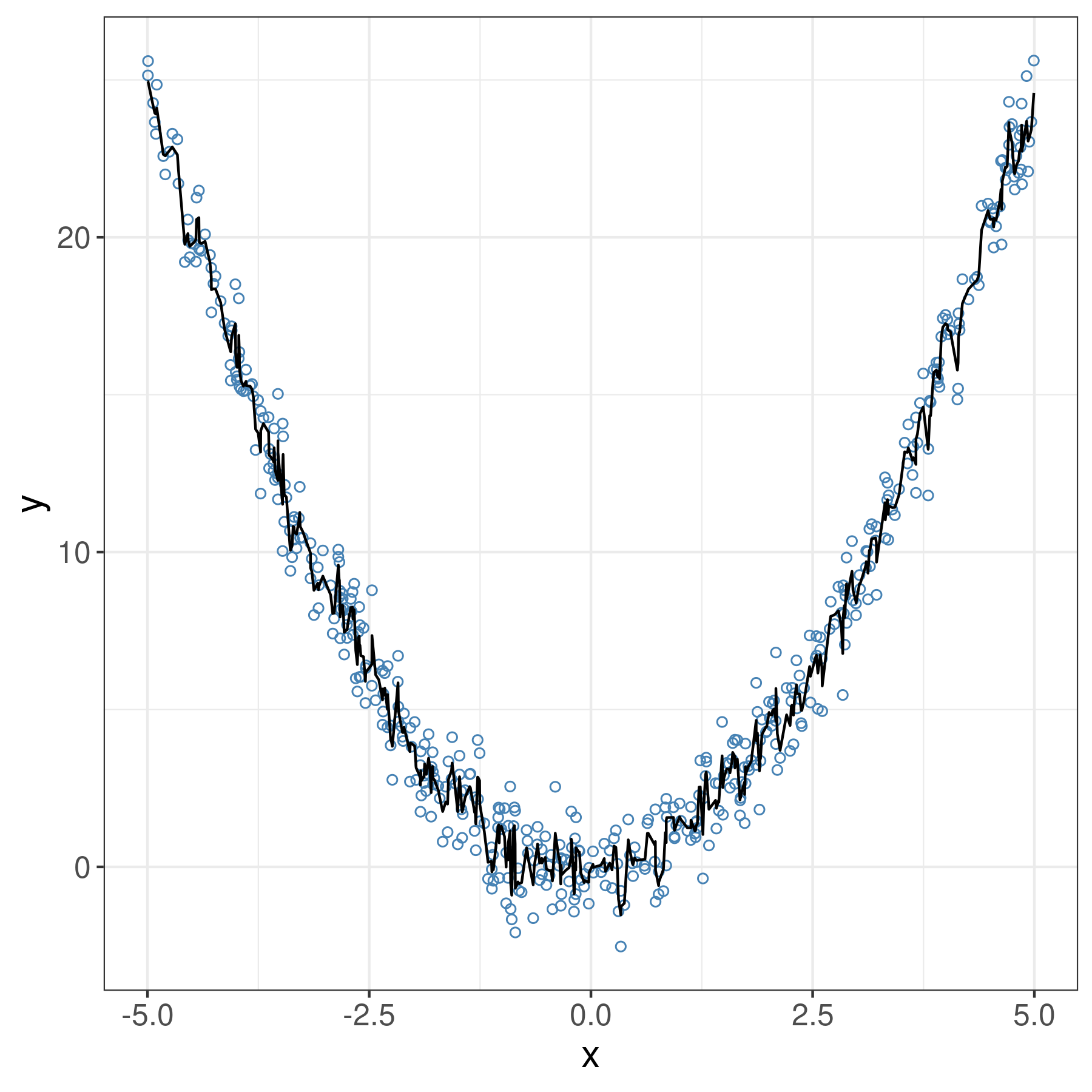}
 \includegraphics[width=0.49\linewidth]{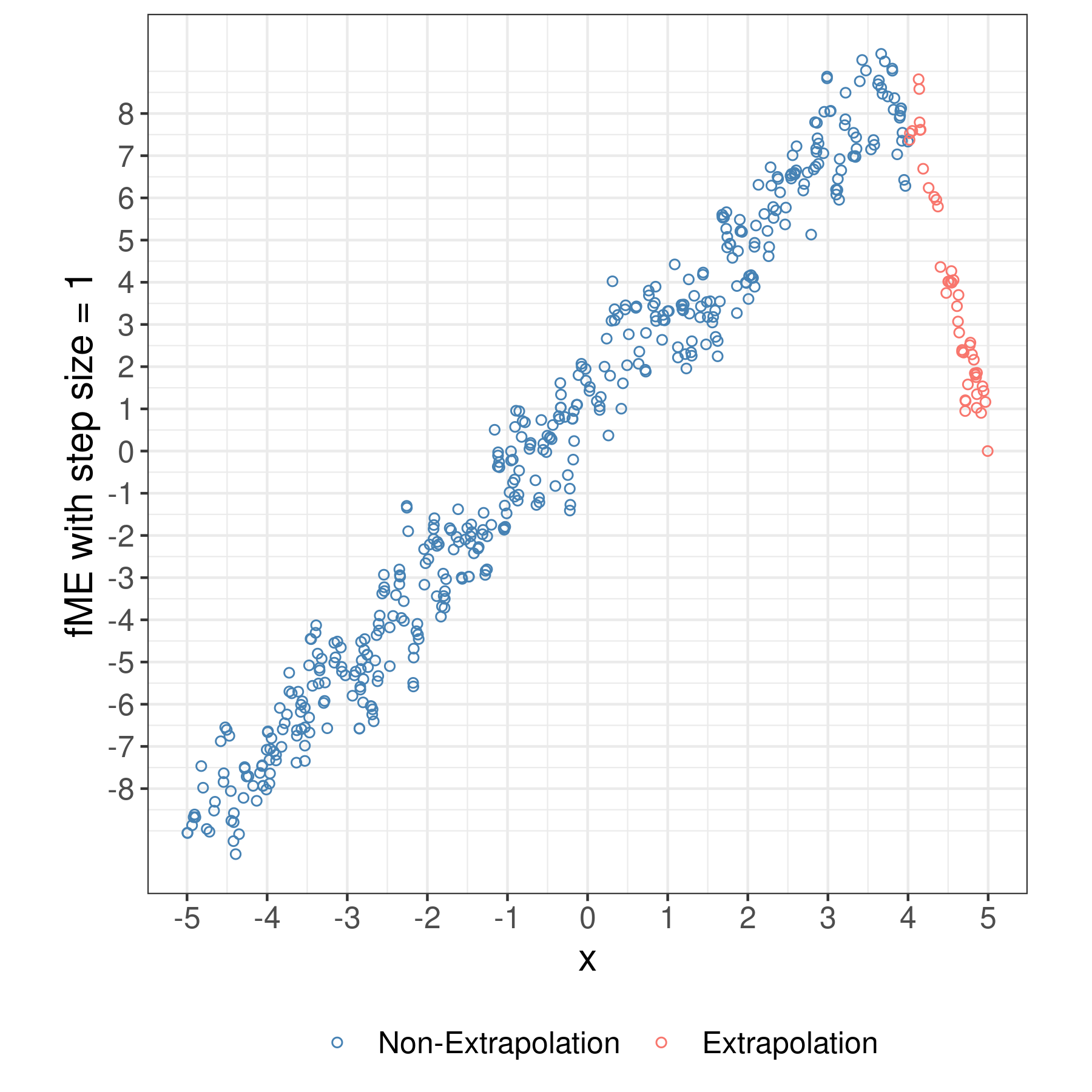}
\caption{\label{fig:uniform_extrapolation}
\textbf{Left}: A random forest is trained on a single feature $x$ with a quadratic effect on the target. The training space corresponds to the interval $[-5, 5]$. \textbf{Right}: We compute an fME with a step size of 1 for each observation. After moving 1 unit in $x$ direction, points with $x > 4$ are considered EPs (red). The random forest extrapolates and predicts unreliably in this area of the feature space. The resulting fMEs are irregular and should not be used for interpretation purposes.}
\end{figure}

\begin{figure}
\centering
 \includegraphics[width=0.5\textwidth]{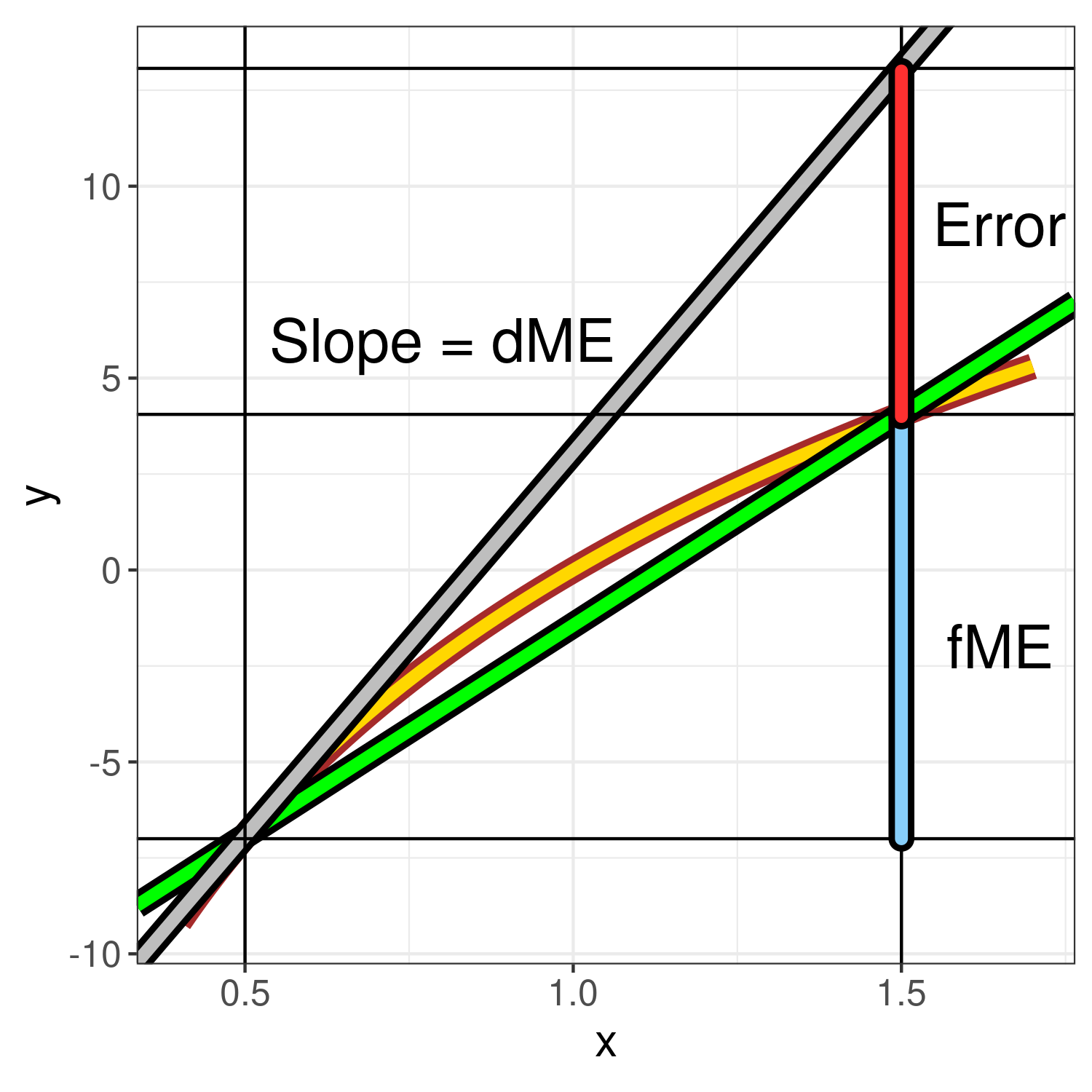}
\caption{The prediction function is yellow colored. The dME is given by the slope of the tangent (grey) at the point of interest (x = 0.5). The interpretation of the dME corresponds to the evaluation of the tangent value at x = 1.5, which is subject to an error (red) almost as large as the actual change in prediction. The fME equals the change in prediction along the secant (green) through the prediction at x = 0.5 and at x = 1.5.
\label{fig:ME_secant}}
\end{figure}

\subsection{Marginal Effects for Tree-Based Prediction Functions}

dMEs are not suited for interpreting piecewise constant prediction functions, e.g., classification and regression trees (CART) or tree ensembles such as random forests or gradient boosted trees. Generally, most observations are located on piecewise constant parts of the prediction function where the derivative equals zero. fMEs provide two advantages when interpreting tree-based prediction functions. First, a large enough step size will often involve traversing a jump discontinuity (a tree split in the case of CART) on the prediction function (see Fig. \ref{fig:tree_based_me}), so the fME does not equal zero. Second, when aggregating fME estimates, measures of spread such as the variance can indicate what fraction of fMEs traversed a jump discontinuity and what fraction did not. Multivariate fMEs (which we suggest in Section \ref{sec:novel_methodology}) further strengthen our argument, as traversing the prediction surface along multiple features has a higher chance of traversing jump discontinuities.

\begin{figure}
\centering
 \includegraphics[width=0.5\textwidth]{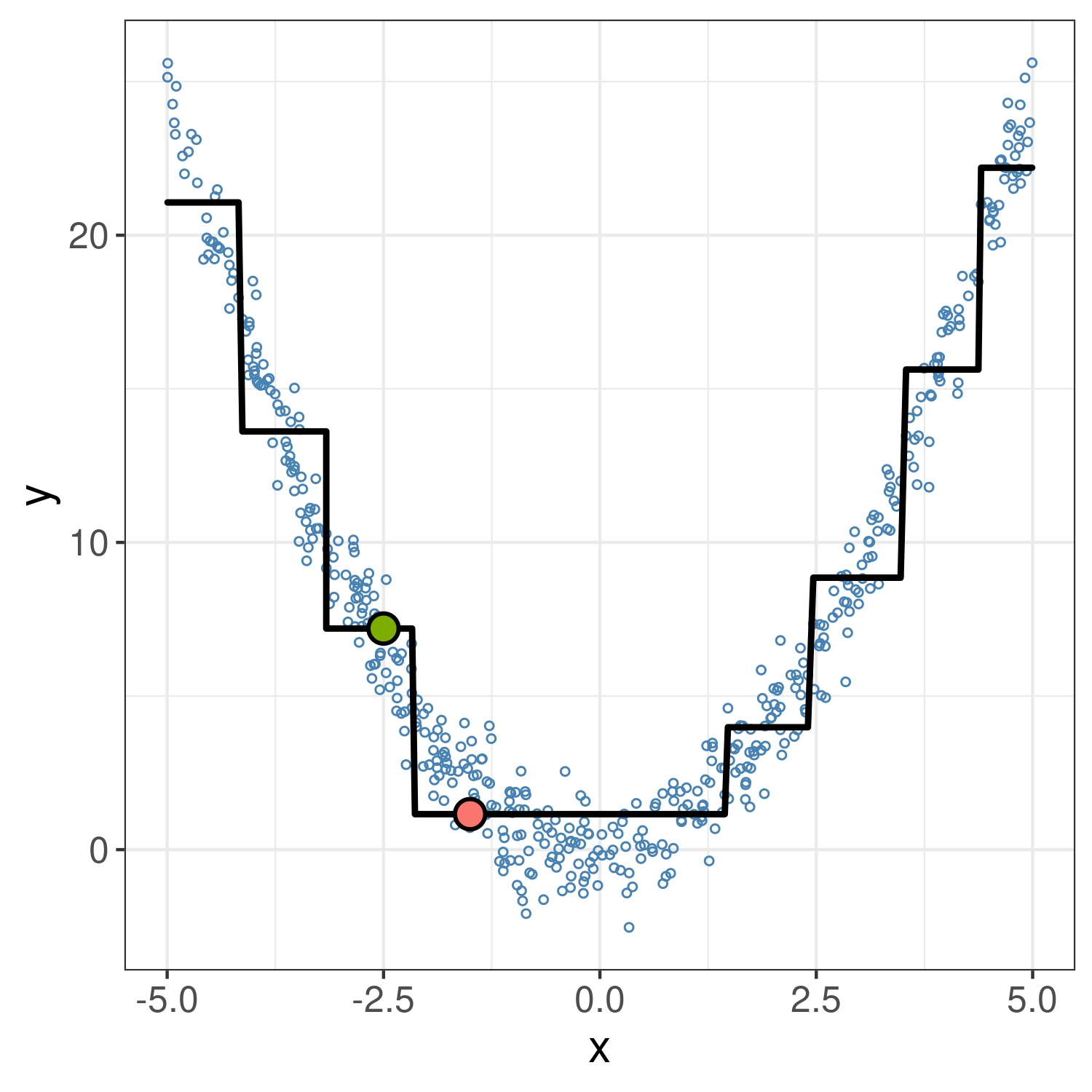}
\caption{\label{fig:tree_based_me}Consider a quadratic relationship between the target $y$ and a single feature $x$. A decision tree fits a piecewise constant prediction function (black line) to the training data (blue points). The dME at the point $x = -2.5$ (green dot) is zero, while the fME with $h = 1$ traverses the jump discontinuity and reaches the point $x = -1.5$ (red dot).}
\end{figure}

\subsection{Shortcomings of Existing Methodology}

First, in many cases, one wants to interpret the model based on changes in multiple features. The effect of multiple features may consist of univariate effects, as well as on interactions between features of various orders (see Appendix \ref{app:decomposition} for an introduction to decompositions of prediction functions). However, MEs are only capable of computing univariate feature effects. This inability to assess the combined effects of multiple features creates a barrier for their adoption. Second, the existing definition of categorical MEs in Eq. (\ref{eq:categorical_me}) is counter-intuitive and especially prone to extrapolations, as it replaces observed values on both sides of the forward difference with artificially constructed ones. Third, although usage of the fME results in an exact change in prediction, we lose information about the shape of the prediction function along the forward difference (see Fig. \ref{fig:ME_secant}). One therefore risks misinterpreting the fME as a linear effect. Fourth, the common practice to aggregate MEs to an AME, which acts as an estimate of the global feature effect, is misleading for non-linear prediction functions. Averaging may even result in canceling out individual effects with opposite signs, falsely suggesting that there is no influence of the feature on the target. We address these shortcomings in the following section.

\section{A New Class of Forward Marginal Effects}
\label{sec:novel_methodology}

This section introduces a new class of fMEs. First, we address the inadequate conception of categorical MEs by introducing a new, intuitive definition. Moreover, we extend univariate fMEs to multivariate changes in feature values. Next, we introduce the NLM for fMEs. The concept of estimating the expected ME through the AME is extended to a semi-global cME, which is conditional on a feature subspace. Lastly, we present ways to estimate cMEs through partitioning the feature space.

\subsection{Observation-Wise Categorical Marginal Effect}

Recall that the common definition of categorical MEs is based on first changing all observations' value of $x_j$ to each category and then computing the difference in predictions when changing it to the reference category. However, one is often interested in prediction changes if aspects of an actual observation change. We therefore propose an observation-wise categorical ME. We first select a single reference category $c_h$. For each observation whose feature value $x_j \neq c_h$, we predict once with the observed value $x_j$ and once where $x_j$ has been replaced by $c_h$:
\begin{equation}
\text{ME}_{\boldx, c_h} = \widehat{f}(c_h, \boldxMinusj) - \widehat{f}(\boldx) \quad , \quad x_j \neq c_h
\label{eq:new_categorical_me}
\end{equation}
This definition of categorical MEs is in line with the definition of fMEs, as we receive a single ME for a single observation with the observed feature value as the reference point. For $k$ categories, we receive $k-1$ sets of observation-wise categorical MEs. Keep in mind that changing categorical values also creates the risk of model extrapolations. However, observation-wise categorical MEs reduce the potential for extrapolations, as only one prediction of the FD is based on manipulated feature values.
\par
As an example, consider the Boston housing data (see Table \ref{tab:boston_housing_data}). We could be interested in the effect of changing the categorical feature \texttt{chas} (indicating whether a district borders the Charles river in Boston) on a district's median housing value in US dollars. For a district that does not border the river (\texttt{chas} = 0), the observation-wise categorical ME would investigate the effect of it bordering the river and vice versa.
\par
Observation-wise categorical MEs are also suited for computing a categorical AME. Note that we have a smaller number of categorical MEs, depending on the marginal distribution of $x_j$, which may affect the variance of the mean. However, observation-wise categorical MEs are not suited for MEMs, as we lose track of the observed values when averaging $\boldxMinusS$. The same partially holds for MERs, i.e., their computation is possible, but the MER obfuscates the interpretation we wish to have for observation-wise categorical MEs.

\subsection{Forward Differences with Multivariate Feature Value Changes}
\label{sec:multivariate_fme}

We can extend the univariate fME to an arbitrary set of features, which allows us to explore the surface of the prediction function in various directions simultaneously. Consider a multivariate change in feature values of a feature subset $S = \{1, \; \dots \; , s\}$ with $S \subseteq P$. The fME corresponds to Eq. (\ref{eq:multivariate_fME}). We denote the multivariate change $(x_1 + h_1, \; \dots \; , x_s + h_s)$ by $(\boldxS + \boldhS)$:
\begin{align}
\text{fME}_{\boldx, \boldhS} &= \widehat{f}(x_1 + h_1, \; \dots \; , x_s + h_s, \boldxMinusS) - \widehat{f}(x_1, \; \dots \; , x_s, \boldxMinusS)
\label{eq:multivariate_fME} \\
&= \widehat{f}(\boldxS + \boldhS, \boldxMinusS) - \widehat{f}(\boldx)\nonumber
\end{align}
It is straightforward to extend the univariate definition of the AME, MEM, and MER to multivariate feature value changes.
\par
One can demonstrate that the multivariate fME is equivalent to a difference in APs, which is used as an alternative to MEs. Consider two observations $\boldx$ and $\boldx^* = (x_1^*, \dots, x_p^*)$:
\begin{align*}
(x_1^* - x_1, \dots, x_p^* - x_p) &= (h_1, \dots, h_p) \\
\Leftrightarrow (x_1 + h_1, \dots, x_p + h_p) &= (x_1^*, \dots, x_p^*) \\
\end{align*}
It follows:
\begin{align*}
    \widehat{f}(\boldx^*) - \widehat{f}(\boldx) &= \widehat{f}(x_1 + h_1, \dots, x_p + h_p) - \widehat{f}(\boldx) \\
    &= \text{fME}_{\boldx, (h_1, \dots, h_p)}
\end{align*}
The multivariate fME bridges the gap between MEs and differences in APs, thus making a distinction obsolete.
\par
A technique with the additive recovery property only \textit{recovers} terms of the prediction function that depend on the feature(s) of interest $\boldxS$ or consist of interactions between the feature(s) of interest and other features, i.e., the method recovers no terms that exclusively depend on the remaining features $\boldxMinusS$ \citep{apley_ale}. In Appendix \ref{app:additiverecovery}, we derive the additive recovery property for fMEs.

\subsection{Relation between Forward Marginal Effects, the Individual Conditional Expectation, and Partial Dependence}

Given a data point $\boldx$, the ICE of a feature set $S$ corresponds to the prediction as a function of replaced values $\boldxS^{*}$ where $\boldxMinusS$ is kept constant:
\begin{equation*}
\text{ICE}_{\boldx, S}(\boldxS^{*}) = \widehat{f}(\boldxS^{*}, \boldxMinusS)
\end{equation*}
The PD on a feature set $S$ corresponds to the expectation of $\widehat{f}(\boldX)$ w.r.t. the marginal distribution of $\bm{X_{-S}}$. It is estimated via Monte Carlo integration where the draws $\boldxMinusS$ correspond to the sample values:
\begin{equation*}
\text{PD}_{\mathcal{D}, S}(\boldxS) = \frac{1}{n} \sum_{i = 1}^n \widehat{f}\left(\boldxS, \boldxMinusS^{(i)}\right) \\
\end{equation*}
We can visually demonstrate that in the univariate case, the fME is equivalent to the vertical difference between two points on an ICE curve. However, the AME is only equivalent to the vertical difference between two points on the PD curve for linear prediction functions (see Fig. \ref{fig:me_ice_relation}). We generalize this result to the multivariate fME and ICE , as well as the multivariate forward AME and PD (see Theorem \ref{theorem:equivalence_fme_ice} and Theorem \ref{theorem:equivalence_ame_pd} in Appendix \ref{app:proofs_fme_ice_pd}).
Visually assessing changes in prediction due to a change in feature values is difficult to impossible in more than two dimensions. High-dimensional feature value changes therefore pose a natural advantage for fMEs as opposed to techniques such as the ICE, PD, or ALEs, which are mainly interpreted visually.
\begin{figure}
\centering
 \includegraphics[width=0.5\textwidth]{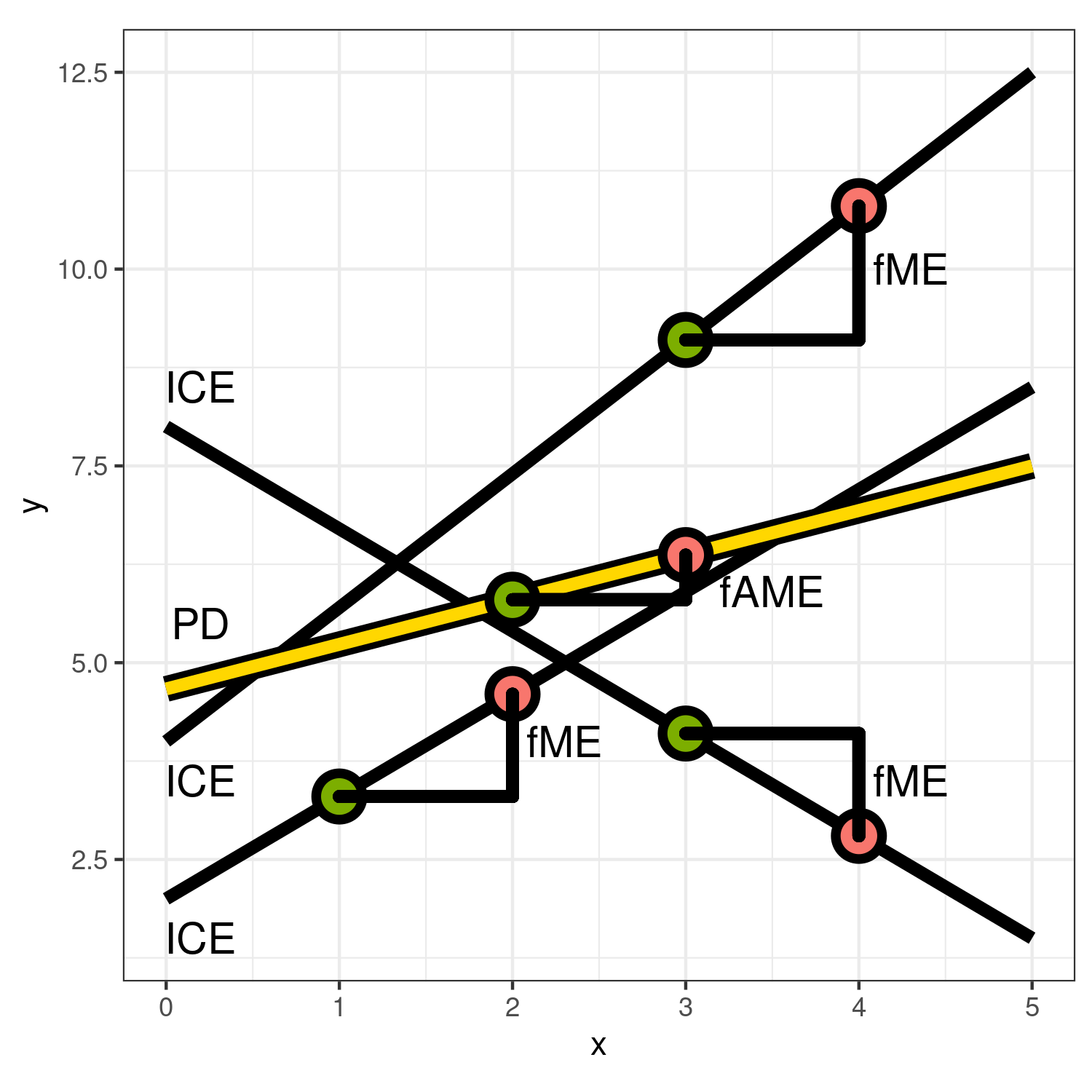}
\caption{\label{fig:me_ice_relation} Three ICE curves are black colored. The PD is the average of all ICE curves and is yellow colored. For each ICE curve, we have a single observation, visualized by the corresponding green dot. We compute the fME at each observation with a step size of 1, which results in the corresponding red dot. The fMEs are equivalent to the vertical difference between two points on the ICE curves. If the prediction function is linear in the feature of interest, the average of all fMEs is equivalent to the vertical difference between two points on the PD.
}
\end{figure}

\subsection{Non-Linearity Measure}
\label{sec:nlm}

Although an fME represents the exact change in prediction and always accurately describes the movement on the prediction function, we lose information about the function's shape along the forward difference. It follows that when interpreting fMEs, we are at risk of misjudging the shape of the prediction function as a piecewise linear function. However, prediction functions created by ML algorithms are not only non-linear but also differ considerably in shape across the feature space. We suggest to augment the change in prediction with an NLM that quantifies the deviation between the prediction function and a linear reference function. First, the fME tells us the change in prediction for pre-specified changes in feature values. Then, the NLM tells us how accurately a linear effect resembles the change in prediction. 
\par
Given only univariate changes in feature values, we may visually assess the non-linearity of the feature effect with an ICE curve. However, the NLM quantifies the non-linearity in a single metric, which can be utilized in an informative summary output of the prediction function. In Section \ref{sec:cAME}, we estimate feature effects conditional on specific feature subspaces. The individual NLM values can be used to more accurately describe the feature subspace. Given changes in more than two features, visual interpretation techniques such as the ICE and PD are not applicable. As opposed to this, the NLM is defined in arbitrary dimensions.

\subsubsection{Linear Reference Function}

A natural choice for the linear reference function is the secant intersecting both points of the forward difference (see Fig. \ref{fig:deviation_measure}).
\begin{figure}
\centering
\includegraphics[width=0.49\linewidth]{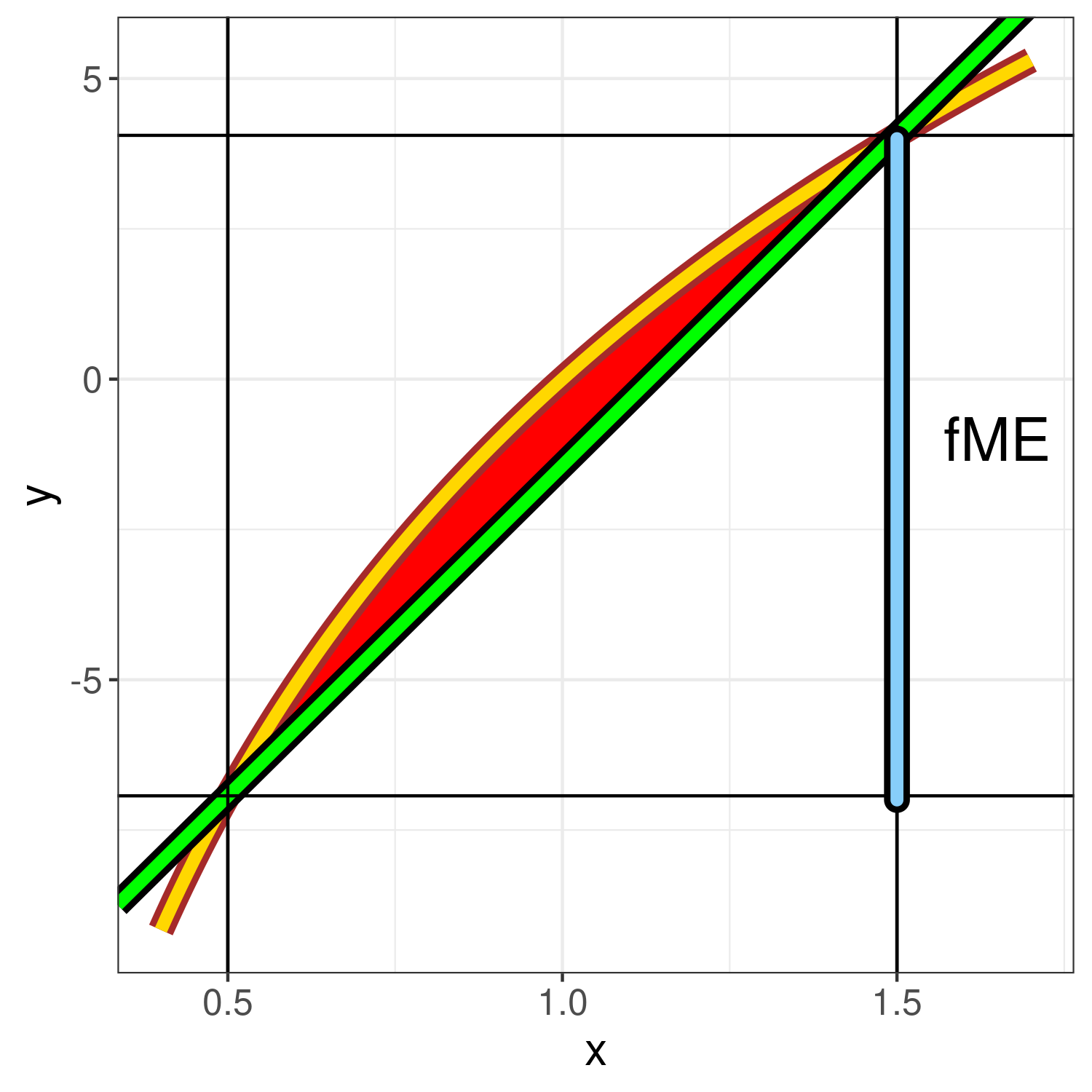}
\caption{We wish to determine the area between the prediction function and the secant. \label{fig:deviation_measure}
}
\end{figure}
Using the multivariate secant as the linear reference is based on the assumption that a multivariate change in feature values is a result from equally proportional changes in all features. Although this assumption is somewhat restrictive, it allows for a great simplification of the non-linearity assessment, as the secant is unique and easy to evaluate. The secant for a multivariate fME corresponds to Eq. (\ref{eq:geodesic}):
\begin{align}
&\phantom{{}={}} g_{\boldx, \boldhS}(t) = \begin{pmatrix} x_1 + t \cdot h_1 \\ \vdots \\ x_s + t \cdot h_s \\ \vdots \\ x_p \\ \widehat{f}(\boldx) + t \, \cdot \, \text{fME}_{\boldx, \boldhS} \end{pmatrix} \label{eq:geodesic}
\end{align}
Fig. \ref{fig:deviation_measure_multivariate} visualizes the discrepancy between the prediction function and the secant along a two-dimensional fME. If the NLM indicates linearity, we can infer that if \textit{all} individual feature changes are multiplied by a scalar $t \in [0, 1]$, the fME would change by $t$ as well.

\begin{figure}
\centering
 \includegraphics[width=0.49\textwidth]{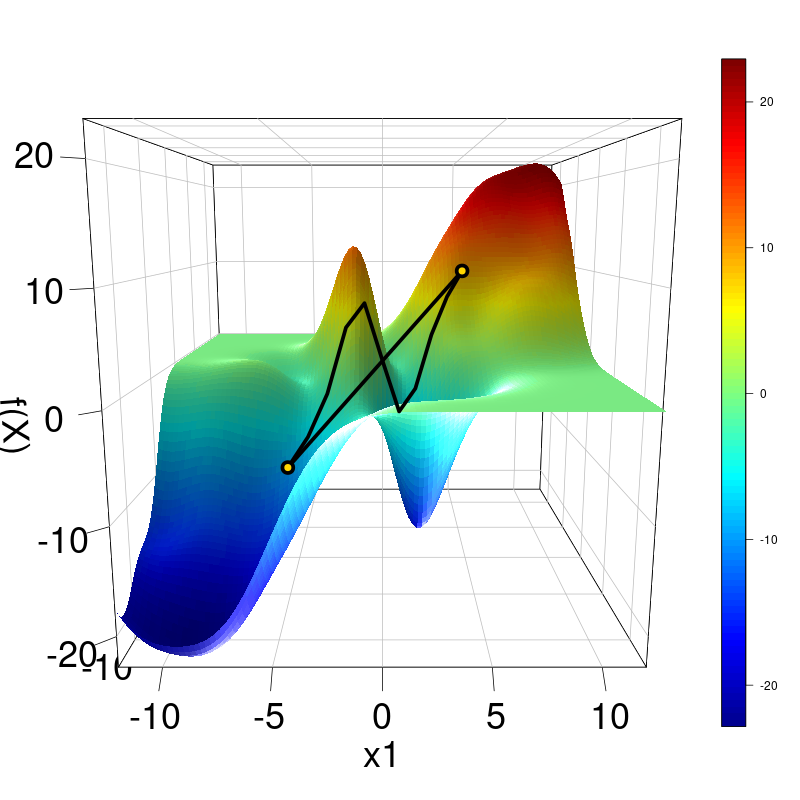}
 \includegraphics[width=0.49\textwidth]{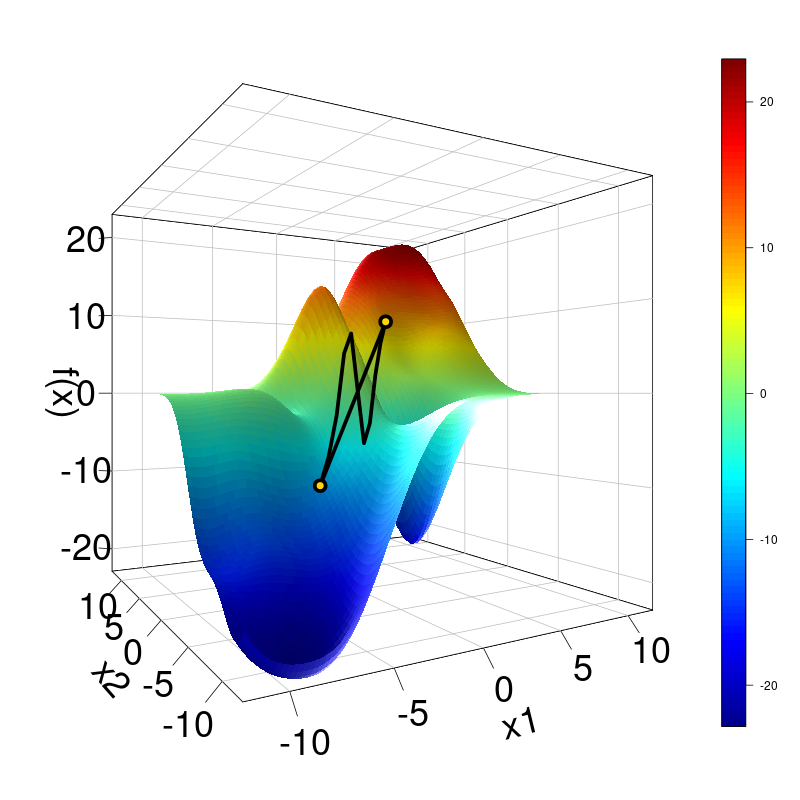}
\caption{\label{fig:deviation_measure_multivariate} A non-linear prediction function, the path along its surface, and the corresponding secant along a two-dimensional fME from point (-5, -5) to point (5, 5).}
\end{figure}

\subsubsection{Definition and Interpretation}

Comparing the prediction function against the linear reference function along the fME is a general concept that can be implemented in various ways. One requires a normalized metric that indicates the degree of similarity between functions or sets of points.
The Hausdorff \citep{belogay_hausdorff_distance} and Fréchet \citep{alt_frechet} distances are established metrics that originate in geometry. Another option is to integrate the absolute or squared deviation between both functions. These approaches have the common disadvantage of not being normalized, i.e., the degree of non-linearity is scale-dependent.
\par
\citet{molnar_complexity} compare non-linear function segments against linear models via the coefficient of determination $R^2$. In this case, $R^2$ indicates how well the linear reference function is able to explain the non-linear prediction function compared to the most uninformative baseline model, i.e., one that always predicts the prediction function through its mean value. As we do not have observed data points along the forward difference, points would need to be obtained through (Quasi-)Monte-Carlo sampling, whose error rates heavily depend on the number of sampled points. 
As both the fME and the linear reference function are evaluated along the same single path across the feature space, their deviation can be formulated as a line integral. Hence, we are able to extend the concept of $R^2$ to continuous integrals, comparing the integral of the squared deviation between the prediction function and the secant, and the integral of the squared deviation between the prediction function and its mean value. The line integral is univariate and can be numerically approximated with various techniques such as Gaussian quadrature.
\par
The parametrization of the path through the feature space is given by $\gamma: [0, 1] \mapsto \mathcal{X}$, where $\gamma(0) = \boldx$ and $\gamma(1)= (\boldxS + \boldhS, \boldxMinusS)$. The line integral of the squared deviation between prediction function and secant along the forward difference corresponds to:
\begin{align*}
     \text{(I)} = \int_0^1 \left(\widehat{f}(\gamma(t)) - g_{\boldx, \boldhS}(\gamma(t))\right)^2 \; \Big\vert \Big\vert \frac{\partial \gamma(t)}{\partial t}\Big\vert \Big\vert_2 \;dt
\end{align*}
with 
\begin{align*}
    \gamma(t) = \begin{pmatrix} x_1 \\ \vdots \\ x_p \end{pmatrix} + t \cdot \begin{pmatrix} h_1 \\ \vdots \\ h_s \\ 0 \\ \vdots \\ 0  \end{pmatrix}
    \quad  , \quad t \in [0, 1] 
\end{align*}
and
\begin{align*}
    \Big\vert \Big\vert \frac{\partial \gamma(t)}{\partial t} \; \Big\vert \Big\vert_2 &= \sqrt{h_1^2 + \dots + h_s^2}
\end{align*}
The integral of the squared deviation between the prediction function and the mean prediction is used as a baseline. The mean prediction is given by the integral of the prediction function along the forward difference, divided by the length of the path:
\begin{align*}
    \overline{\widehat{f}} &= \frac{\int_0^1 \widehat{f}(\gamma(t)) \; \Big\vert \Big\vert \frac{\partial \gamma(t)}{\partial t}\Big\vert \Big\vert_2 \;dt}{\int_0^1 \Big\vert \Big\vert \frac{\partial \gamma(t)}{\partial t} \Big\vert \Big\vert_2 \; dt}  \\
    &= \int_0^1 \widehat{f}(\gamma(t)) \;dt
\end{align*}

\begin{align*}
    \text{(II)} = \int_0^1 \widehat{f}\left(\gamma(t)) - \overline{\widehat{f}}\right)^2 \; \Big\vert \Big\vert \frac{\partial \gamma(t)}{\partial t}\Big\vert \Big\vert_2 \;dt
\end{align*}
The $\text{NLM}_{\boldx, \boldhS}$ is defined as:
$$
\text{NLM}_{\boldx, \boldhS} = 1 - \frac{\text{(I)}}{\text{(II)}}
$$

\par
The values of the NLM have an upper limit of 1 and indicate how well the secant can explain the prediction function, compared to the baseline model of using the mean prediction. For a value of 1, the prediction function is equivalent to the secant. A lower value indicates an increasing non-linearity of the prediction function. For negative values, the mean prediction better predicts values on the prediction function than the secant, i.e., the prediction function is highly non-linear. We suggest to use 0 as a hard bound to indicate non-linearity and values on the interval $]0, 1[$ as an optional soft bound.
\par
Instead of conditioning on pre-specified feature changes, we may also reverse the procedure and optimize the NLM by adjusting the step sizes. For instance, given a patient with certain health characteristics, we could search for the maximum step sizes in \texttt{age} and \texttt{weight} that correspond to an approximately linear fME on the disease risk, e.g., with an NLM $\geq 0.9$.

\subsection{Conditional Forward Marginal Effects on Feature Subspaces}
\label{sec:cAME}

It is desirable to summarize the feature effect in a single metric, similarly to the parameter-focused interpretation of linear models. For instance, one is often interested in the expected ME, which can be estimated via the AME. However, averaging heterogeneous MEs to the AME is not globally representative of non-linear prediction functions such as the ones created by ML algorithms. A heterogeneous distribution of MEs requires a more local evaluation. As opposed to conditioning on feature values in the case of MERs (local), we further suggest to condition on specific feature subspaces (semi-global).
We can partition the feature space into mutually exclusive subspaces (see Fig. \ref{fig:hyperrectangles}) where MEs are more homogeneous. In such a feature subspace, we can average the fMEs to a conditional AME (cAME), which is an estimate of the conditional ME (cME). The feature space partitioning, including the cAMEs, serves as a compact semi-global summary of the prediction function or as a descriptor of population subgroups. Subgroup analysis is of interest in a multitude of application contexts \citep{atzmueller_subgroups}. The cME and cAME (conditional on the feature subspace $\mathcal{X}_{[\;]}$) correspond to:
\begin{align*}
\text{cME}_{\boldX_{[\;]}, \boldhS} &= \mathbb{E}_{\boldX_{[\;]}} \left[\text{fME}_{\boldX_{[\;]}, \boldhS} \right] \\
\text{cAME}_{\mathcal{D}_{[\;]}, \boldhS} &= \frac{1}{n} \sum_{i = 1}^n \left(\widehat{f}\left(\boldxS^{(i)} + \boldhS, \boldxMinusS^{(i)}\right) - \widehat{f}\left(\boldxi\right) \right) \quad \forall i: \boldxi \in \mathcal{X}_{[\;]}\\
\end{align*}

\begin{figure}
\centering
 \includegraphics[width=0.4\linewidth]{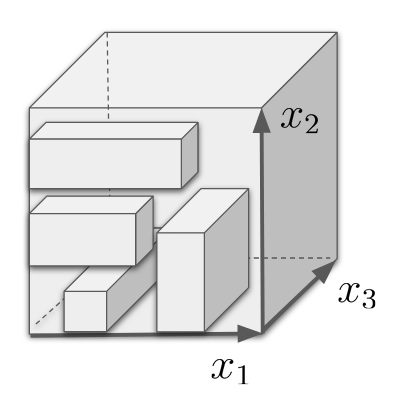}
\caption{An exemplary three-dimensional feature space is partitioned into four hyperrectangles of different sizes. Our goal is to partition the entire feature space into hyperrectangles such that the variance of MEs on each subspace is minimized.\label{fig:hyperrectangles}}
\end{figure}

\subsubsection{Partitioning the Feature Space}

Decision tree learning is an ideal scheme to partition the entire feature space into mutually exclusive subspaces. Growing a tree by global optimization poses considerable computational difficulties and corresponds to an NP-complete problem \citep{norouzi_nongreedy_tree}. Recent developments in computer science and engineering can be explored to revisit global decision tree optimization from a different perspective, e.g., \citet{bertsimas_global_trees} explore mixed-integer optimization (which has seen speedups in the billion factor range over the last decades) to find globally optimal decision trees. To reduce computational complexity, the established way (which is also commonly available through many software implementations) is through recursive partitioning (RP), optimizing an objective function in a greedy way for each tree node.
\par
Over the last decades, a large variety of RP methods has been proposed \citep{loh_trees_review}, with no gold standard having crystallized to date. In principle, any RP method that is able to process continuous targets can be used to find cAMEs, e.g., classification and regression trees (CART) \citep{breiman_cart, hastie_elemstatlearn}, which is one of the most popular approaches. Trees have been demonstrated to be notoriously unstable w.r.t. perturbations in input data \citep{zhou_approximation_trees, mark_tree_instability}. Tree ensembles, such as random forests \citep{breiman_randomforests}, reduce variance but lose interpretability as a single tree structure. Exchanging splits along a single path results in structurally different but logically equivalent trees \citep{turney_tree_stability}. 
It follows that two structurally very distinct trees can create the same or similar subspaces. We are therefore not interested in the variance of the tree itself, but in the subspaces it creates.
\par
Subspace instabilities affect the interpretation, e.g., when trying to find population subgroups. One should therefore strive to stabilize the found subspaces, e.g., by stabilizing the splits.
A branch of RP methods incorporates statistical theory into the split procedure. Variants include conditional inference trees (CTREE) \citep{hothorn_ctree}, which use a permutation test to find statistically significant splits; model-based recursive partitioning (MOB) \citep{zeileis_mob}, which fits node models and tests the instability of the model parameters w.r.t. partitioning the data; or approximation trees \citep{zhou_approximation_trees}, which generate artificially created samples for significance testing of tree splits. \citet{seibold_subgroups} use MOB to find patient subgroups with similar treatment effects in a medical context.
The variance and instability of decision trees partly stems from binary splits, as a decision higher up cascades through the entire tree and results in different splits lower down the tree \citep{hastie_elemstatlearn}. Using multiway trees, which also partition the entire feature space, would therefore improve stability. However, multiway splits are associated with a considerable increase in computational complexity and are therefore often discarded in favor of binary splitting \citep{zeileis_mob}. For the remainder of the paper, we use CTREE to compute cAMEs.

\subsubsection{Interpretation and Confidence Intervals}

As the SSE, and therefore also the variance and standard deviation (SD), of the fMEs is highly scale-dependent, the regression tree can be interpreted via the coefficient of variation (CoV). The CoV, also referred to as the relative SD, corresponds to the SD divided by the (absolute) mean and is scale-invariant. In our case, it is the SD of the fMEs divided by the absolute cAME inside each subspace. We strive for the lowest value possible, which corresponds to the highest homogeneity of fMEs.
Given a cME estimate, it is desirable to estimate the conditional NLM (cNLM) on the corresponding subspace. It is trivial to include a conditional average NLM (cANLM) in the feature subspace summary, i.e., we simply average all NLMs on the found subspaces. The cANLM gives us an estimate of the expected non-linearity of the prediction function for the given movements along the feature space, inside the specific feature subspace.
\par
Fig. \ref{fig:ame_tree_split} visualizes an exemplary feature space partitioning for the Boston housing data on the fMEs resulting from increasing the average number of rooms per dwelling (feature \texttt{rm}) by 1. The global AME is 5.43 with a CoV of 0.68. The leftmost leaf node contains 113 observations with a cAME of 6.79 (CoV = 0.31), i.e., the homogeneity of the fMEs is considerably higher than on the entire feature space. The average linearity of the fMEs on this subspace is high (cANLM = 0.93 with CoV = 0.36). Given the global ranges of all features, every decision rule can be transformed into a compact summary of the feature space, such as in Table \ref{tab:ame_subspace}.

\begin{figure}
\centering
 \includegraphics[width = \textwidth]{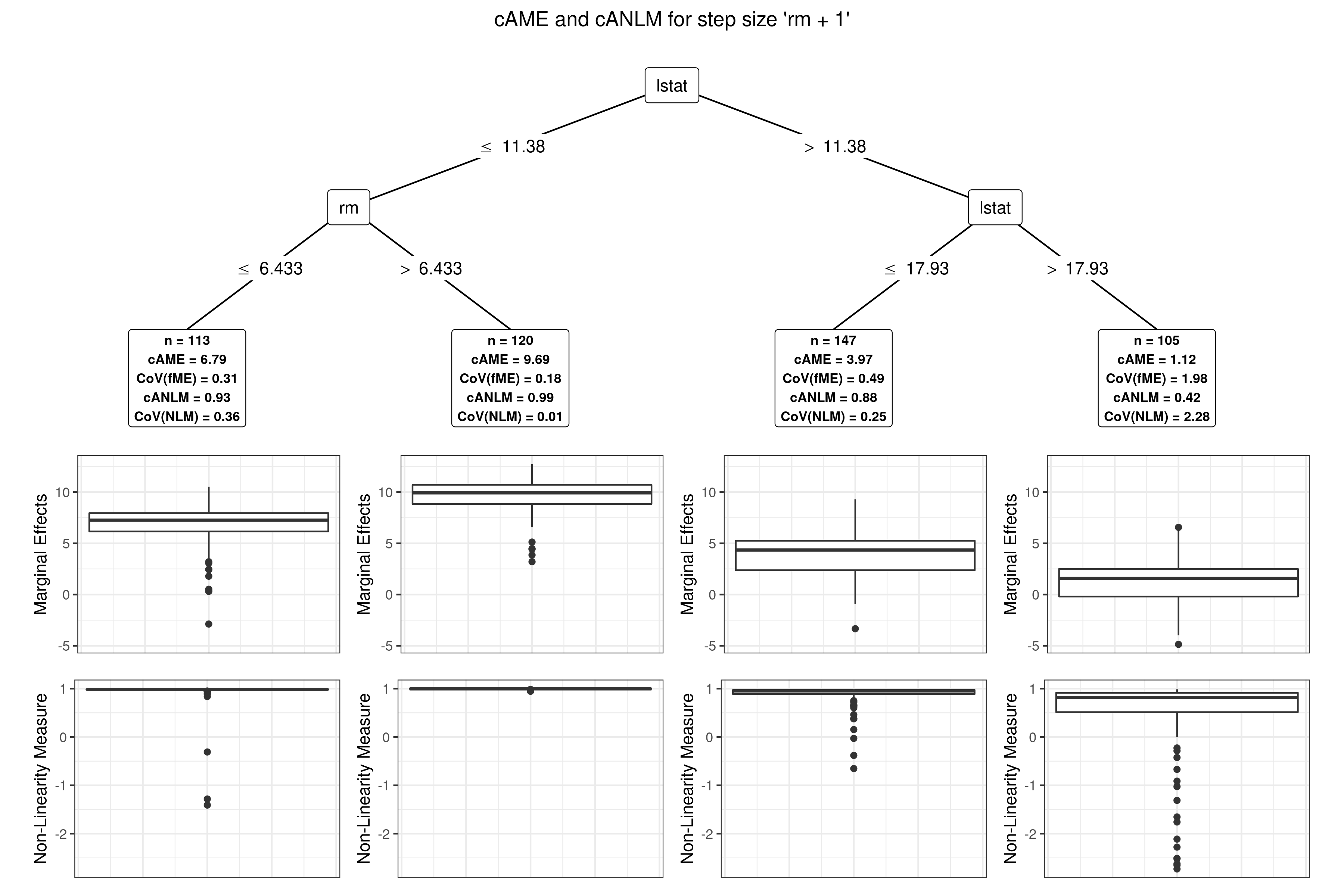}
 \caption{\label{fig:ame_tree_split} We use CTREE to recursively partition the feature space so that the homogeneity of fMEs is maximized on each subspace. The average of the fMEs on each subspace is the corresponding cAME. For example, the leftmost leaf node contains 113 observations with a cAME of 6.79 and a cANLM of 0.93, i.e., the average linearity of the fMEs on this subspace is high. The respective CoV values indicate a high homogeneity of the fME and NLM values.
 The leftmost leaf node is summarized in Table \ref{tab:ame_subspace}.
 }
\end{figure}
\begin{table}
\caption{\label{tab:ame_subspace}One of 4 leaf nodes (the leftmost) of a decision tree (visualized in Fig. \ref{fig:ame_tree_split}) that partitions the feature space into mutually exclusive subspaces where MEs are more homogeneous compared to the entire feature space. The features whose range is restricted to span the subspace are emphasized.}
\centering
\begin{tabular}[t]{|l l || l l|l l|}
\hline
 n & 113 & age & [2.9, 100] & dis  & [1.1296, 12.1265]\\
 cAME & 6.79 & zn & [0, 100] & rad & [1, 24] \\
 CoV(fME) & 0.31 & indus  & [0.46, 27.74] &  tax  & [187, 711] \\
 cANLM & 0.93 & chas & \{0, 1\} &  afram & [0.32, 396.9]\\
 CoV(NLM) & 0.36 & nox & [0.385, 0.871] &  ptratio & [12.6, 22]\\
 & & \textbf{rm}  & \textbf{[3.561, 6.433]} &  \textbf{lstat} & \textbf{[1.73, 11.38]} \\
 & & crim & [0.00632, 88.9762] & & \\
\hline
\end{tabular}
\end{table}

\par
The goal of finding feature subspaces where the feature effect is homogeneous essentially corresponds to finding population subgroups. Given the subspace, we may reverse the analysis as follows. If we sampled an observation on this feature subspace (which may be quite large in hypervolume and comprise many different possible data points), what is the expected change in predicted outcome given the specified changes in feature values, what is the expected non-linearity of the expected change in predicted outcome, and how confident are we in our estimates? The first question is addressed by the cAME, the second by the cANLM, and the third by the SD and the number of observations. Lower SD values increase confidence in the estimate and vice versa, and a larger number of observations located in the subspace increase confidence in the estimate and vice versa. Although we do not specify a distribution of the underlying fMEs or NLMs, constructing a confidence interval (CI) is possible via the central limit theorem. As the cAME and cANLM are sample averages of all fMEs and NLM values on each subspace, they both approximately follow a t-distribution (as the SD is estimated) for large sample sizes. Given a subspace $\mathcal{X}_{[\;]}$ that contains $n_{[\;]}$ observations, mean ($\text{cAME}_{\mathcal{D}_{[\;]}, \boldhS}$ and $\text{cANLM}_{\mathcal{D}_{[\;]}, \boldhS}$) and SD ($\text{SD}_{\text{fME}, \;[\;]}$ and $\text{SD}_{\text{NLM}, \;[\;]}$) values, the confidence level $\alpha$, and the values of the t-statistic with $n_{[\;]}-1$ degrees of freedom at the $1-\frac{\alpha}{2}$ percentile ($t_{1-\frac{\alpha}{2}, \; n_{[\;]}-1}$), the CIs correspond to:
\begin{align*}
\text{CI}_{\text{cAME}, \; 1 - \alpha} &= \left[\text{cAME}_{\mathcal{D}_{[\;]}, \boldhS} - t_{1-\frac{\alpha}{2}, \; n_{[\;]}-1} \frac{\text{SD}_{\text{fME}, \; [\;]}}{\sqrt{n}_{[\;]}} \; , \; \text{cAME}_{\mathcal{D}_{[\;]}, \boldhS} + t_{1-\frac{\alpha}{2}, \; n_{[\;]}-1} \frac{\text{SD}_{\text{fME}, \; [\;]}}{\sqrt{n_{[\;]}}}\right] \\
\text{CI}_{\text{cANLM}, \; 1 - \alpha} &= \left[\text{cANLM}_{\mathcal{D}_{[\;]}, \boldhS} - t_{1-\frac{\alpha}{2}, \; n_{[\;]}-1} \frac{\text{SD}_{\text{NLM}, \;[\;]}}{\sqrt{n}_{[\;]}} \; , \; \text{cANLM}_{\mathcal{D}_{[\;]}, \boldhS} + t_{1-\frac{\alpha}{2}, \; n_{[\;]}-1} \frac{\text{SD}_{\text{NLM}, \;[\;]}}{\sqrt{n_{[\;]}}}\right]
\end{align*} 
One option to ensure that the lower sample size threshold for CIs is valid is to specify a minimum node size for the computation of cAMEs, i.e., not growing the tree too large.
\par
When estimating the conditional feature effect for a hypothetical observation, it is important to avoid model extrapolations of the tree. We may create a hypothetical observation, use the tree to predict the cME estimate, and express our confidence in the estimate with a CI. However, the feature values of the observation may be located outside of the distribution of observations that were used to train the decision tree and therefore result in unreliable predictions.

\section{Comparison to LIME}
\label{sec:comparison_lime}

LIME \citep{ribeiro_lime}, one of the state-of-the-art model-agnostic feature effect methods (see Section \ref{sec:review}), most closely resembles the interpretation given by an fME. It also serves as a local technique, explaining the model for a single observation. LIME samples instances, predicts, and weighs the predictions by the instances' proximity to the instance of interest using a kernel function. Afterwards, an interpretable surrogate model is trained on the weighted predictions. The authors choose a sparse linear model, whose beta coefficients provide an interpretation similar to the fME.
\par
But there is a fundamental difference between both approaches. The fME directly works on the prediction function, while LIME trains a local surrogate model. The latter is therefore affected by an additional layer of complexity and uncertainty. The authors suggest to use LASSO regression, which requires choosing a regularization constant. Furthermore, one must select a similarity kernel defined on a distance function with a width parameter. The model interpretation is therefore fundamentally determined by multiple parameters.
Furthermore, certain surrogate models are incapable of explaining certain model behaviors and may potentially mislead the practitioner to believe the interpretation \citep{ribeiro_lime}. A linear surrogate model may not be able to describe extreme non-linearities of the prediction function, even within a single locality of the feature space. In contrast, the only parameters for the fME are the features and the step sizes. Without question, the choice of parameters for fMEs also significantly affects the interpretation. However, we argue that their impact is much clearer than in LIME, e.g., a change in a feature such as \texttt{age} is much more meaningful than a different width parameter in LIME. In fact, we argue that the motivation behind both approaches is fundamentally different. For fMEs, we start with a meaningful interpretation concept in mind, e.g., we may be interested in the combined effects of increasing \texttt{age} and \texttt{weight} on the disease risk. For LIME, we start with a single observation, trying to distill the black box model behavior within this specific locality into a surrogate model.
\par
In addition to the sensitivity of results regarding parameter choices \citep{slack_fooling_lime}, LIME is notoriously unstable even with fixed parameters. \citet{zhou_slime} note that repeated runs using the same explanation algorithm on the same model for the same observation results in different model explanations, and they suggest significance testing as a remedy. In contrast, fMEs with fixed parameters are deterministic.
\par
Furthermore, the authors of LIME note that the faithfulness of the local surrogate may be diminished by extreme non-linearities of the model, even within the locality of the instance of interest. This exact same critique holds for the fME (see Section \ref{sec:nlm}).  
Hence, we introduce the NLM, which essentially corresponds to a measure of faithfulness of the fME and whose concept can potentially be used for other methods as well. One could also use the coefficient of determination $R^2$ to measure the goodness-of-fit of the linear surrogate to the pseudo sample in LIME. However, we argue that the goodness-of-fit to a highly uncertain pseudo sample is a questionable way of measuring an explanation's faithfulness.
\par
The authors of LIME note that insights into the global workings of the model may be gained by evaluating multiple local explanations. As there usually are time constraints so that not all instances can be evaluated, an algorithm suggests a subset of representative instances. Although this approach avoids the issue of misrepresenting global effects by averaging local explanations, it also misses the opportunity to provide meaningful semi-global explanations. This is where the cAME comes into play. It is motivated by the goal to aggregate local interpretations while staying faithful to the underlying predictive model.

\section{Simulations}
\label{sec:simulations}

Here, we present three simulation scenarios to highlight the interplay between fMEs, the NLM, and the cAME. In all following sections, we use Simpson's 3/8 rule for the computation of the NLM and CTREE to compute cAMEs.

\subsection{Univariate Data without Noise}

We start with a univariate scenario without random noise and work directly with the data generating process (DGP). This way, we can evaluate how introducing noise affects the information gained from fMEs in the subsequent simulation. 
We simulate a single feature $x$, uniformly distributed on $[-5, 5]$, and define $f$ as:
$$
f(x) =
\begin{cases} x & x < 0 \\ 5 \sin (2x) & x \geq 0
\end{cases}
$$
The data is visualized in Fig. \ref{fig:simulation_univariate_data}. An fME with step size $h = 2$ is computed for each observation. We use CTREE on the fMEs to find cAMEs. Subsequently, all observations' NLM values are averaged to cANLM values on the subspaces of the cAMEs. Our computations are visualized in Fig. \ref{fig:simulation_univariate_no_noise_fme_nlm}. In the univariate case, we see a direct relationship between the shape of the DGP and the fMEs and NLM values. The NLM has ramifications on the interpretation of the fMEs. For instance, for $x = -3$, the fME of increasing $x$ by 2 units increases the predicted target value by 2 units, and we can conclude that the same holds proportionally for feature value changes of smaller magnitudes, e.g., a change of 1 unit results in an fME of 1, etc. On the contrary, given an observation $x = 1$, the NLM indicates considerable non-linearity. As a result, we cannot draw conclusions about this region of the feature space regarding fMEs with smaller step sizes than 2 units.
\begin{figure}
\centering
    \includegraphics[width=0.49\textwidth]{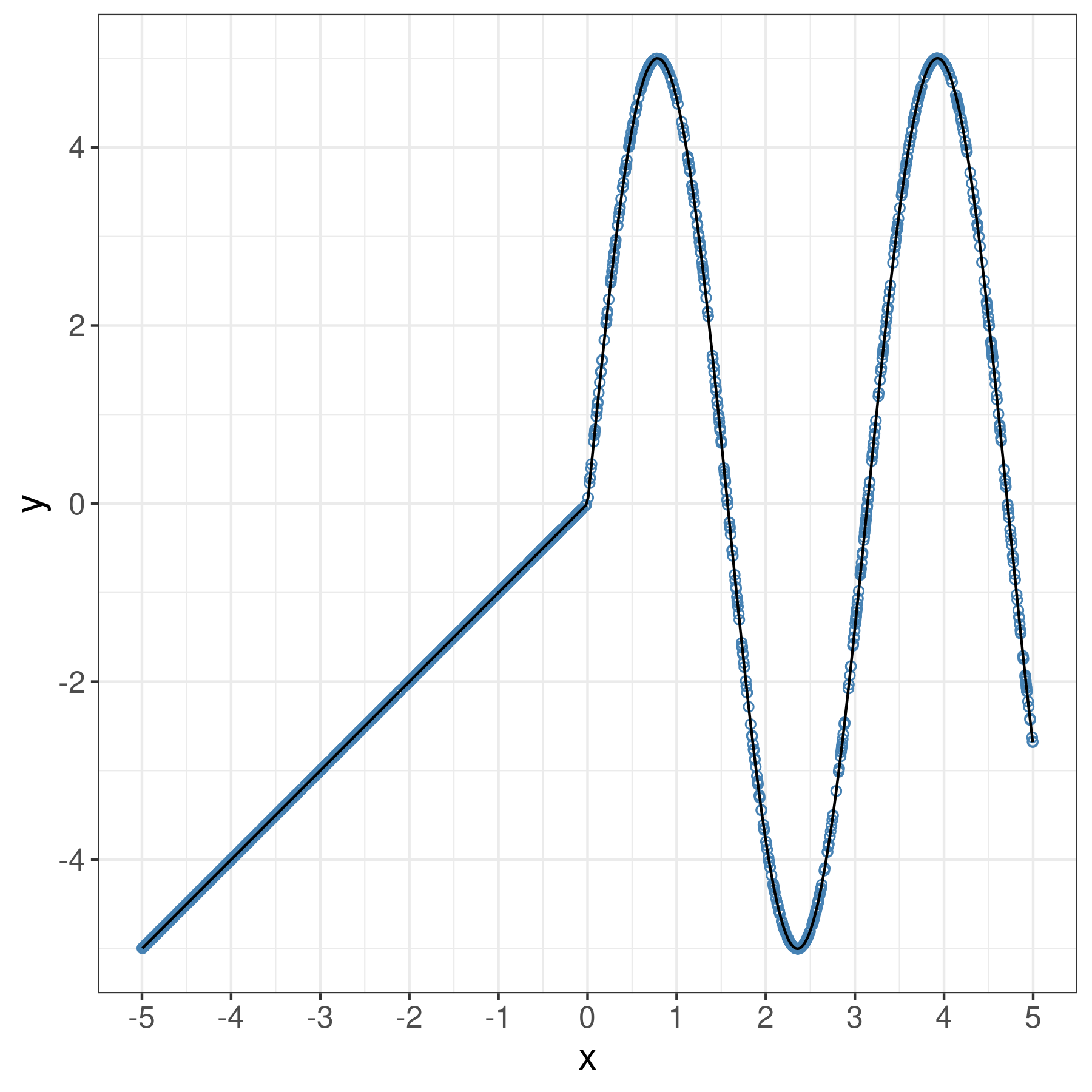}
   \includegraphics[width=0.49\textwidth]{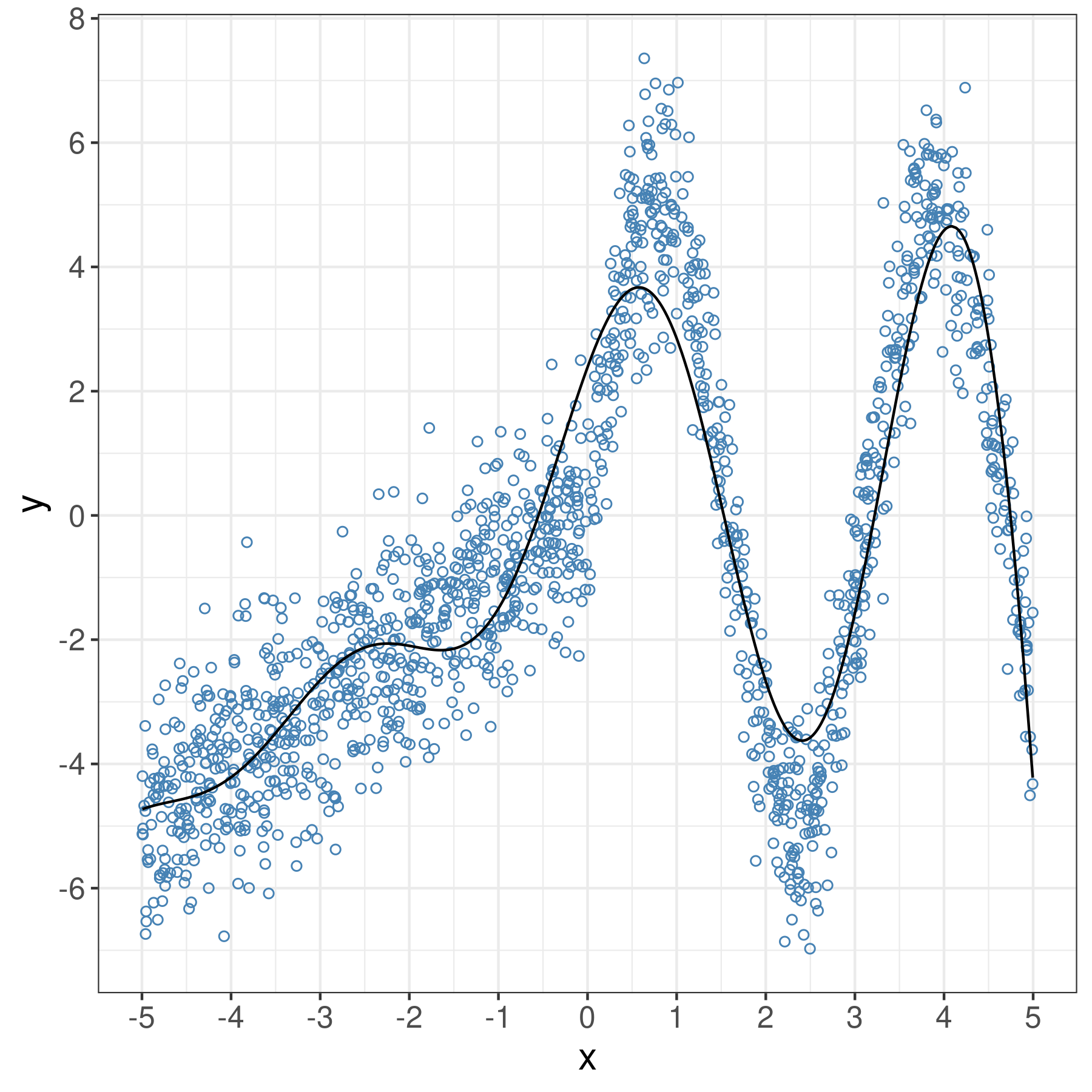}
\caption{\label{fig:simulation_univariate_data}The target is determined by a single feature $x$. On the interval $[-5, 0[$ there is a linear feature effect. On the interval $[0, 5]$ the functional relationship consists of a transformed sine wave. We first use the DGP, then add random noise on top of the data and train an SVM.}
\end{figure}
\begin{figure}
\centering
 \includegraphics[width=0.49\linewidth]{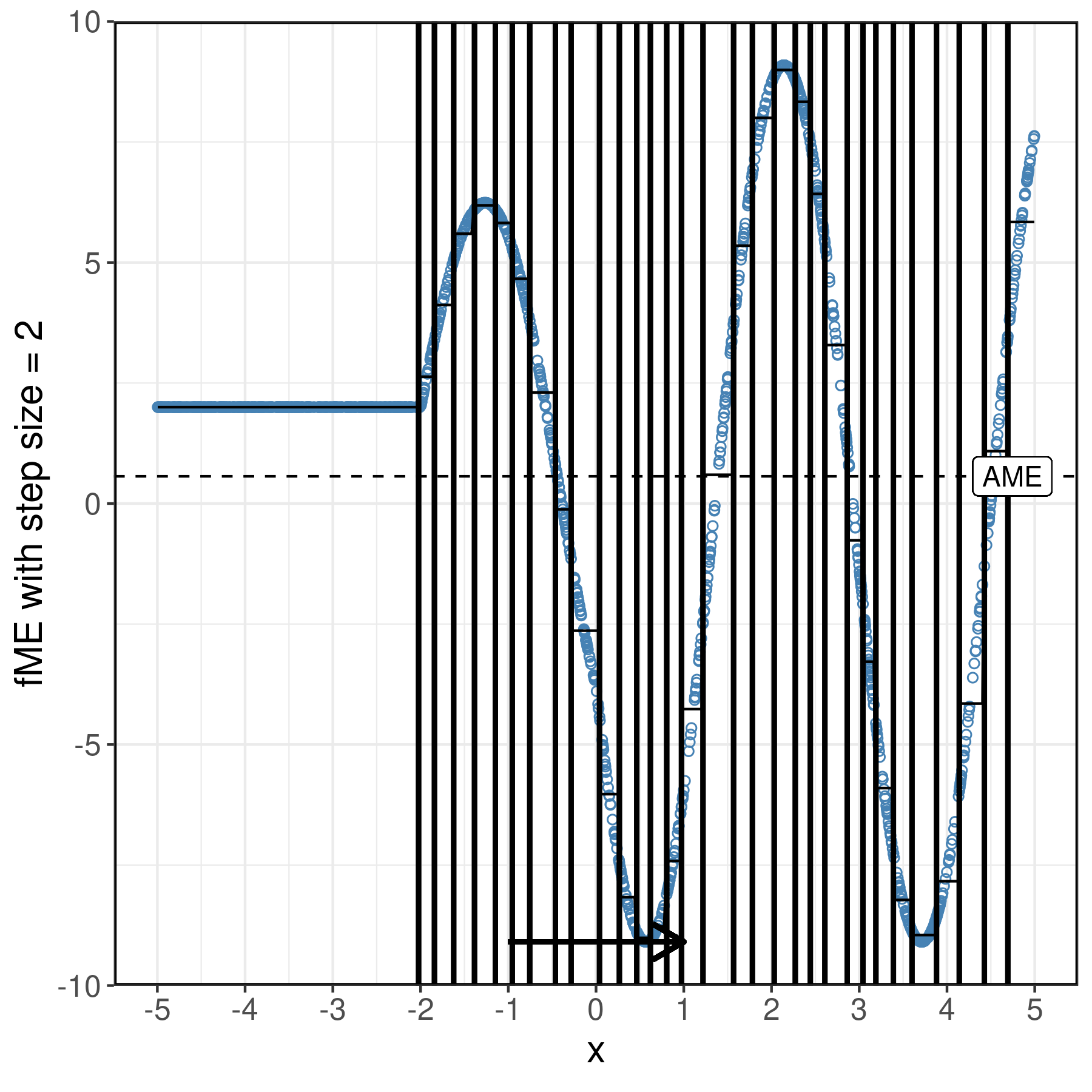}
  \includegraphics[width=0.49\linewidth]{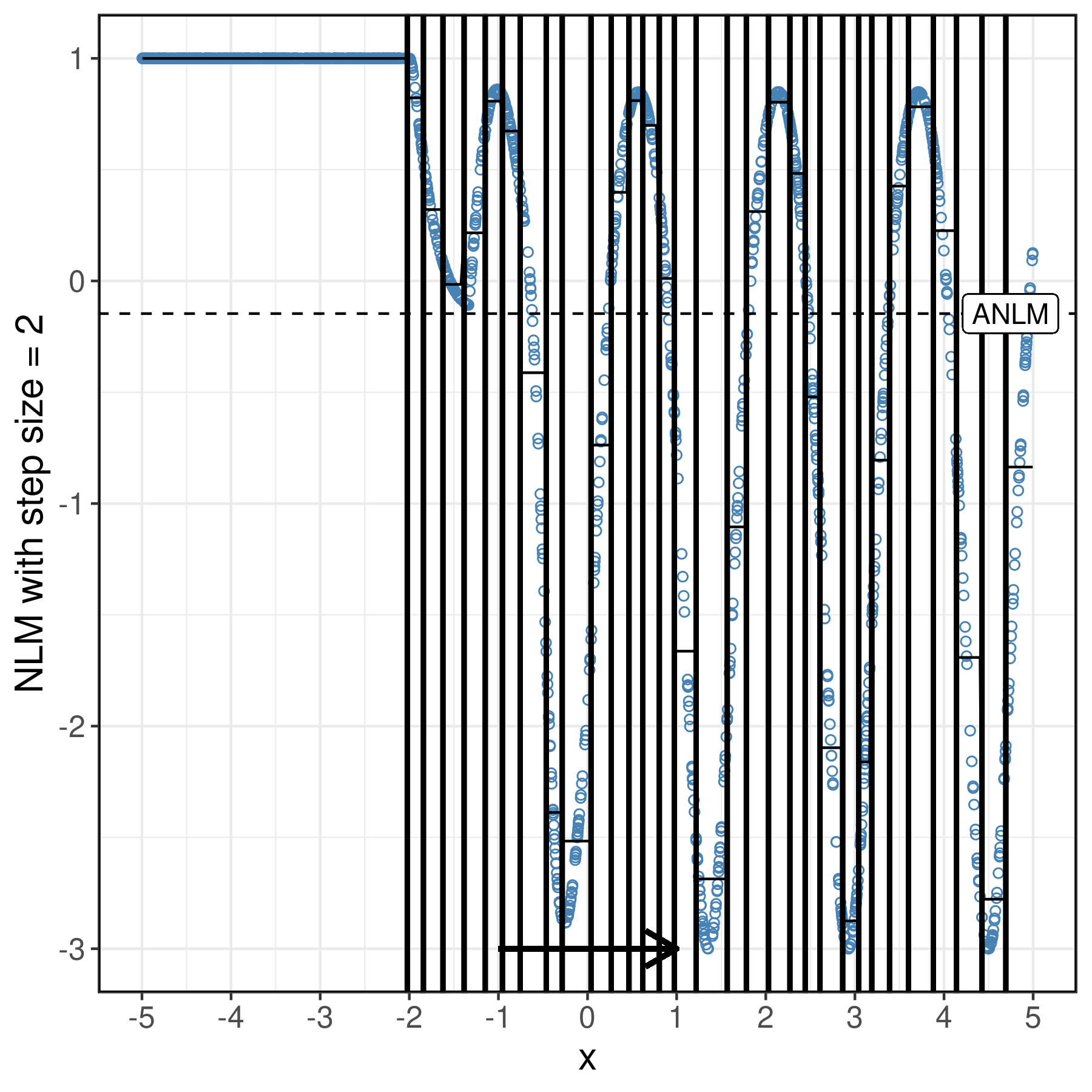}
\caption{\label{fig:simulation_univariate_no_noise_fme_nlm} \textbf{Univariate data without noise}. For each point, moving in x direction by the length of the arrow results in the fME / NLM indicated on the vertical axis.
\textbf{Left}: fMEs with step size $h = 2$. A regression tree partitions the feature space into subspaces (in this case intervals) where the fMEs are most homogeneous. The horizontal lines correspond to the cAMEs. 
\textbf{Right}: NLM values and cANLMs for each subspace.
}
\end{figure}

\subsection{Univariate Data with Noise}

We proceed to add random noise $\epsilon \sim N(0, 1)$ on top of the data and tune the regularization and sigma parameters of a support vector machine (SVM) with a radial basis function kernel (see Fig. \ref{fig:simulation_univariate_data}).
As we now employ a predictive model, we must avoid potential model extrapolations. The forward location of all points with $x > 3$ falls outside the range of the training data. After removing all extrapolation points, we evaluate the fMEs and NLMs of all observations with $x \in [-5, 3]$ (see Fig. \ref{fig:fME_univariate_sim}). In this case, we can visually assess that the predictions of the SVM resemble the DGP but also factor in noise (see Fig. \ref{fig:simulation_univariate_data}). e.g., the SVM prediction function is non-linear in linear regions of the DGP, which affects the fMEs and NLMs. This demonstrates that fMEs can only be used to explain the DGP if the model describes it accurately.

\begin{figure}
\centering
 \includegraphics[width=0.49\textwidth]{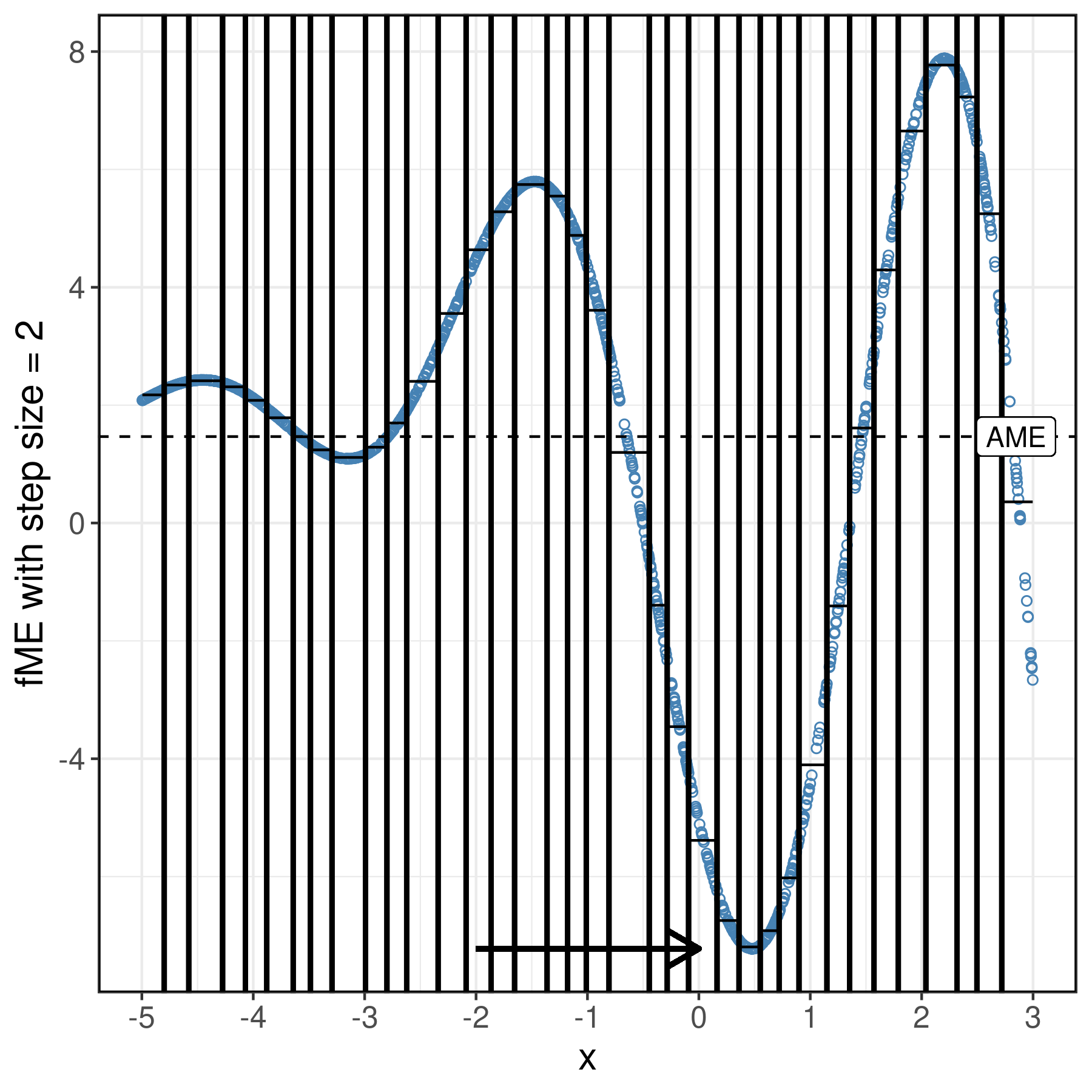}
 \includegraphics[width=0.49\textwidth]{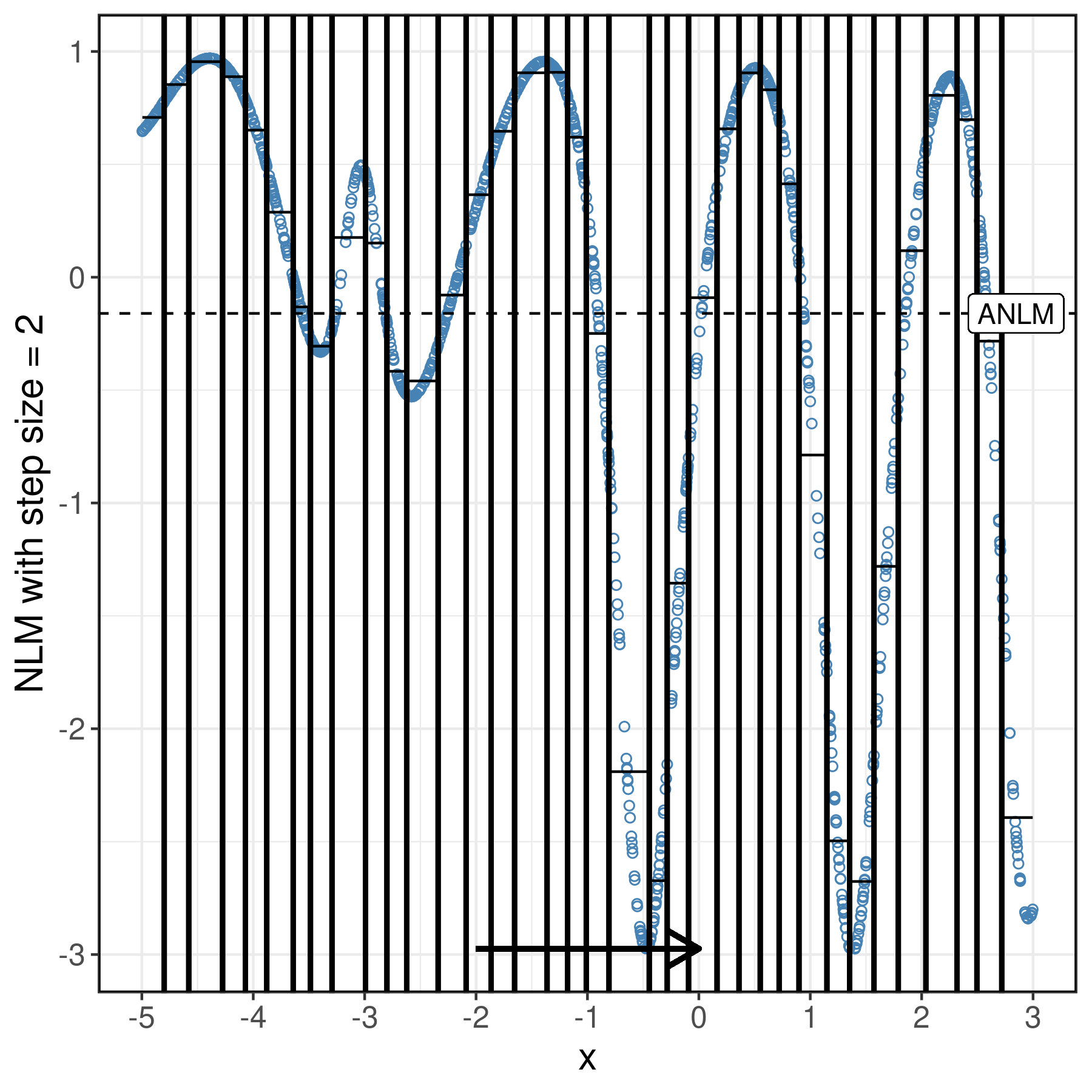}
\caption{\label{fig:fME_univariate_sim} \textbf{Univariate data with noise}. For each point, moving in x direction by the length of the arrow results in the fME / NLM indicated on the vertical axis.
\textbf{Left}: fMEs with step size $h = 2$ and cAMEs.
\textbf{Right}: NLM values and cANLMs.
}
\end{figure}

\subsection{Bivariate Data with Univariate Feature Change}

We next augment the univariate data with one additional feature in order to empirically evaluate the additive recovery property of the fME (see Appendix \ref{app:additiverecovery}). Due to potential model extrapolations, we only make use of the DGP. In the first example, the DGP corresponds to a supplementary additively linked feature $x_2$:
\begin{equation}
\label{eq:data_gen_process_bivariate_additive}
f(x_1, x_2) =
\begin{cases} x_1 + x_2 & x_1 < 0 \\ 5 \sin (2x_1) + x_2 & x_1 \geq 0
\end{cases}
\end{equation}
In the second example, the DGP corresponds to a supplementary multiplicatively linked feature $x_2$, i.e., we have a pure interaction:
\begin{equation}
\label{eq:data_gen_process_bivariate_multiplicative}
f(x_1, x_2) =
\begin{cases} x_1 \cdot x_2  & x_1 < 0 \\ 5 \sin (2x_1) \cdot x_2 & x_1 \geq 0
\end{cases}
\end{equation}
The fMEs and NLM values for both DGPs are given in Fig. \ref{fig:simulation_data_bivariate_additive_multiplicative}. For the additive DGP, given the value of $x_1$, moving in $x_2$ direction does not influence the fMEs due to the additive recovery property. As a result, we receive the same fMEs with an additively linked feature $x_2$ as without it (as long as the feature change does not occur in $x_2$). For the multiplicative DGP, the fMEs now vary for a given $x_1$ value, even though the feature change only occurs in $x_1$. The NLM values are both affected by the presence of an additively linked and a multiplicatively linked feature $x_2$, even though the feature change only occurs in $x_1$. As opposed to the additive DGP, the cAME tree makes use of $x_2$ as a split variable for the multiplicative DGP.

\begin{figure}
 \begin{subfigure}[t]{\textwidth}
 \includegraphics[width = 0.49\linewidth]{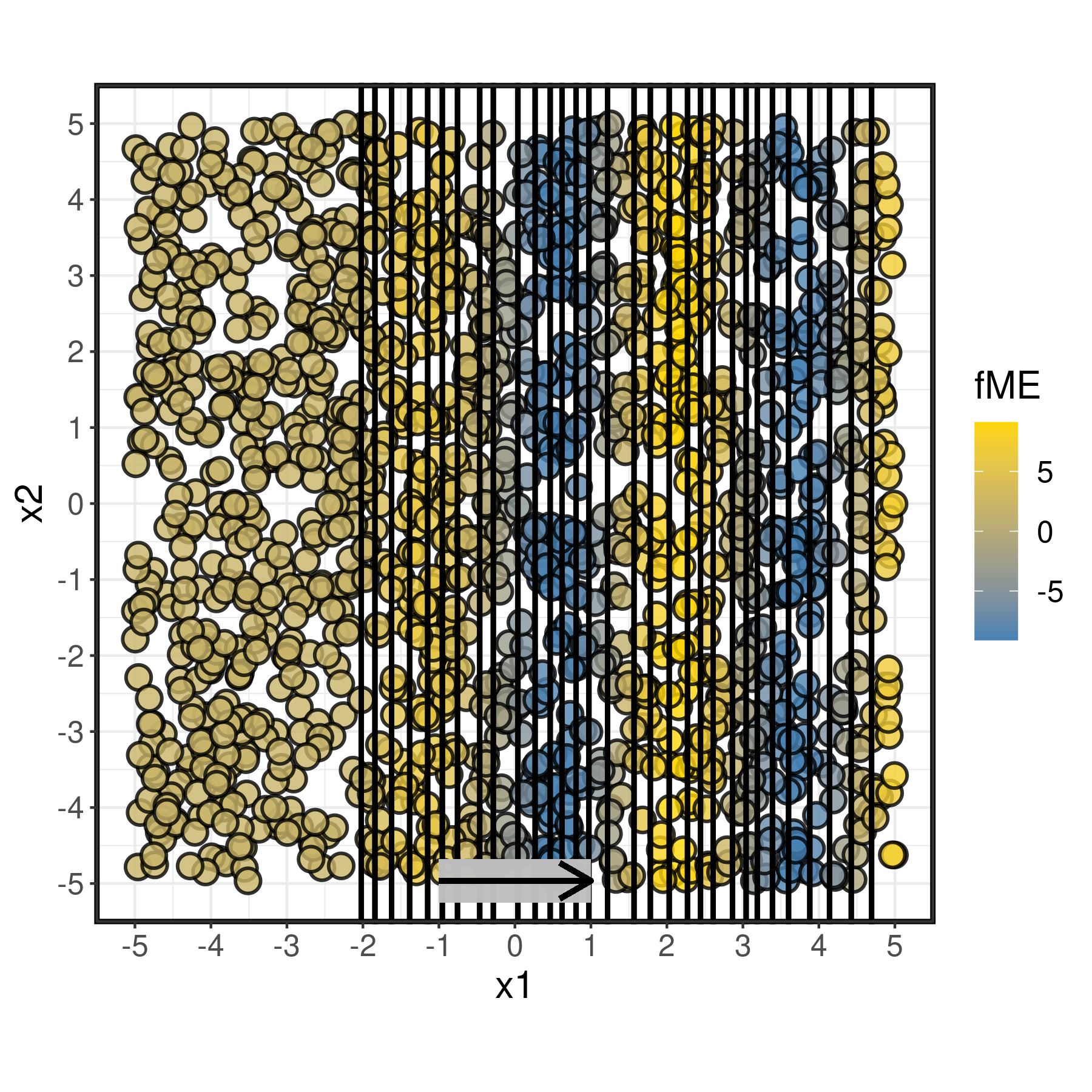}
\includegraphics[width = 0.49\linewidth]{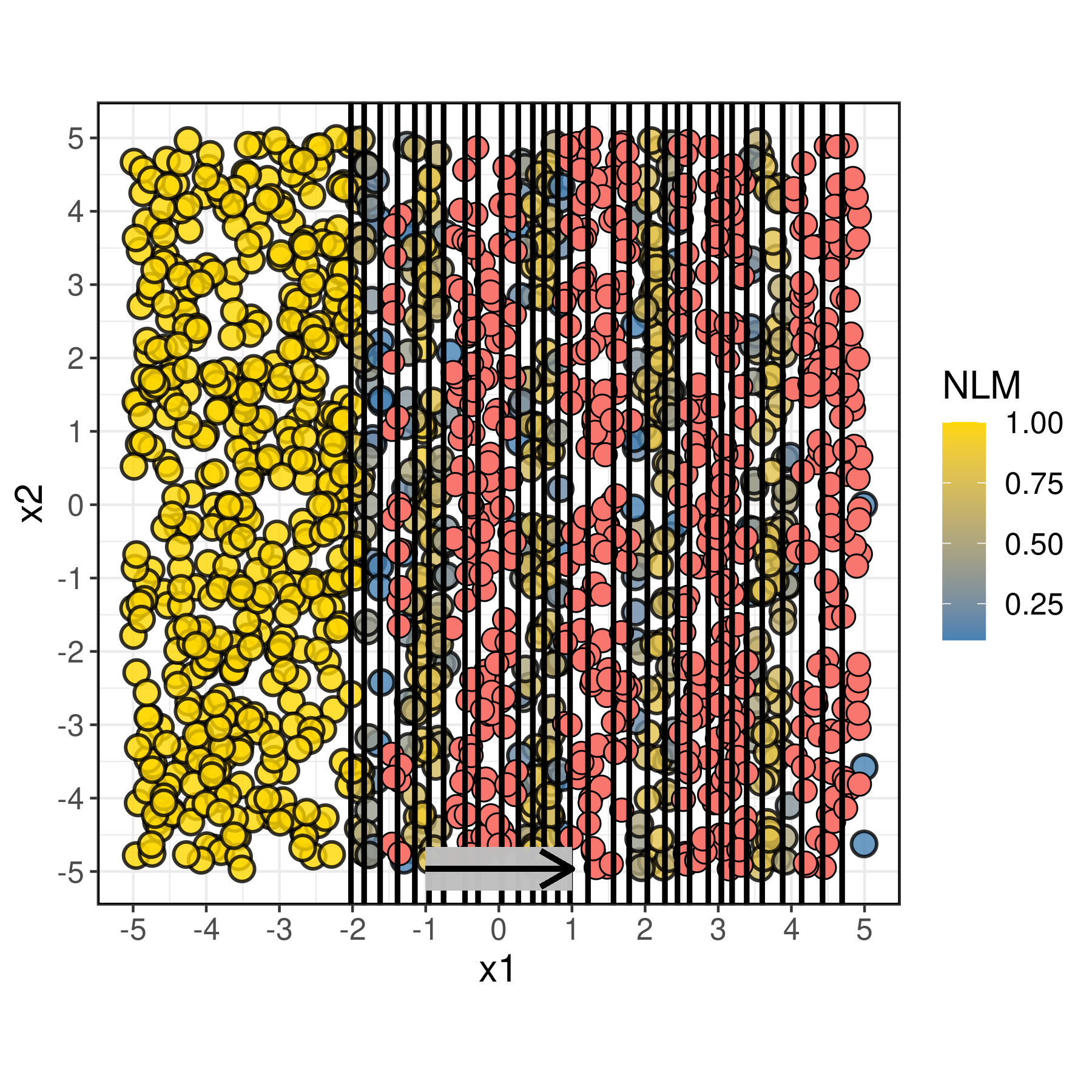}
\caption{DGP with additive link.}
\end{subfigure}
\begin{subfigure}[t]{\textwidth}
\includegraphics[width = 0.49\linewidth]{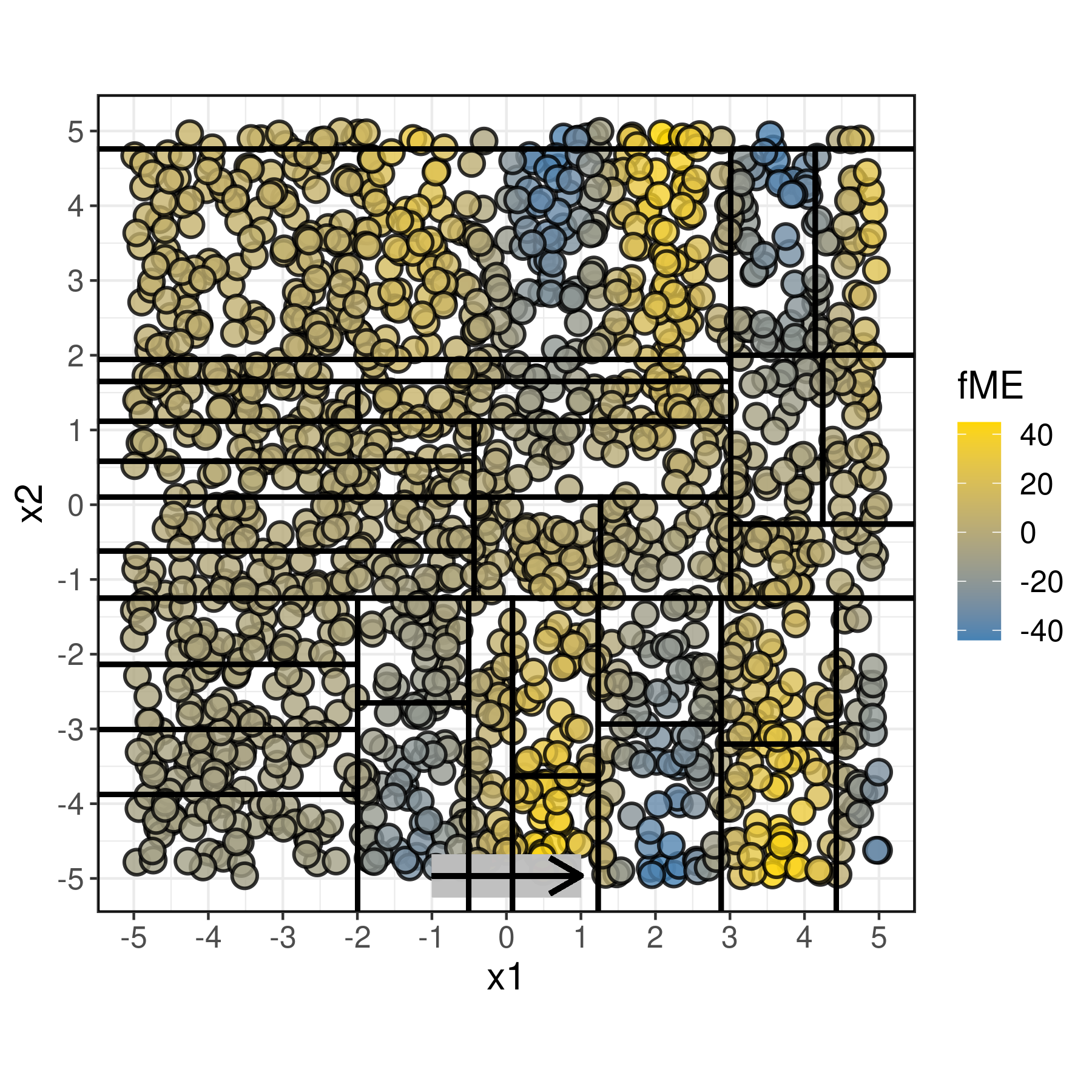}
\includegraphics[width = 0.49\linewidth]{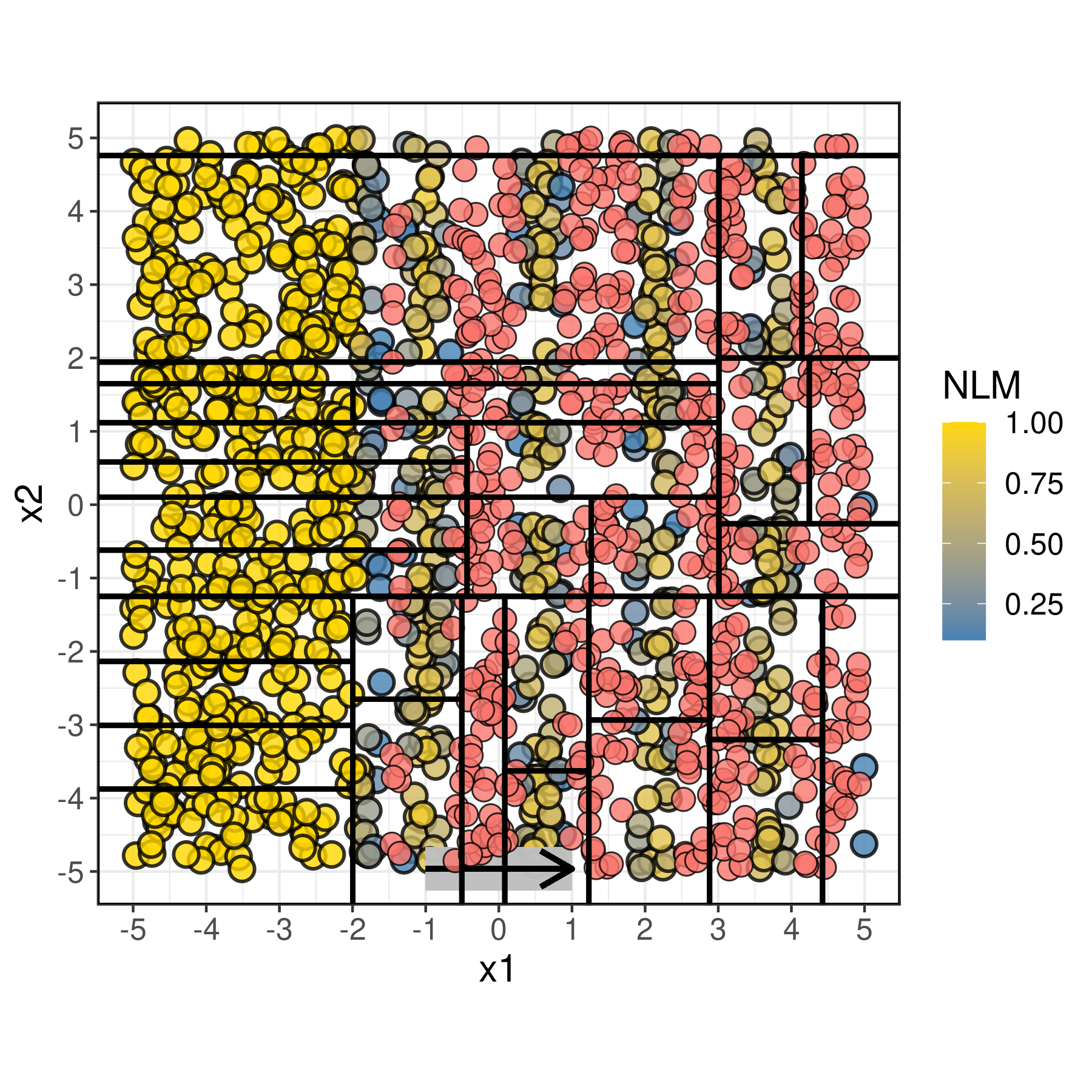}
\caption{DGP with multiplicative link.}
\end{subfigure}
\caption{\label{fig:simulation_data_bivariate_additive_multiplicative} \textbf{Bivariate data and univariate feature change $\mathbf{h_1 = 2}$.} For each point, moving in $x_1$ direction by the length of the arrow results in the fME / NLM indicated by the color. fMEs (left) and NLM (right). Negative NLM values are red colored.}
\end{figure}

\subsection{Bivariate Data with Bivariate Feature Change}

In the last simulation example, we demonstrate bivariate fMEs and the corresponding NLM. We use the same DGPs as for the univariate feature change.
The fMEs and NLM values are given in Fig. \ref{fig:simulation_fme_nlm_bivariate_bivariate_change}.
As opposed to the univariate feature change for additively linked data, the fME values now also vary in $x_2$ direction for a given $x_1$ value due to the simultaneous change in $x_2$. The NLM indicates linearity for a multitude of observations, given both the additive and the multiplicative DGP. For these observations, we can infer that multiplying \textit{both} step sizes by a value on the interval $[0, 1]$ results in an equally proportional reduced fME. 
 
\begin{figure}
 \begin{subfigure}{\textwidth}
 \includegraphics[width = 0.49\linewidth]{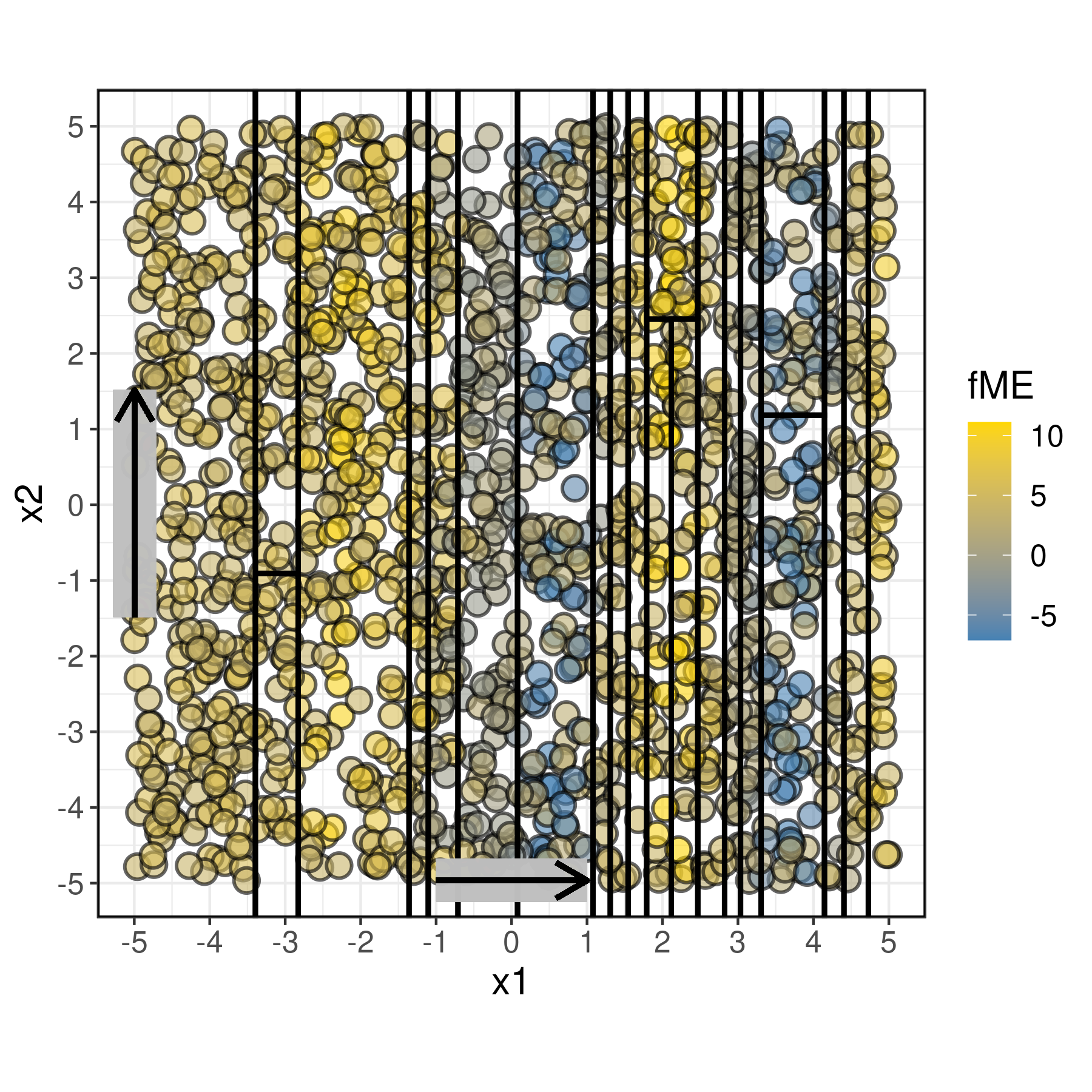}
\includegraphics[width = 0.49\linewidth]{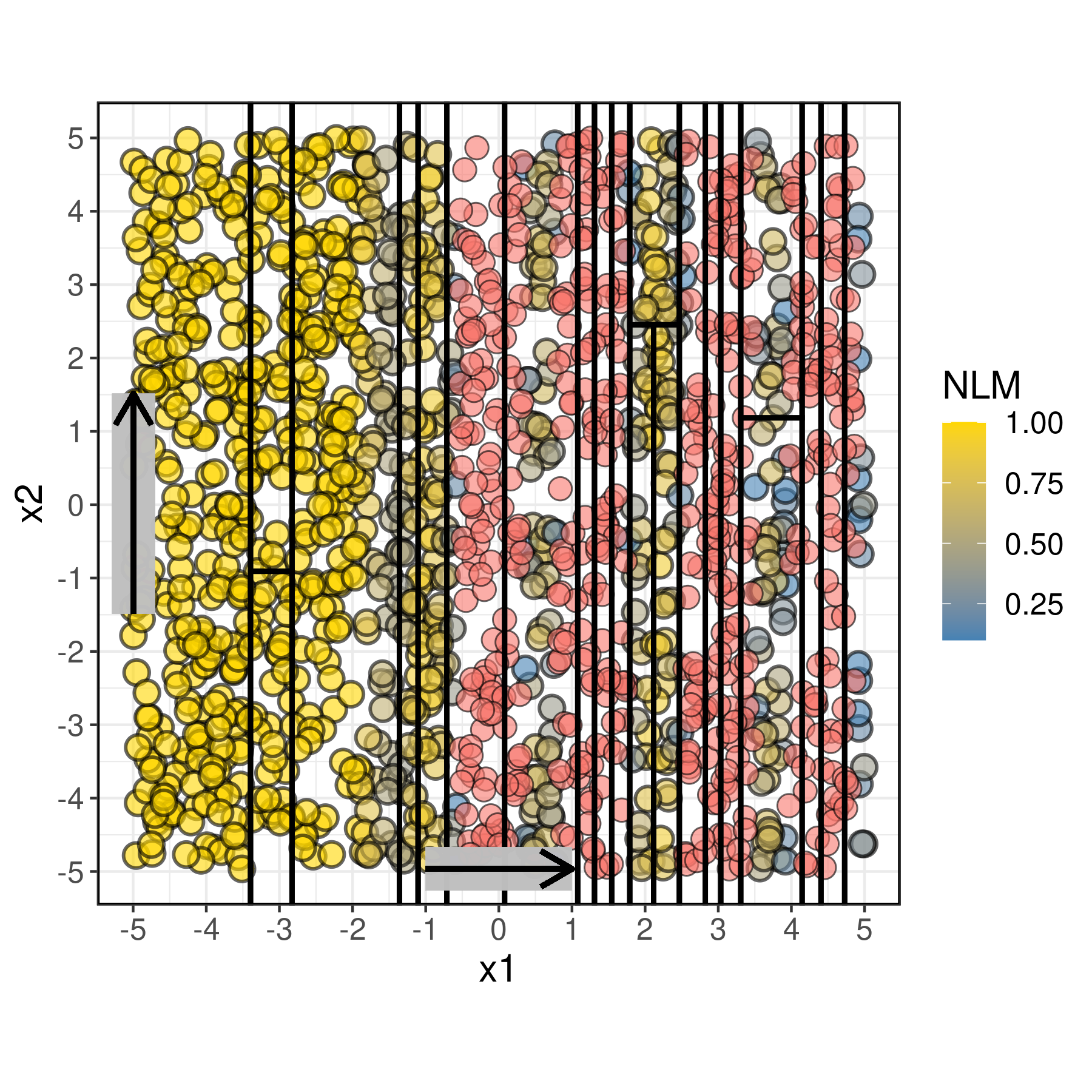}
\caption{DGP with additive link.}
\end{subfigure}

\begin{subfigure}{\textwidth}
\includegraphics[width = 0.49\linewidth]{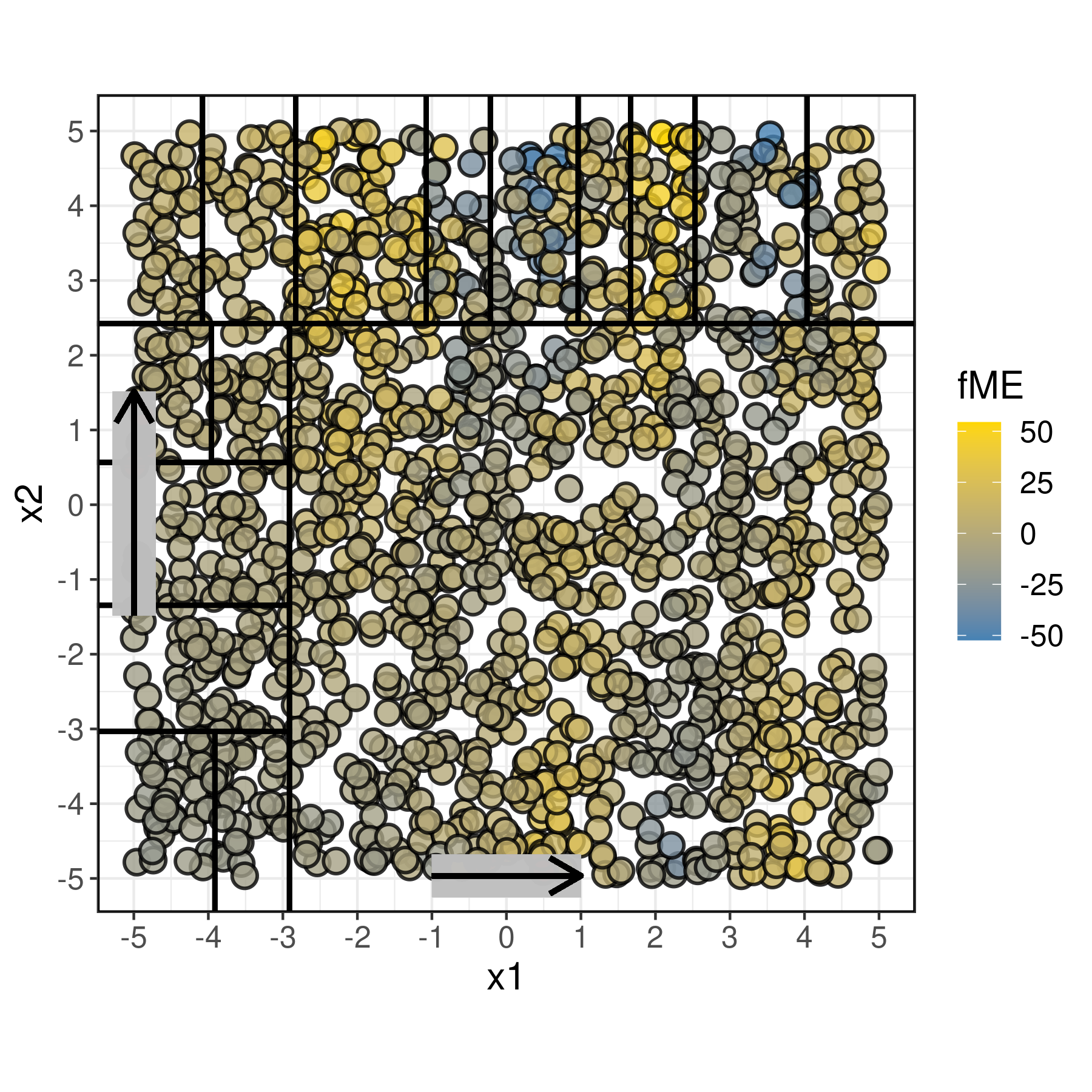}
\includegraphics[width = 0.49\linewidth]{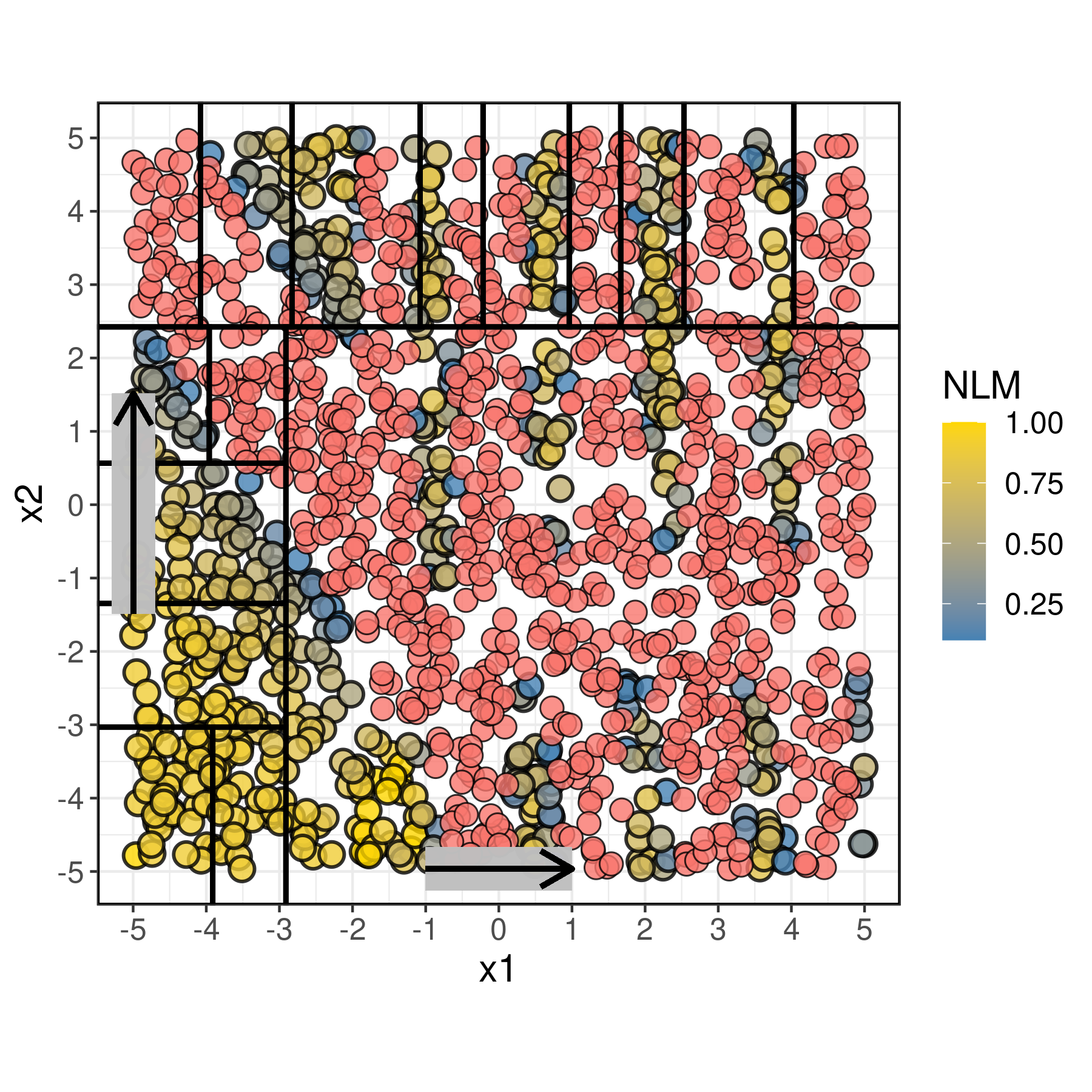}
\caption{DGP with multiplicative link.}
\end{subfigure}
\caption{\label{fig:simulation_fme_nlm_bivariate_bivariate_change}\textbf{Bivariate data and bivariate feature change $\mathbf{h_1 = 2}$ and $\mathbf{h_2 = 3}$}. For each point, moving in $x_1$ and $x_2$ directions by the lengths of the respective arrows results in the fME / NLM indicated by the color. fMEs (left) and NLM (right). Negative NLM values are red colored.}
\end{figure}

\section{Application Workflow and Applied Example}
\label{sec:workflow_application}

\begin{figure}
\caption{\label{fig:workflow}Application workflow for forward marginal effects.}
\begin{enumerate}
    \item Train and tune a predictive model.
    \item Based on the application context, choose evaluation points $\mathcal{D}$, the features of interest $S$, and the step sizes $\boldhS$.
    \item Check whether any $\boldx^{(i)}$ or $(\boldxS^{(i)} + \boldhS, \boldxMinusS^{(i)})$ are subject to model extrapolations. See Appendix \ref{app:extrapolation} for possible options.
    \item Either modify the step sizes so no points fall inside areas of extrapolation or remove the ones that do.
    \item Compute fMEs for  selected observations and the chosen step sizes.
    \item Compute the NLM for every computed fME.
    \item Compute cAMEs by finding feature subspaces with homogeneous fME values.
    \item Compute cANLM values.
    \item Optional: Compute CIs for cAME and cANLM.
    \item Conduct local (single fMEs of interest) and semi-global interpretations (cAME and cANLM).
\end{enumerate}
\end{figure}
\begingroup
\renewcommand*{\arraystretch}{1.1}
\begin{table}
\caption{\label{tab:boston_housing_data}Description of the Boston housing data.}
\scriptsize
\begin{tabularx}{\linewidth}{|l|X|l|}
\hline
Variable & Description & Range \\
 \hline
 crim & Per capita crime rate by town. & [0.00632, 88.9762]\\
 zn & Proportion of residential land zoned for lots over 25,000 sq.ft.  & [0, 100] \\
 indus & Proportion of non-retail business acres per town. & [0.46, 27.74]  \\
 chas & Charles River dummy variable (= 1 if tract bounds river; 0 otherwise).  & \{0, 1\} \\
 nox & Nitrogen oxides concentration (parts per 10 million). & [0.385, 0.871] \\
 rm & Average number of rooms per dwelling.  & [3.561, 8.78] \\
 age & Proportion of owner-occupied units built prior to 1940. & [2.9, 100] \\
 dis & Weighted mean of distances to five Boston employment centres. & [1.1296, 12.1265]\\
rad & Index of accessibility to radial highways. & [1, 24] \\
 tax & Full-value property-tax rate per USD 10,000. & [187, 711] \\
 afram & $1000(AA - 0.63)^2$ where $AA$ is the prop. of African Americans by town & [0.32, 396.90]\\
 ptratio & Pupil-teacher ratio by town.  & [12.6, 22]\\
 lstat & Lower status of the population (percent). & [1.73, 37.97] \\
 medv & Median value of owner-occupied homes in USD 1000s (censored above 50,000 USD).  & [5, 50]\\
 \hline
\end{tabularx}
\normalsize
\end{table}
\endgroup

We first present a structured application workflow (see Fig. \ref{fig:workflow}) that incorporates all the theory presented in the preceding sections and demonstrate it with the Boston housing data (see Table \ref{tab:boston_housing_data} for a description). 
We start by tuning the regularization and sigma parameters of an SVM with a radial basis function kernel and proceed to evaluate feature effects on the predicted median housing value per census tract in Boston. The housing prices reflect the market demand, which in turn represents the attractiveness of living in the city or a certain district. For instance, the local authorities might be interested in the effects of lowering the crime rate by 5\% (e.g., due to a larger police force) on the housing market. In order to avoid model extrapolations, we exclude all points that are located outside of the multivariate envelope of the training data. We start with the first order effect (see Fig. \ref{fig:boston_housing_fme_nlm_crim}). For the majority of observations, decreasing the crime rate by 5\% considerably increases the predicted median housing value by several thousand US dollars (USD). For a fraction of observations, the predicted change in median housing value is close to zero or below zero. The effect is approximately linear for all observations.
\par
Next, we evaluate potential interactions. In Fig. \ref{fig:boston_housing_fme_nlm_crim_nox}, we visualize the interaction between the features \texttt{crim} and \texttt{nox} with the step sizes $h_{\text{crim}} = -5$ and $h_{\text{nox}} = -0.1$, i.e., a decrease in the crime rate by 5\% and a simultaneous decrease in the nitrogen oxides concentration by 0.1 parts per 10 million (range between 0.385 and 0.871). For the majority of observations, there is a considerable increase in the median housing value, which is larger than for the isolated decrease in the crime rate. We proceed to three-way interactions. In Fig. \ref{fig:boston_housing_fme_nlm_crim_nox_ptratio}, we visualize the kernel density estimate of the distribution of fMEs and NLM values for the step sizes $h_{\text{crim}} = -5$, $h_{\text{nox}} = -0.1$ and $h_{\text{ptratio}} = -1$. For a simultaneous threefold decrease of the given magnitudes, there is a large increase in predicted median housing value for the majority of observations. The threefold interaction demonstrates an advantage of fMEs over competing techniques such as ICEs or the PD, which are difficult to interpret in more than two dimensions. Most three-way fMEs have a high degree of linearity with an NLM close to 1.
\par
We proceed to evaluate cAMEs with the step size $h_{\text{crim}} = -5$. In Fig. \ref{fig:boston_housing_cAME_tree}, we recursively partition the feature space with CTREE to find feature subspaces with a high fME homogeneity. In this example, the fMEs on the feature space are already rather homogeneous to begin with (CoV = 0.32), as can be seen by the large cluster of fMEs in Fig \ref{fig:boston_housing_fme_nlm_crim}. The minimum node size is set to 30 so we can compute CIs. We succeed in finding two subspaces with a lower fME CoV value than the root node, except for one region with a higher CoV value. This demonstrates that, although the aggregate impurity of the fMEs can be reduced via recursive partitioning, we may still find subspaces with a larger impurity than the root node.
Lowering the crime rate by 5\% has the smallest effect in districts with high crime rates. For lower crime rates, the feature effect is slightly larger in areas with a large nitrogen oxides concentration.
\par
In a final step, we demonstrate the construction of CIs. For an observation that falls into the rightmost leaf node, the estimated cME is 0.86 (given by the cAME), and the estimated cNLM is 0.99 (given by the cANLM). The 95\% CIs correspond to $\text{CI}_{\text{cAME}, \; 0.95} = [0.65, 1.07]$ and $\text{CI}_{\text{cANLM}, \; 0.95} = [0.98, 1]$.

\begin{figure}
\centering
\begin{subfigure}[t]{0.8\textwidth}
\includegraphics[width = 0.49\linewidth]{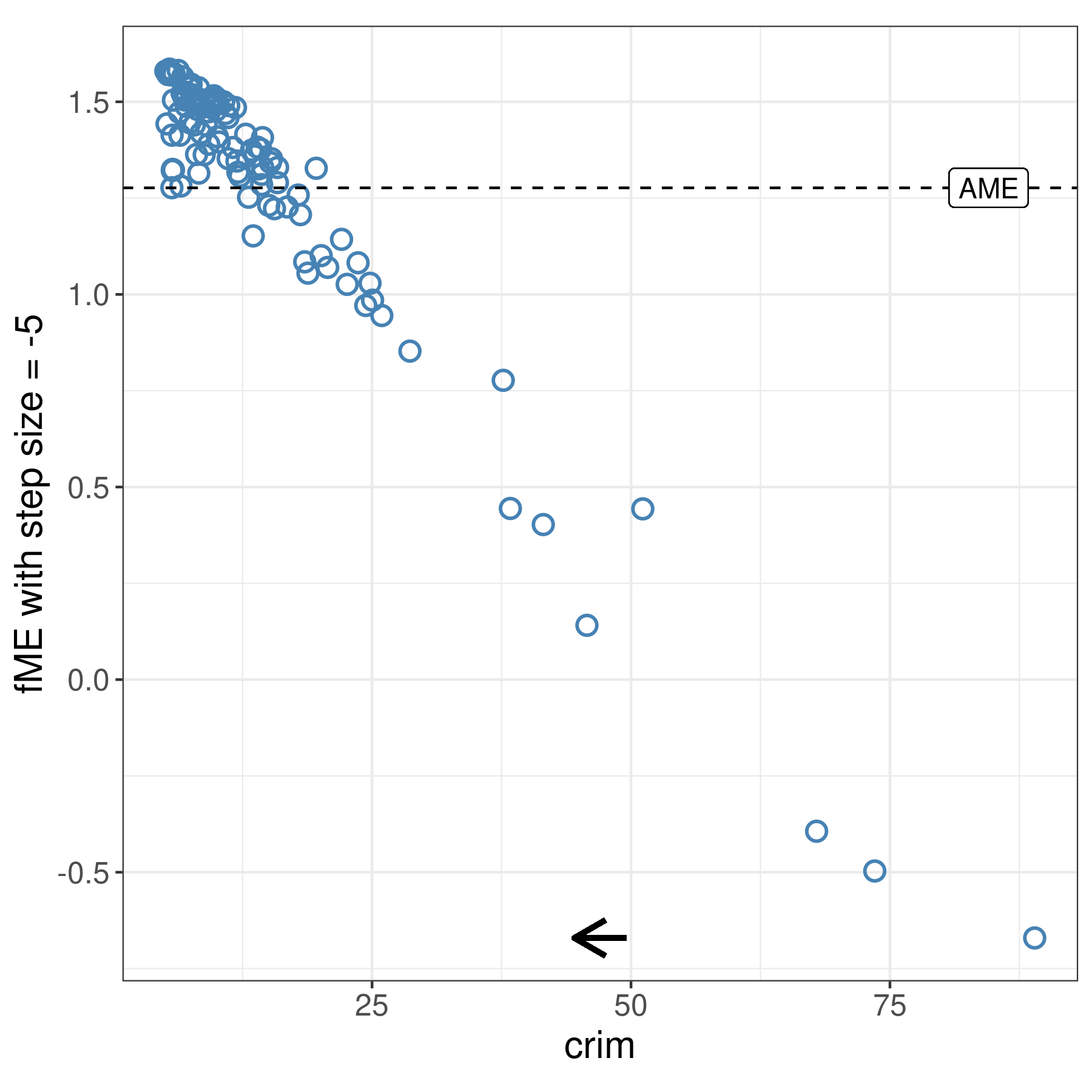}
\includegraphics[width = 0.49\linewidth]{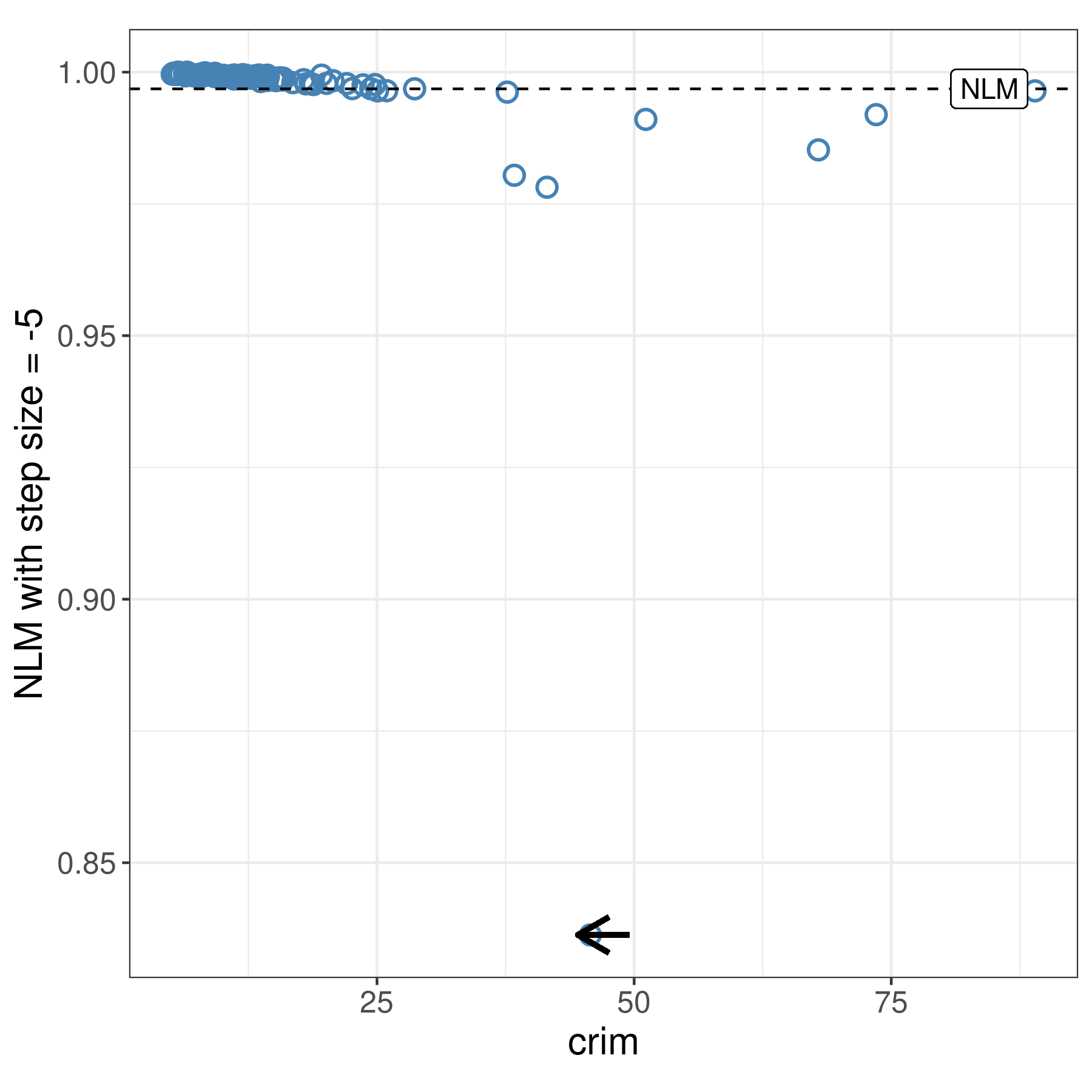}
\caption{\label{fig:boston_housing_fme_nlm_crim}fMEs (left) and NLM (right) for the feature \texttt{crim} and step size $h_{\text{crim}} = -5$.}
\end{subfigure}
\begin{subfigure}[t]{0.8\textwidth}
 \includegraphics[width = 0.49\linewidth]{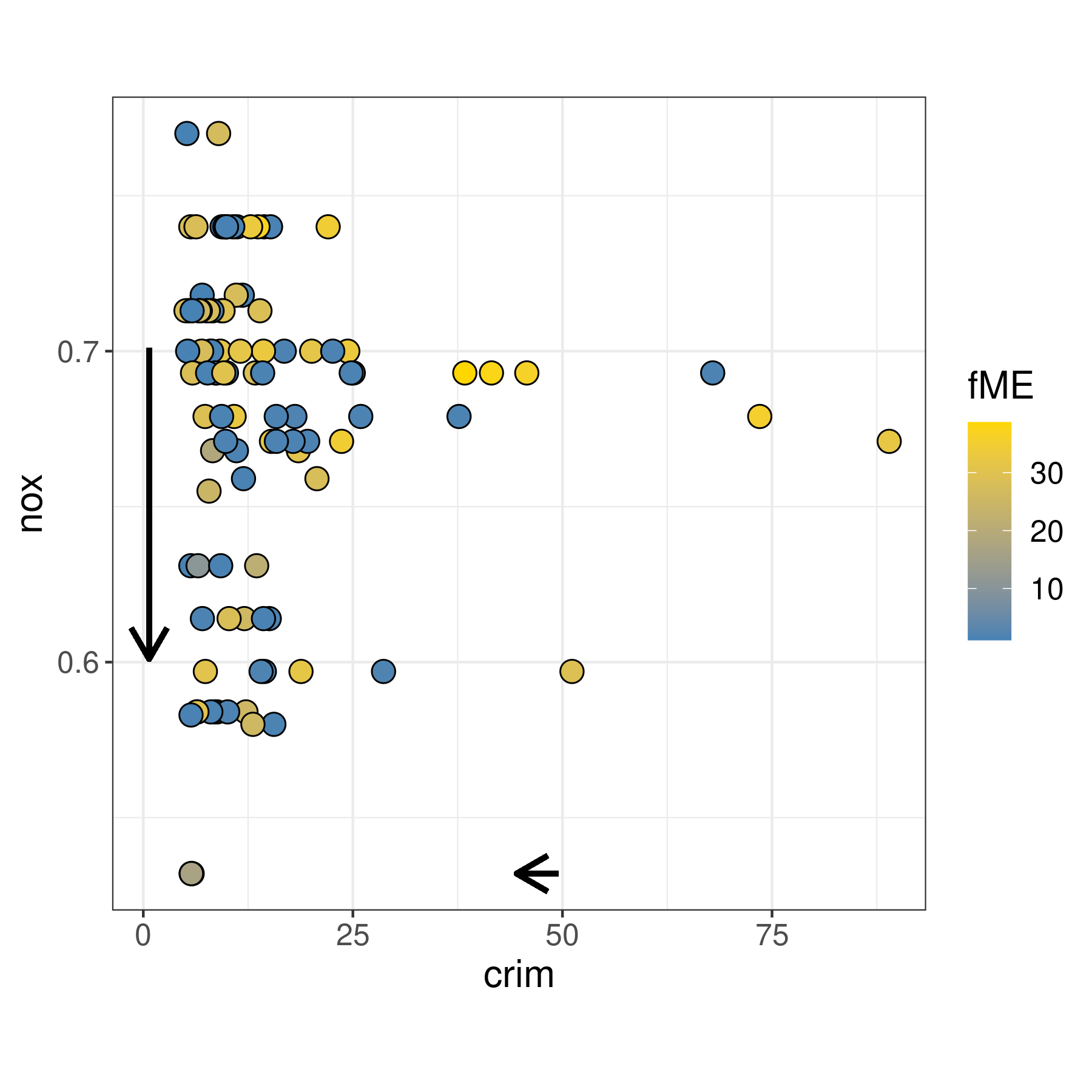}
\includegraphics[width = 0.49\linewidth]{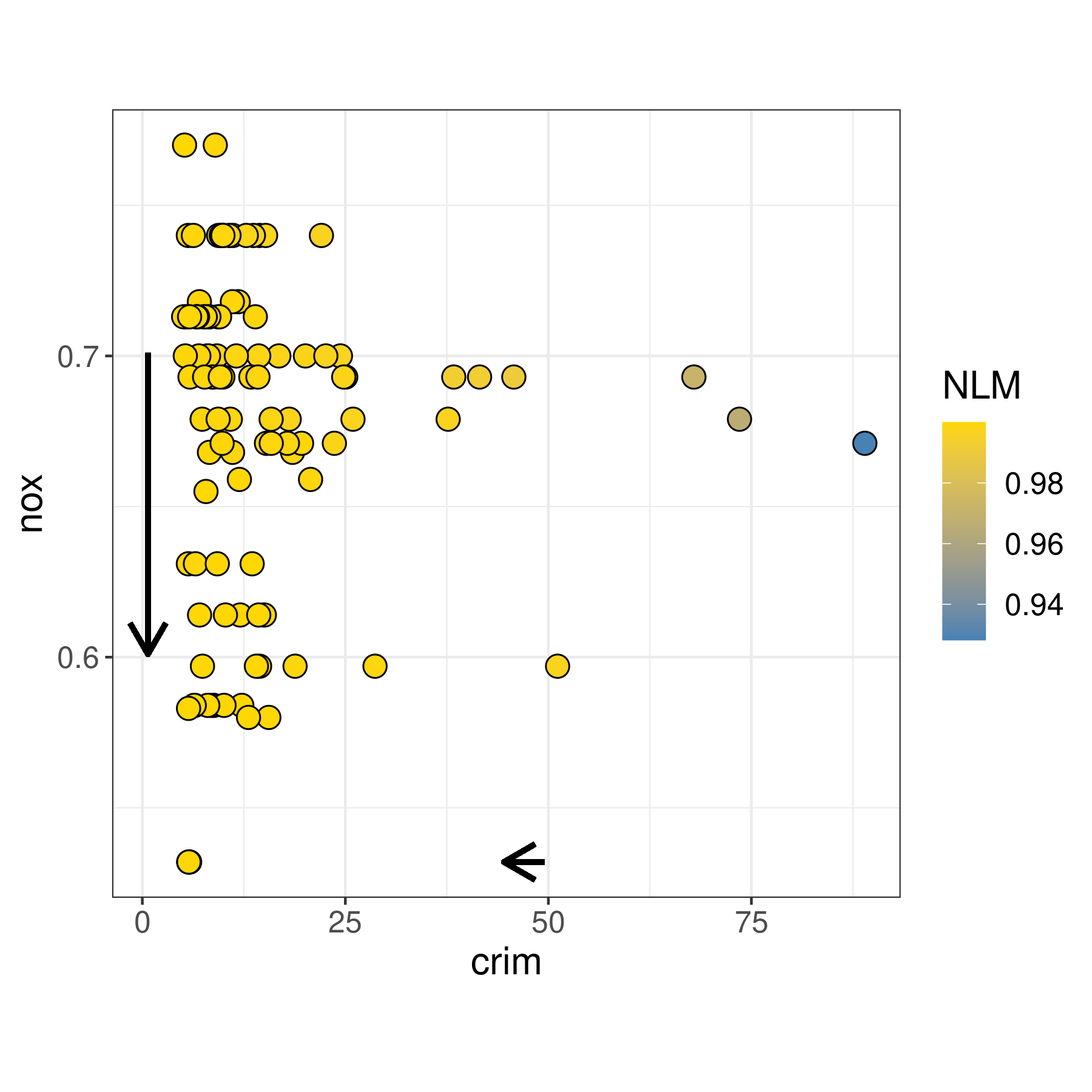}
\caption{\label{fig:boston_housing_fme_nlm_crim_nox}fMEs (left) and NLM (right) for the features \texttt{crim} and \texttt{nox}, and step sizes $h_{\text{crim}} = -5$ and $h_{\text{nox}} = -0.1$.}
\end{subfigure}
\begin{subfigure}[t]{0.8\textwidth}
\includegraphics[width = 0.49\linewidth]{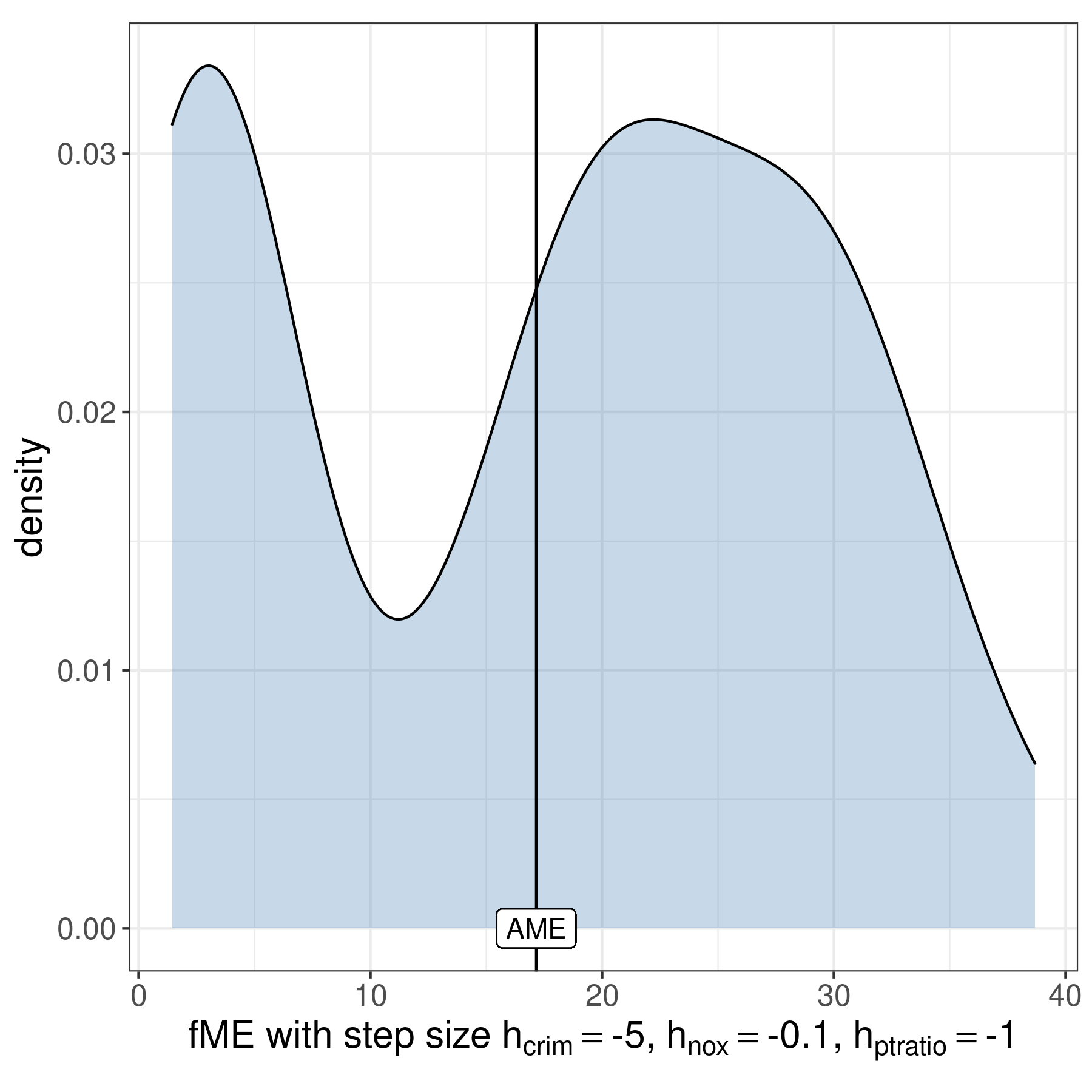}
\includegraphics[width = 0.49\linewidth]{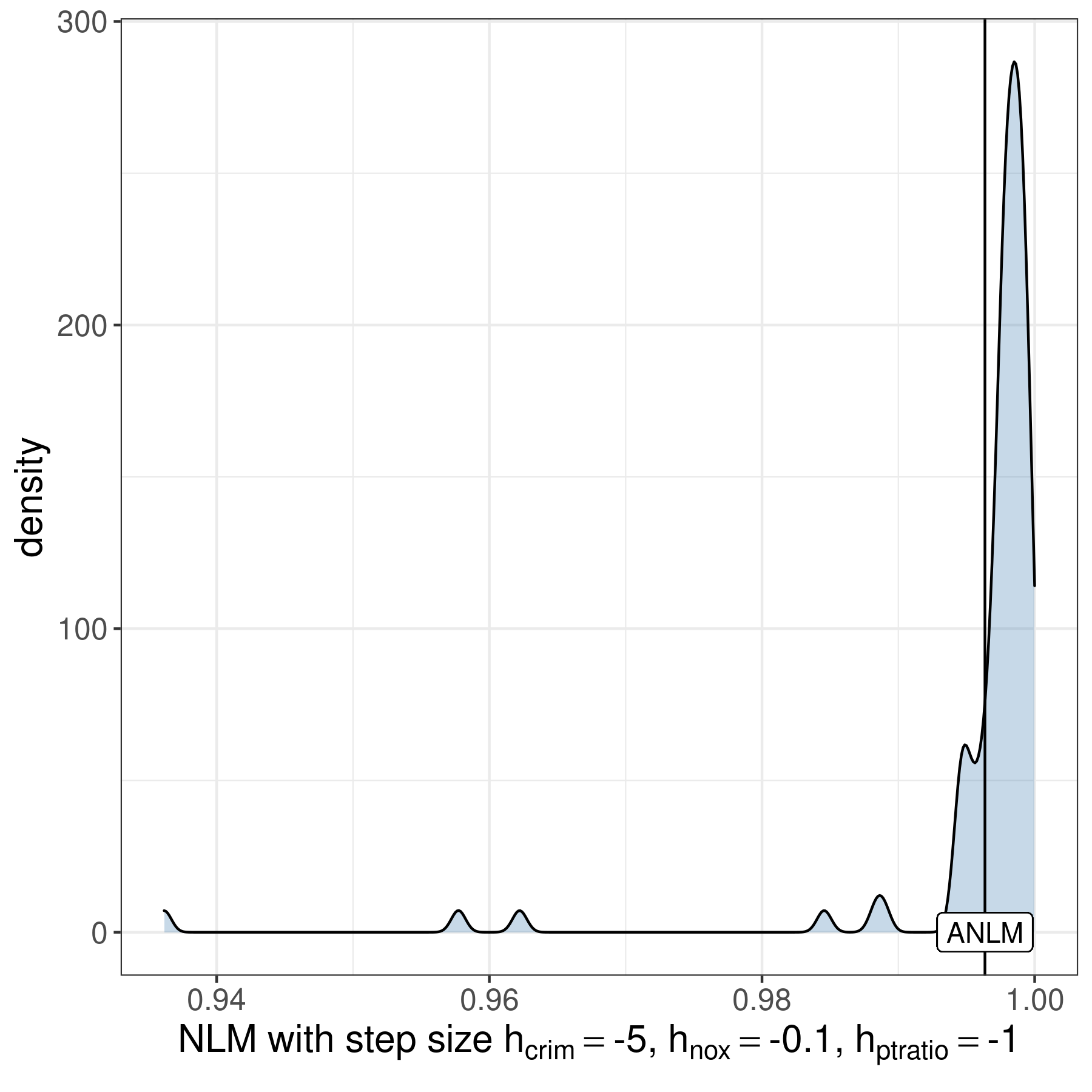}
 \caption{\label{fig:boston_housing_fme_nlm_crim_nox_ptratio}fMEs (left) and NLM (right) for the features \texttt{crim}, \texttt{nox}, and \texttt{ptratio}, and step sizes $h_{\text{crim}} = -5$, $h_{\text{nox}} = -0.1$ and $h_{\text{ptratio}} = -1$.}
\end{subfigure}
\caption{Applied example with Boston housing data.}
\end{figure}

\begin{figure}
\centering
 \includegraphics[width=\linewidth]{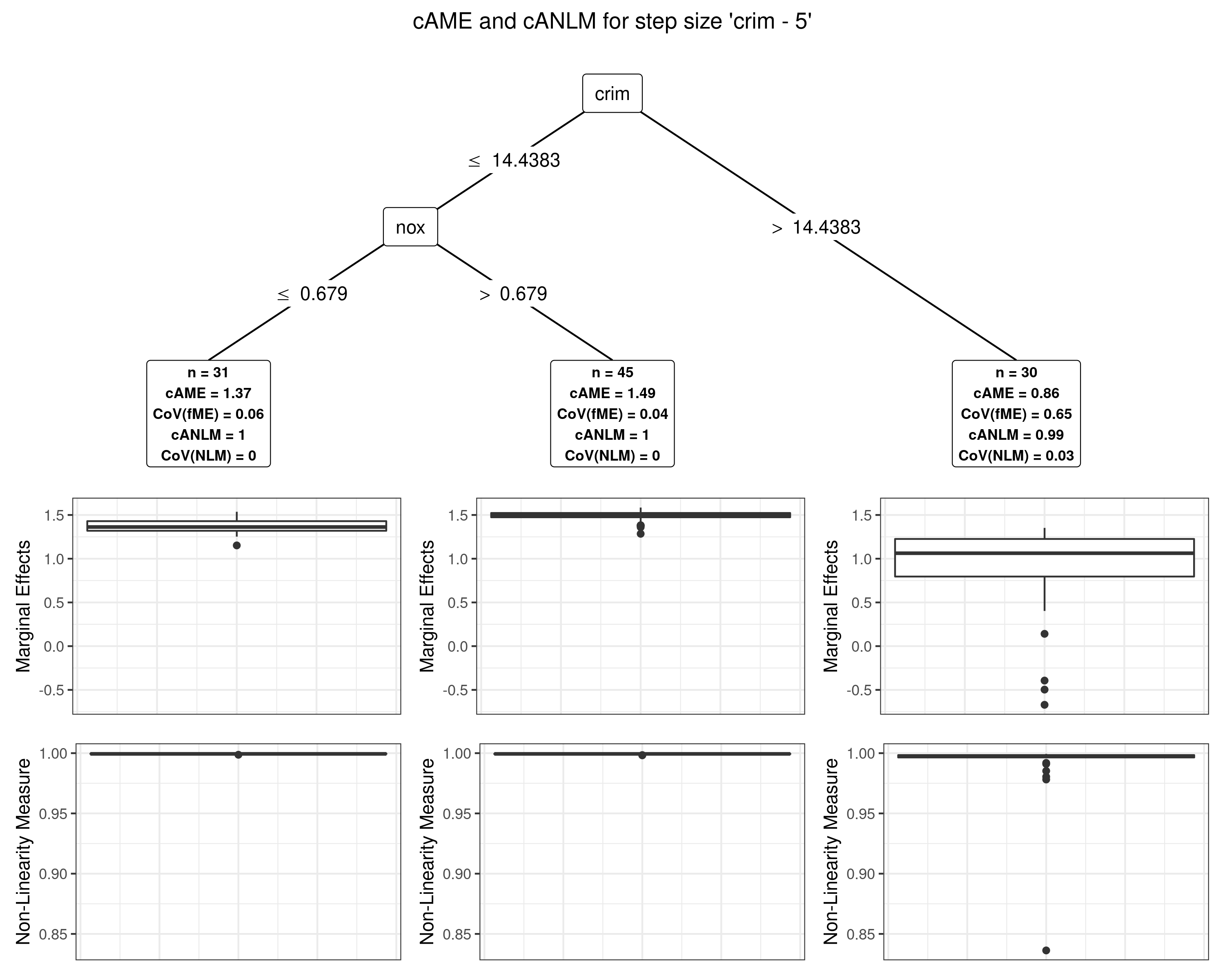}
\caption{\label{fig:boston_housing_cAME_tree}cAME and cANLM values for the step size $h_{\text{crim}} = -5$ on feature subspaces created via CTREE.}
\end{figure}

\section{Conclusion}

This research paper intends to both provide a comprehensive review of the theory of MEs and to refine it for the application to non-linear prediction functions, e.g., in the context of ML applications. We argue to abandon dMEs in favor of using more accurate fMEs. To estimate multivariate feature effects, we extend the existing definition of univariate changes in feature values to multiple dimensions. Furthermore, we introduce an NLM for fMEs, which indicates the similarity between the prediction function and the intersecting linear secant. Due to the complexity and non-linearity of ML models, we suggest to focus on semi-global instead of global feature effects. We propose a means of estimating cMEs via cAMEs through partitioning the feature space. The resulting subspaces are augmented with cANLM values and CIs in order to receive a compact summary of the prediction function across feature subspaces. Furthermore, we propose an observation-wise categorical ME. In the Appendix, we provide proofs on the additive recovery property of fMEs and their relation to the ICE and PD.
\par
Given arbitrary predictive models, this new class of of fMEs can be used to address questions on a model's behavior such as the following: Given pre-specified changes in one or multiple feature values, what is the expected change in predicted outcome? What is the change in prediction for an average observation? What is the change in prediction for a pre-specified observation? What are population subgroups with homogeneous expected effects? What is the degree of non-linearity in these effects? What is our confidence in these estimates? What is the expected change in prediction when switching observed categorical feature values to a reference category?
\par
However, there are certain limitations as well. \citet{molnar_pitfalls} discuss various general pitfalls of model-agnostic interpretation methods, e.g., model extrapolations, estimation uncertainty, or unjustified causal interpretations. Model-agnostic methods in general are favorable tools to explain the model behavior but often fail to explain the underlying DGP, as the quality of the explanations relies on the closeness between model and reality. 
\par
Throughout the manuscript, we noted various directions that may be explored in future work. These include quantifying extrapolation risk for the selection of step sizes, stabilizing the found subspaces for cAMEs, or quantifying the subspaces' uncertainty. The motivation behind an interpretation fundamentally determines whether a method is used as an effect or importance method. An effect can also be considered an importance, i.e., features with larger effects on the predicted outcome are considered more important to the model. We formulated fMEs as a method to determine feature effects. However, finite differences are already widely used for importance computations in SA (see Section \ref{sec:notation_background}), e.g., the elementary effects / Morris method, or derivative-based global sensitivity measures. One could think of ways to utilize fMEs for importance computations as well. For instance, although we formulated the cME conditional on a feature subspace as a semi-global effect, it can also be interpreted as a semi-global feature importance.
\par
Many disciplines that have been relying on traditional statistical models are starting to utilize the predictive power of modern ML models. These disciplines are used to interpretations in terms of MEs, the AME, MEM, or MER. With this research paper, we wish to bridge the gap between the established and restricted method of using MEs with traditional statistical models and the more flexible and capable approach of interpreting modern ML models with this new class of fMEs.

\newpage
\appendix
\section{Background Information}
\label{app:background}
\subsection{Decomposition of the Prediction Function}
\label{app:decomposition}
The prediction function to be analyzed may be very complex or even a black box. However, there are multiple ways to decompose the prediction function into a sum of components of increasing order. Although the goal of MEs is not to decompose the prediction function, it is convenient to either regard the prediction function as an additive decomposition or to keep in mind that it may be decomposed into one. An additive decomposition of the prediction function has the following general form:
\begin{equation}
\widehat{f}(\boldX) = g_{\{0\}} + g_{\{1\}}(X_1) + g_{\{2\}}(X_2) + \dots + g_{\{1, 2\}}(X_1, X_2) + \dots + g_{\{1, \ldots, p\}}(\boldX) \label{eq:fanova}
\end{equation}
In SA, the additive decomposition is typically referred to as a high-dimensional model representation (HDMR) or ANOVA-HDMR \citep{saltelli_sa}. In IML, it often is called the functional ANOVA decomposition \citep{hooker_fanova}.
Various approaches exist to estimate Eq. (\ref{eq:fanova}) or a truncated variant, e.g., the PD \citep{friedman_pdp}, random sampling HDMR \citep{li_rs_hdmr_2006}, or accumulated local effects \citep{apley_ale}. Further assumptions are needed to make the decomposition unique, e.g., feature independence \citep{chastaing_hdmr_dependent}.
\par
For instance, we may recursively estimate Eq. (\ref{eq:fanova}) as follows:
\begin{align*}
 g_{\{0\}} &= \mathbb{E}_{\boldX}\left[\widehat{f}(\boldX)\right] \\
 g_{\{1\}}(X_1) &= \mathbb{E}_{X_{-1}}\left[\widehat{f}(\boldX) \; \vert  \; X_1 \right] - g_{\{0\}} \\
 g_{\{2\}}(X_2) &= \mathbb{E}_{X_{-2}}\left[\widehat{f}(\boldX) \; \vert  \; X_2 \right] - g_{\{0\}} \\
 g_{\{1, 2\}}(X_1, X_2) &= \mathbb{E}_{\bm{X}_{-\{1,2\}}}\left[\widehat{f}(\boldX) \; \vert \; X_1, X_2 \right] - g_{\{2\}}(X_2) - g_{\{1\}}(X_1) - g_{\{0\}}\\
 &\vdots \\
 g_{\{1, \dots, p\}}(\boldX) &= \widehat{f}(\boldX) - \dots - g_{\{1, 2\}}(X_1, X_2) - g_{\{2\}}(X_2) - g_{\{1\}}(X_1) - g_{\{0\}}\\
\end{align*}
where $\mathbb{E}_{\bm{X}_{-S}}\left[\widehat{f}(\boldX) \; \vert \; \boldX_S \right]$ corresponds to the minimum L2 loss approximation of $\widehat{f}(\boldX)$ given only the features $\boldX_S$ \citep{saltelli_sa} and is typically referred to as the PD in ML terms.

\subsection{Model Extrapolation}
\label{app:extrapolation}

\citet{king_extrapolation} define extrapolation as predicting outside of the convex hull of the training data. The authors demonstrate that the task of determining whether a point is located inside of the convex hull can be efficiently solved using linear programming. However, the convex hull may be comprised of many empty areas without training observations, especially in the case of correlated and high-dimensional data. Therefore, it seems plausible to define model extrapolation differently, e.g., as predictions in areas of the feature space with a low density of training points. \citet{hooker_CERT} summarizes two main predicaments of model extrapolations. First, the model creates predictions which do not accurately reflect the target distribution given the features. Second, the predictions are subject to a high variance. Many model-agnostic techniques are subject to model extrapolation risks \citep{molnar_pitfalls}. \citet{hooker_generalizedfanova} warns against model extrapolations when applying the functional ANOVA decomposition. \citet{hooker_importance} urge not to permute feature values to compute feature importance values. It is important to note that this issue highly depends on the behavior of the chosen model. The issue of determining whether the model extrapolates essentially boils down to quantifying the prediction uncertainty. Some models might diverge considerably from a scenario where they would have been supplied with enough training data (high prediction uncertainty), while other models might be relatively robust against such issues (low prediction uncertainty). 
Although MEs based on model extrapolations are still correct in terms of the model output, they might not represent any underlying DGP in an accurate way. Therefore, it is important to take into account (and preferably avoid) potential model extrapolations when selecting feature values to compute fMEs.
\par
For some models, built-in measures exist to quantify the prediction uncertainty, e.g., the proximity measure for tree ensembles which counts how often a pair of points is located in the same leaf node for all trees of the ensemble \citep{hastie_elemstatlearn}. The same can be done for the pairwise proximity between points in the training and the test set. For instance, given $n$ training observations and a test observation $\boldx$, we can create an ($n \times 1$) vector of proximities which can be used to detect model extrapolations. However, it is desirable to detect model extrapolations via auxiliary extrapolation risk metrics (AERM) which are independent of the trained model.
Detecting an EP is similar in concept to the detection of outliers. Although a unified definition of outliers does not exist, they are generally considered to differ as much from other observations as to suspect they were generated by a different mechanism \citep{hawkins_outliers}. We can therefore consider an outlier to be drawn from a different distribution than the training data (and one that does not overlap with it), which suits our definition of EPs.
In clustering, outliers are often found using local density-based outlier scores such as local outlier probabilities (LOP) \citep{kriegel_lop}. Based on the nearest data points, LOP provides an interpretable score on the scale $[0, 1]$, indicating the probability of a point being an outlier. However, clustering techniques such as LOP are often based on the assumption that the data exhibits a structure of clusters or on assumptions about the clusters' distributions. In theory, one could use various other outlier detection (also referred to as anomaly detection) mechanisms for extrapolation detection, e.g., isolation forests \citep{liu_isolationforest}.
\par
\citet{hooker_CERT} proposes a statistical test to classify a point as an EP or non-EP. It tests whether a point was more likely to be drawn from the data distribution (non-EP) or the uniform distribution (EP). The uniform distribution is used as an uninformative baseline distribution. The AERM $R(\boldx)$ corresponds to:
\begin{equation}
    R(\boldx) = \frac{U(\boldx)}{U(\boldx) + P(\boldx)}
    \label{eq:extrapolation_risk}
\end{equation}
with $U(\boldx)$ being the density function of the uniform distribution and $P(\boldx)$ the density function of the data distribution. $R(\boldx)$ has a range of $[0, 1]$ with 0 indicating the lowest and 1 the highest extrapolation risk. $R(\boldx) > 0.5$  indicates extrapolation. As the support of $U(\boldx)$ we may either choose the recommendations of an application domain expert or the observed feature ranges. Eq. (\ref{eq:extrapolation_risk}) cannot be directly computed, as the density of the training data is unknown. If $\boldx$ falls outside the multivariate envelope of the training data, it is plausible to set $R(\boldx)$ to 1.
\par
We may estimate Eq. (\ref{eq:extrapolation_risk}) by  creating a binary classification problem on a dataset augmented with uniform Monte-Carlo samples \citep{hooker_CERT}. The training data is labeled as the foreground class. Next, artificial data points are sampled from a uniform distribution and labeled as the background class. A predictive model is trained on the augmented dataset and predicts for a given point whether it is more probable that it was drawn from the data distribution or the uniform distribution. We term this approach Monte-Carlo extrapolation classification (MCEC). Fig. \ref{fig:cert} visualizes the concept behind MCEC for a classification tree. Consider two independent standard normally distributed features. We augment the training data with a uniform Monte-Carlo sample with support $[min(x_1), max(x_1)] \times [min(x_2) \times max(x_2)]$ and use CART to partition the feature space into extrapolation areas and non-extrapolation areas. Some training points are located outside of the center rectangles in a low-density end of the bivariate normal distribution. As such, it is correct to be cautious when evaluating predictions in this area, even if a point was drawn from the training data.
\par
\citet{hooker_CERT} argues that in high-dimensional settings, the Monte-Carlo sample will leave lots of areas of the feature space unoccupied which results in poor classification performance. Classification performance may be boosted by directly utilizing distributional information about the uniform distribution instead of a Monte-Carlo sample. This technique termed confidence and extrapolation representation trees (CERT) exploits a property of classification trees which lets one replace the number of Monte-Carlo points per subspace with the expected number of uniform points at each split. Given the feature space $\mathcal{X}$ with $n$ observations and a subspace $\mathcal{X}_{[\;]}$, with $n_{[\;], \text{data}}$ observations, the expected number of uniform points on the subspace $n_{[\;], \;\text{uniform}}$ is proportional to the fraction of feature space hypervolume the subspace occupies:
\begin{equation}
\label{eq:cert_hypervolume}
n_{[\;], \;\text{uniform}} = \frac{\text{hypervolume}(\mathcal{X}_{[\;]})}{\text{hypervolume}(\mathcal{X})} \cdot n_{[\;], \;\text{data}}
\end{equation}
For the tree growing and pruning strategy, CERT uses a mixture of both CART (e.g., splitting based on the Gini index) and C4.5 \citep{quinlan_c4_5} (e.g., missing values and surrogate splits). Apart from letting us directly supply the classification tree with distributional information instead of data, its interpretability is advantageous. The tree partitions the entire feature space at once into hyperrectangles that indicate extrapolation or non-extrapolation areas. \citet{hooker_CERT} argues that CERT provides a markedly lower misclassification rate as opposed to using MCEC with a classification tree. However, it is unclear whether this advantage holds for other classification algorithms.

\begin{figure}
\centering
\includegraphics[width=0.6\textwidth]{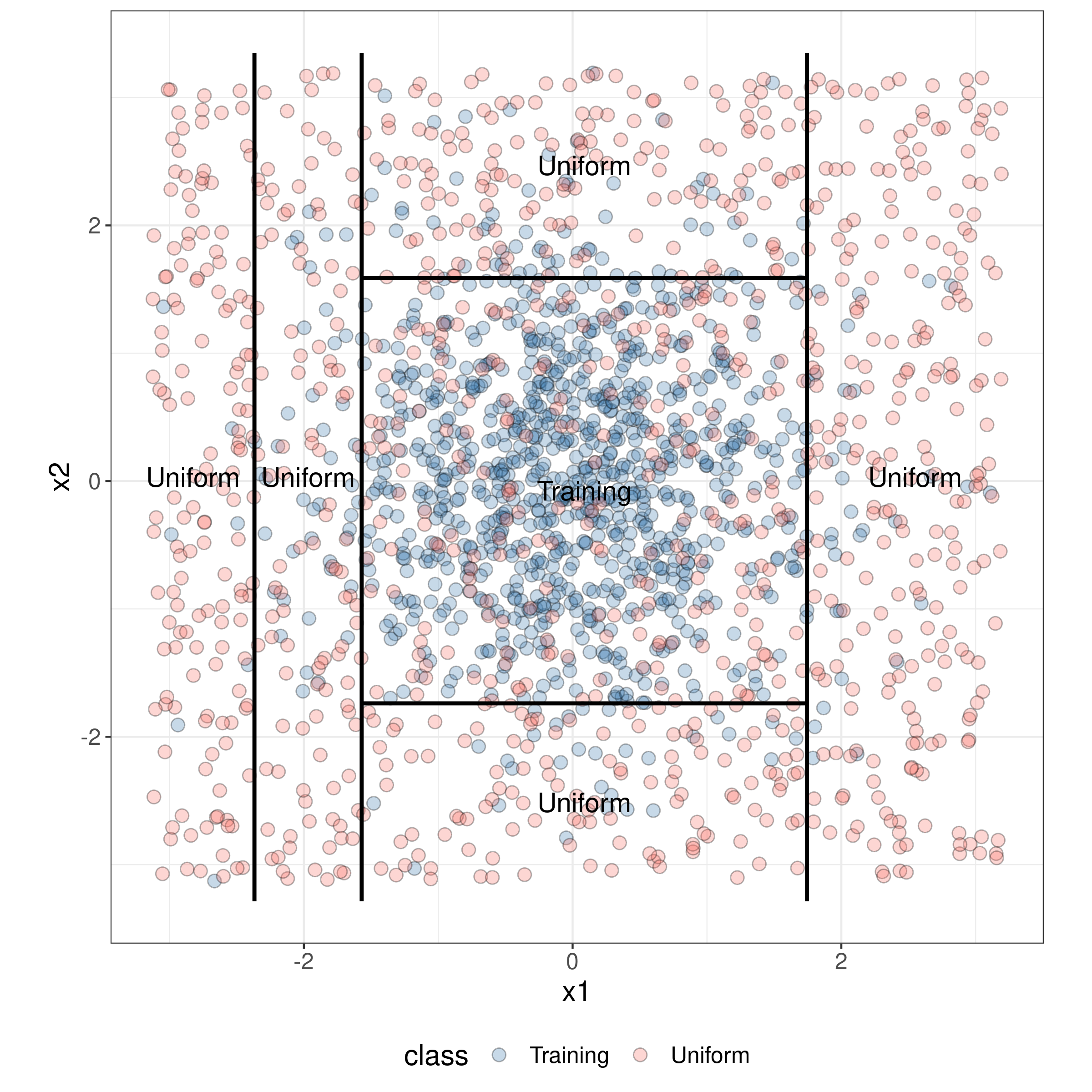}
\caption{\label{fig:cert} \small We augment the training data (blue) with uniform points (red). A classification tree partitions the feature space into non-extrapolation areas (predominantly occupied with training observations) and extrapolation areas (predominantly occupied with uniform Monte-Carlo samples).}
\end{figure}

\section{Proofs}

\subsection{Additive Recovery}
\label{app:additiverecovery}
We provide several proofs on additive recovery based on a prediction function in additive form. Any prediction function can be decomposed into a sum of effect terms of various orders (see Appendix \ref{app:decomposition}).
The sum of effect terms of a feature set $K$ is denoted by $\Theta_{K}(\bm{x}_K)$. For notational simplicity, the union $\{j\} \cup K$ of the $j$-th feature index and the index set $K$ is denoted by $\{j, K\}$. The sum of effect terms is denoted by $\Theta_{\{j, K\}}(x_j, \bm{x}_K)$.

\begin{theorem}[Additive Recovery of Finite Difference]
\label{theorem:additive_recovery}
An FD w.r.t. $x_j$ only recovers terms that depend on $x_j$ and no terms that exclusively depend on $\boldxMinusj$.
\end{theorem}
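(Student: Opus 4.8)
The plan is to write $\widehat{f}$ in its additive (functional ANOVA / HDMR) form, partition the effect terms into those that involve $x_j$ and those that do not, and then observe that the finite difference annihilates the latter. Concretely, I would group the decomposition in Eq. (\ref{eq:fanova}) as
\begin{equation*}
\widehat{f}(\boldx) = \Theta_{\{-j\}}(\boldxMinusj) + \sum_{K \subseteq \{1,\dots,p\}\setminus\{j\}} \Theta_{\{j, K\}}(x_j, \bm{x}_K),
\end{equation*}
where $\Theta_{\{-j\}}(\boldxMinusj)$ collects the intercept $g_{\{0\}}$ together with all effect terms whose index set avoids $j$, and the sum runs over all terms whose index set contains $j$ (written as $\{j,K\}$ with $K \not\ni j$). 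This regrouping is just a relabeling of the finitely many summands in the decomposition, so it is immediate.

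Next I would substitute this representation into the definition of the FD w.r.t. $x_j$, namely $FD_{j,\boldx,a,b} = \widehat{f}(x_j + a, \boldxMinusj) - \widehat{f}(x_j + b, \boldxMinusj)$. Since the increments $a,b$ are added only to the $j$-th coordinate, every term $\Theta_{\{j,K\}}$ is evaluated once at $(x_j+a, \bm{x}_K)$ and once at $(x_j+b, \bm{x}_K)$, while the term $\Theta_{\{-j\}}(\boldxMinusj)$ is identical in both predictions. Hence
\begin{equation*}
FD_{j,\boldx,a,b} = \sum_{K \subseteq \{1,\dots,p\}\setminus\{j\}} \left[ \Theta_{\{j,K\}}(x_j + a, \bm{x}_K) - \Theta_{\{j,K\}}(x_j + b, \bm{x}_K) \right],
\end{equation*}
so the $\boldxMinusj$-only part has cancelled and what remains is a sum over terms each of which genuinely depends on $x_j$ (and possibly on complementary features through interactions, but that is allowed by the statement). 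Specializing to $b = 0$ recovers the forward-difference version used elsewhere in the paper.

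The argument is essentially a one-line cancellation once the bookkeeping is set up, so there is no deep obstacle; the only thing requiring care is the notational setup — making precise what "depends on $x_j$" means in terms of the index sets of the decomposition, and being explicit that the decomposition has finitely many terms so that splitting and reindexing the sum is harmless. I would also remark that the statement does not claim the surviving terms depend \emph{only} on $x_j$: interaction terms $\Theta_{\{j,K\}}$ with $K \neq \emptyset$ are retained, which is exactly the additive recovery property as stated (and as needed for the multivariate generalization in Appendix \ref{app:additiverecovery}). No assumptions beyond the existence of an additive decomposition (Appendix \ref{app:decomposition}) are used.
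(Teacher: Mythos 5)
Your proposal is correct and follows essentially the same route as the paper's proof: both write $\widehat{f}$ in additive form, separate the terms involving $x_j$ from those that depend only on $\boldxMinusj$, and observe that the latter cancel in the finite difference while the main effect and interaction terms involving $x_j$ survive. The only cosmetic difference is that you index the surviving terms explicitly as a sum over subsets $K \not\ni j$, whereas the paper collects them into a single main-effect term $g_{\{j\}}(x_j)$ plus one aggregate interaction term $\Theta_{\{j,K\}}(x_j, \bm{x}_K)$.
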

\begin{proof}
Consider a prediction function $\widehat{f}$ that consists of a sum, including the main effect of $x_j$, denoted by $g_{\{j\}}(x_j)$, a sum of higher order terms (interactions) between $x_j$ and other features $\bm{x}_K$, denoted by $\Theta_{\{j, K\}}(x_j, \bm{x}_K)$, and terms that depend on the remaining features $x_{-\{j, K\}}$, denoted by 
$\Theta_{-\{j, K\}}(\bm{x}_{-\{j, K\}})$:
\begin{equation*}
    \widehat{f}(\boldx) = g_{\{j\}}(x_j) + \Theta_{\{j, K\}}(x_j, \bm{x}_{K}) + \Theta_{-\{j, K\}}(\bm{x}_{-\{j, K\}})
\end{equation*}
It follows that the FD of predictions corresponds to a function that only depends on $x_j$, i.e., it locally recovers the relevant terms on the interval $[x_j + a, x_j + b]$.
\begin{align*}
FD_{j, \boldx, a, b} &= \widehat{f}(x_1, \dots, x_j + a, \dots, x_p) - \widehat{f}(x_1, \dots, x_j + b, \dots, x_p) \\
&= \left[g_{\{j\}}(x_j + a) \; + \; \Theta_{\{j, K\}}(x_j + a, \bm{x}_K) \; + \; \Theta_{-\{j, K\}}(\bm{x}_{-\{j, K\}}) \right]\\
&\phantom{{}={}} - \left[g_{\{j\}}(x_j + b) \; + \; \Theta_{\{j, K\}}(x_j + b, \bm{x}_K) \; + \; \Theta_{-\{j, K\}}(\bm{x}_{-\{j, K\}}) \right]\\
&= g_{\{j\}}(x_j + a) - g_{\{j\}}(x_j + b) + \Theta_{\{j, K\}}(x_j + a, \bm{x}_K) - \Theta_{\{j, K\}}(x_j + b, \bm{x}_K)
\end{align*}
\end{proof}

\begin{corollary}[Additive Recovery of Univariate Forward Marginal Effect]
\label{cor:additive_recovery_marginal_effect}
The univariate fME w.r.t. $x_j$ only recovers terms that depend on $x_j$ and no terms that exclusively depend on $\boldxMinusj$.
\end{corollary}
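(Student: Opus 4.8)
The plan is to obtain Corollary \ref{cor:additive_recovery_marginal_effect} as an immediate specialization of Theorem \ref{theorem:additive_recovery}. Recall the univariate fME from Eq. (\ref{eq:univariate_fME}),
\[
\text{fME}_{\boldx, h_j} = \widehat{f}(x_j + h_j, \boldxMinusj) - \widehat{f}(\boldx),
\]
which is precisely the forward difference $FD_{j, \boldx, a, b}$ with the choice $a = h_j$ and $b = 0$. First I would observe that Theorem \ref{theorem:additive_recovery} was stated and proved for arbitrary offsets $a, b$, so in particular it applies verbatim to the pair $(a, b) = (h_j, 0)$.

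The key step is then simply to substitute $a = h_j$ and $b = 0$ into the final line of the proof of Theorem \ref{theorem:additive_recovery}. Starting from the additive representation
\[
\widehat{f}(\boldx) = g_{\{j\}}(x_j) + \Theta_{\{j, K\}}(x_j, \bm{x}_{K}) + \Theta_{-\{j, K\}}(\bm{x}_{-\{j, K\}}),
\]
the forward difference collapses to
\[
\text{fME}_{\boldx, h_j} = g_{\{j\}}(x_j + h_j) - g_{\{j\}}(x_j) + \Theta_{\{j, K\}}(x_j + h_j, \bm{x}_K) - \Theta_{\{j, K\}}(x_j, \bm{x}_K),
\]
since the term $\Theta_{-\{j, K\}}(\bm{x}_{-\{j, K\}})$, which does not involve $x_j$, appears with opposite signs in the two predictions and cancels. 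The resulting expression depends only on $x_j$ (and the fixed values $\bm{x}_K$ entering the interaction terms), and contains no term that depends exclusively on $\boldxMinusj$; this is exactly the claim of the corollary.

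I do not expect a genuine obstacle here: the corollary is a direct instantiation of an already-established theorem, and the only thing to be careful about is bookkeeping — namely, making explicit that every term in the decomposition that is free of $x_j$ is cancelled by the subtraction, while the main effect $g_{\{j\}}$ and the mixed interaction terms $\Theta_{\{j, K\}}$ survive (the latter still carrying the frozen companion features $\bm{x}_K$, but crucially \emph{not} in a form that depends on $\boldxMinusj$ alone). If anything warrants a sentence of care, it is emphasizing that ``recovering terms that depend on $x_j$'' is meant locally, i.e., we recover the increments $g_{\{j\}}(x_j + h_j) - g_{\{j\}}(x_j)$ rather than the function $g_{\{j\}}$ itself — consistent with the phrasing used for the finite difference in Theorem \ref{theorem:additive_recovery}.
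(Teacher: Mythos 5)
Your proposal is correct and matches the paper's own proof: both treat the corollary as a direct instantiation of Theorem \ref{theorem:additive_recovery} with $a = h_j$ and $b = 0$, arriving at the same expression $g_{\{j\}}(x_j + h_j) - g_{\{j\}}(x_j) + \Theta_{\{j, K\}}(x_j + h_j, \bm{x}_K) - \Theta_{\{j, K\}}(x_j, \bm{x}_K)$. Your added remark about the cancellation of the $x_j$-free terms and the local nature of the recovery is consistent with the paper's argument and adds only harmless explicit bookkeeping.
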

\begin{proof}
Consider a prediction function $\widehat{f}$ that consists of a sum, including the main effect of $x_j$, denoted by $g_{\{j\}}(x_j)$, a sum of higher order terms (interactions) between $x_j$ and other features $\bm{x}_K$, denoted by $\Theta_{\{j, K\}}(x_j, \bm{x}_K)$, and terms that depend on the remaining features $\bm{x}_{-\{j, K\}}$, denoted by 
$\Theta_{-\{j, K\}}(\bm{x}_{-\{j, K\}})$:
\begin{equation*}
    \widehat{f}(\boldx) = g_{\{j\}}\left(x_j\right) + \Theta_{\{j, K\}}\left(x_j, \bm{x}_{K}\right) + \Theta_{-\{j, K\}}\left(\bm{x}_{-\{j, K\}}\right)
\end{equation*}
The FD w.r.t. $x_j$ is equivalent to the fME w.r.t. $x_j$ with $a = h_j$ and $b = 0$. 
Using Theorem \ref{theorem:additive_recovery}, it follows that:
\begin{align*}
\text{fME}_{\boldx, h_j} &= g_{\{j\}}\left(x_j + h_j\right) - g_{\{j\}}\left(x_j\right) + \Theta_{\{j, \, K\}}\left(x_j + h_j, \bm{x}_{K}\right) - \Theta_{\{j, \, K\}}\left(x_j, \bm{x}_{K}\right)
\end{align*}
\end{proof}

\begin{theorem}[Additive Recovery of Multivariate Forward Marginal Effect]
\label{thm:additive_recovery_multivariate_marginal_effect}
The multivariate fME w.r.t. $\boldxS$ only recovers terms that depend on $\boldxS$ and no terms that exclusively depend on $\boldxMinusS$.
\end{theorem}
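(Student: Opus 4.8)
The plan is to mimic the structure of Corollary \ref{cor:additive_recovery_marginal_effect}, replacing the single feature index $j$ by the index set $S$. First I would invoke the additive decomposition of $\widehat{f}$ from Appendix \ref{app:decomposition} and regroup its effect terms according to whether their index set meets $S$. Concretely, let $K$ denote the union of all features outside $S$ that appear in some decomposition term together with at least one feature of $S$. Then every term that depends on at least one feature of $S$ is a function of $(\boldxS, \bm{x}_K)$ only, and every remaining term is a function of the features indexed by $-\{S, K\} \subseteq -S$ only. This yields
\begin{equation*}
\widehat{f}(\boldx) = \Theta_{S}(\boldxS) + \Theta_{\{S, K\}}(\boldxS, \bm{x}_{K}) + \Theta_{-\{S, K\}}(\bm{x}_{-\{S, K\}}),
\end{equation*}
where $\Theta_S$ collects the pure-$S$ terms (including the main effects of the features in $S$ and interactions among them), $\Theta_{\{S, K\}}$ collects the mixed interaction terms, and $\Theta_{-\{S, K\}}$ collects the terms — including the intercept $g_{\{0\}}$ — that involve no feature of $S$.

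Next I would substitute this decomposition into the definition of the multivariate fME in Eq. (\ref{eq:multivariate_fME}), i.e. evaluate $\widehat{f}(\boldxS + \boldhS, \boldxMinusS) - \widehat{f}(\boldxS, \boldxMinusS)$. Since the intervention leaves $\boldxMinusS$ (and hence $\bm{x}_{-\{S, K\}}$) unchanged, the term $\Theta_{-\{S, K\}}(\bm{x}_{-\{S, K\}})$ appears identically on both sides and cancels, leaving
\begin{equation*}
\text{fME}_{\boldx, \boldhS} = \Theta_{S}(\boldxS + \boldhS) - \Theta_{S}(\boldxS) + \Theta_{\{S, K\}}(\boldxS + \boldhS, \bm{x}_{K}) - \Theta_{\{S, K\}}(\boldxS, \bm{x}_{K}).
\end{equation*}
Every surviving summand depends on $\boldxS$ (jointly with $\boldhS$, and in the second pair also with the interacting features $\bm{x}_K$), and no summand depends exclusively on $\boldxMinusS$, which is exactly the claim. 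As a consistency check, one can also reach this by writing the multivariate change as a telescoping sequence of univariate coordinate changes along $x_1, \dots, x_s$ and applying Theorem \ref{theorem:additive_recovery} at each step: any term depending exclusively on $\boldxMinusS$ is "complementary" at every step and therefore cancels term by term.

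The argument is essentially bookkeeping, so the only real care point — the place I expect to spend effort on rigor — is the regrouping step: one must argue that the three-way split (pure-$S$ / mixed / pure-complement) is exhaustive and that the pure-complement bucket genuinely carries no dependence on $\boldxS$. This follows from the clean dichotomy that any decomposition term either contains an index from $S$ or does not, together with the definition of $K$ as the complete set of complement features that co-occur with $S$-features; but it should be stated explicitly, since the cancellation hinges on it. Note that no uniqueness or orthogonality assumptions on the decomposition are required for the recovery statement — only that $\widehat{f}$ admits such an additive form.
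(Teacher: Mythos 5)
Your argument is correct and rests on the same key idea as the paper's proof -- the terms not involving $\boldxS$ are untouched by the step $\boldhS$ and cancel in the forward difference -- but the bookkeeping is organized differently. You mirror Corollary \ref{cor:additive_recovery_marginal_effect}, splitting the decomposition into pure-$S$ terms, mixed terms $\Theta_{\{S,K\}}(\boldxS,\bm{x}_K)$, and a complement bucket, which has the virtue of exhibiting explicitly how interactions between features in $S$ and features outside $S$ survive in the recovered expression. The paper instead writes $\widehat{f}(\boldx) = \sum_{K \in \mathcal{P}^{*}} g_K(\bm{x}_K) + \Theta_{-S}(\boldxMinusS)$ with $\mathcal{P}^{*}$ the non-empty subsets of $S$, so the cancellation is a one-line computation, at the price of not displaying the mixed $S$--$(-S)$ interaction terms separately; your version is closer in spirit to the univariate corollary and makes the role of those interactions visible, while the paper's version buys brevity. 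Your telescoping remark (reducing to Theorem \ref{theorem:additive_recovery} one coordinate at a time) is a valid alternative route as well. One small wrinkle to fix in your regrouping: it is not true that every term involving no feature of $S$ depends only on $\bm{x}_{-\{S,K\}}$ -- a main effect of some $x_k$ with $k \in K$, or an interaction purely among features of $K$, is such a term and depends on $\bm{x}_K$. The remedy is simply to let the third bucket be $\Theta_{-S}(\boldxMinusS)$, a function of \emph{all} complement features; the cancellation only needs that this bucket is free of $\boldxS$, so the conclusion is unaffected, but the exhaustiveness claim should be stated in that weaker, correct form.
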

\begin{proof}
Consider a feature set $S$. The power set of $S$ excluding the empty set is denoted by $\mathcal{P}^{*} = \mathcal{P}(S) \setminus \; \emptyset$. The prediction function $\widehat{f}$ consists of a sum, including the sum of effects of all subsets of features $K \in \mathcal{P}^{*}$, denoted by $\sum_{K \in \mathcal{P}^{*}}g_{K}\left(\bm{x}_K\right)$, and a sum of terms that depend on the remaining features, denoted by $\Theta_{-S}(\boldxMinusS)$:
\begin{equation*}
    \widehat{f}(\boldx) = \sum_{K \in \mathcal{P}^{*}}g_{K}\left(\bm{x}_K\right) + \Theta_{-S}\left(\boldxMinusS\right)
\end{equation*}
\begin{align*}
\text{fME}_{\boldx, \boldhS} &= \left[\sum_{K \in \mathcal{P}^{*}}g_{K}(\bm{x}_K + \bm{h}_K) + \Theta_{-S}(\boldxMinusS)\right] - \left[\sum_{K \in \mathcal{P}^{*}}g_{K}(\bm{x}_K) + \Theta_{-S}\left(\boldxMinusS\right)\right] \\
&= \sum_{K \in \mathcal{P}^{*}}\left[g_{K}(\bm{x}_K + \bm{h}_K) - g_{K}(\bm{x}_K)\right]
\end{align*}
\end{proof}

\subsection{Relation between Marginal Effects, the Individual Conditional Expectation, and Partial Dependence}
\label{app:proofs_fme_ice_pd}

\begin{theorem}[Equivalence between Forward Marginal Effect and Forward Difference of Individual Conditional Expectation]
\label{theorem:equivalence_fme_ice}
The fME with step size $\boldhS$ is equivalent to the forward difference with step size $\boldhS$ between two locations on the ICE.
\end{theorem}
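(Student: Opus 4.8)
The plan is to prove the identity by direct substitution of the definitions, since the statement is essentially a bookkeeping observation about how the ICE curve is constructed. First I would recall that, for a fixed data point $\boldx = (\boldxS, \boldxMinusS)$, the ICE of the feature set $S$ is the function $\boldxS^{*} \mapsto \text{ICE}_{\boldx, S}(\boldxS^{*}) = \widehat{f}(\boldxS^{*}, \boldxMinusS)$, obtained by varying only the coordinates in $S$ while holding $\boldxMinusS$ at its observed value. A forward difference of step size $\boldhS$ between two locations on this curve is then, by definition, $\text{ICE}_{\boldx, S}(\boldxS + \boldhS) - \text{ICE}_{\boldx, S}(\boldxS)$.

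Next I would expand both terms using the definition of the ICE: the first term is $\widehat{f}(\boldxS + \boldhS, \boldxMinusS)$ and the second is $\widehat{f}(\boldxS, \boldxMinusS)$. The single point of care is the identification $\text{ICE}_{\boldx, S}(\boldxS) = \widehat{f}(\boldxS, \boldxMinusS) = \widehat{f}(\boldx)$, i.e.\ that evaluating the ICE curve at the observed value $\boldxS$ simply returns the original prediction at $\boldx$, because the partition $\boldx = (\boldxS, \boldxMinusS)$ reassembles the full feature vector. Substituting this back gives
\begin{align*}
\text{ICE}_{\boldx, S}(\boldxS + \boldhS) - \text{ICE}_{\boldx, S}(\boldxS)
&= \widehat{f}(\boldxS + \boldhS, \boldxMinusS) - \widehat{f}(\boldx) \\
&= \text{fME}_{\boldx, \boldhS},
\end{align*}
which is exactly the multivariate fME from Eq.~(\ref{eq:multivariate_fME}). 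The univariate case is recovered by taking $S = \{j\}$ and $\boldhS = h_j$.

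There is no real obstacle here: the result is a definitional equivalence, and the ``hard part'' amounts only to making explicit that the ICE holds $\boldxMinusS$ fixed while the fME perturbs exactly the same coordinates $S$ by the same step $\boldhS$, so the two constructions trace out the identical pair of prediction values. I would keep the write-up short, stating the two definitions, performing the substitution, and noting that this also yields the visual interpretation in Fig.~\ref{fig:me_ice_relation} — the fME is the vertical gap between the points on the ICE curve at $\boldxS$ and $\boldxS + \boldhS$. (The companion fact that the forward AME equals the analogous difference on the PD curve only for prediction functions linear in $\boldxS$, stated in Theorem~\ref{theorem:equivalence_ame_pd}, would be handled separately, since averaging ICE differences need not commute with a nonlinear dependence on $\boldxS$.)
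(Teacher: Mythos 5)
Your proposal is correct and follows essentially the same route as the paper: a direct substitution of the ICE definition, using $\text{ICE}_{\boldx, S}(\boldxS) = \widehat{f}(\boldxS, \boldxMinusS) = \widehat{f}(\boldx)$ to identify the two sides, which is exactly the two-line computation in Appendix \ref{app:proofs_fme_ice_pd}. Your additional remarks (the univariate specialization and why the PD analogue needs linearity) are consistent with the paper but not needed for this statement.
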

\begin{proof}
\begin{align*}
\text{fME}_{\boldx, \boldhS} &= \widehat{f}(\boldxS + \boldhS, \boldxMinusS) - \widehat{f}(\boldx) \\
&= \text{ICE}_{\boldx, S}(\boldxS + \boldhS) - \text{ICE}_{\boldx, S}(\boldxS) 
\end{align*}
\end{proof}
\begin{theorem}[Equivalence between Average Marginal Effect and Forward Difference of Partial Dependence for Linear Prediction Functions]
\label{theorem:equivalence_ame_pd}
The AME with step size $\boldhS$ is equivalent to the forward difference with step size $\boldhS$ between two locations on the PD for prediction functions that are linear in $\boldxS$.
\end{theorem}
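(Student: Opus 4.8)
The plan is to turn the hypothesis ``$\widehat{f}$ is linear in $\boldxS$'' into a closed form for $\widehat{f}$, read off the fME of each observation, and then observe that its sample average is exactly the slope (times $\boldhS$) of the partial dependence seen as a function of $\boldxS$. Concretely, ``linear in $\boldxS$'' means that for every fixed value of the complementary features the map $\boldxS \mapsto \widehat{f}(\boldxS, \boldxMinusS)$ is affine, i.e.\ there are a vector-valued map $\bm{\beta}(\cdot)$ and a scalar map $\gamma(\cdot)$, both depending on $\boldxMinusS$ only, such that $\widehat{f}(\boldxS, \boldxMinusS) = \bm{\beta}(\boldxMinusS)^{\top}\boldxS + \gamma(\boldxMinusS)$. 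Equivalently, every ICE curve in $\boldxS$ is affine, which is why the argument can also be phrased through Theorem~\ref{theorem:equivalence_fme_ice}. Note this does \emph{not} exclude interactions between $S$ and $-S$, only nonlinear dependence on $\boldxS$ itself.

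Substituting this form into the fME, the terms $\gamma(\boldxMinusSi)$ and the base point $\boldxS^{(i)}$ cancel, leaving
\[
\text{fME}_{\boldxi, \boldhS} = \widehat{f}(\boldxS^{(i)} + \boldhS, \boldxMinusSi) - \widehat{f}(\boldxi) = \bm{\beta}(\boldxMinusSi)^{\top}\boldhS ,
\]
which depends on $\boldxi$ only through $\boldxMinusSi$. Averaging over $\mathcal{D}$ gives $\text{AME}_{\mathcal{D}, \boldhS} = \big(\tfrac{1}{n}\sum_{i=1}^{n}\bm{\beta}(\boldxMinusSi)\big)^{\top}\boldhS$. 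On the other hand, $\text{PD}_{\mathcal{D}, S}(\boldxS) = \big(\tfrac{1}{n}\sum_{i=1}^{n}\bm{\beta}(\boldxMinusSi)\big)^{\top}\boldxS + \tfrac{1}{n}\sum_{i=1}^{n}\gamma(\boldxMinusSi)$ is itself affine in $\boldxS$ with the \emph{same} slope vector, so for any base point $\boldxS$ the forward difference $\text{PD}_{\mathcal{D}, S}(\boldxS + \boldhS) - \text{PD}_{\mathcal{D}, S}(\boldxS)$ equals $\big(\tfrac{1}{n}\sum_i \bm{\beta}(\boldxMinusSi)\big)^{\top}\boldhS = \text{AME}_{\mathcal{D}, \boldhS}$; taking $\boldxS$ to be the sample mean of the $S$-features recovers the two PD locations shown in Fig.~\ref{fig:me_ice_relation}. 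An essentially identical route: by Theorem~\ref{theorem:equivalence_fme_ice} each fME is a forward difference of an ICE curve, forward-differencing and averaging commute, and the forward difference of an affine ICE over step $\boldhS$ is independent of the base point, so the per-observation difference equals the corresponding difference on the averaged (PD) curve.

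I expect the only real work to be stating the linearity hypothesis cleanly and flagging \emph{why} it is needed: for a general $\widehat{f}$ the fME also depends on $\boldxS^{(i)}$, so the individual differences are no longer determined by the behaviour of the PD at a single base point, and the PD-level forward difference can genuinely differ from the AME (e.g.\ when $\widehat{f}$ is quadratic in $\boldxS$). Everything else reduces to substitution and linearity of the sample mean.
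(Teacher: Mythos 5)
Your proposal is correct and takes essentially the same route as the paper: both arguments rest on the fact that linearity in $\boldxS$ makes the forward difference independent of its base point, so each observation-wise fME can be evaluated at a common $\boldxS$ and the sample average then coincides with the forward difference of the Monte-Carlo PD estimate. Your explicit affine parametrization $\widehat{f}(\boldxS, \boldxMinusS) = \bm{\beta}(\boldxMinusS)^{\top}\boldxS + \gamma(\boldxMinusS)$ is simply a cleaner formalization of the invariance condition the paper invokes as Eq.~(\ref{eq:linear_pd_ame}), not a different proof idea.
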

\begin{proof}
If $\widehat{f}$ is linear in $\boldxS$:
\begin{align}
\label{eq:linear_pd_ame}
\widehat{f}\left(\boldxS^{(i)} + \boldhS\right) = \widehat{f}(\boldxS + \boldhS) \quad \forall \; & \; i \;\in\; \{1, \dots, n\}, \\
& \; \boldxS, \; \boldhS \;\in\; \bigtimes_{j \in S} \mathcal{X}_j \nonumber
\end{align}
It follows:
\begin{align*}
\text{AME}_{\mathcal{D}, \boldhS} &= \frac{1}{n} \sum_{i = 1}^n \left[\widehat{f}\left(\boldxS^{(i)} + \boldhS, \boldxMinusS^{(i)}\right) - \widehat{f}\left(\boldxi\right) \right] \\
&= \frac{1}{n} \sum_{i = 1}^n \widehat{f}\left(\boldxS^{(i)} + \boldhS, \boldxMinusS^{(i)}\right) - \frac{1}{n} \sum_{i = 1}^n \widehat{f}\left(\boldxS^{(i)}, \boldxMinusS^{(i)}\right) \\
&\stackrel{(\ref{eq:linear_pd_ame})}{=}
 \frac{1}{n} \sum_{i = 1}^n \widehat{f}\left(\boldxS + \boldhS, \boldxMinusS^{(i)}\right) - \frac{1}{n} \sum_{i = 1}^n \widehat{f}\left(\boldxS, \boldxMinusS^{(i)}\right) \\
&= PD_{\mathcal{D}, S}\left(\boldxS + \boldhS\right) - PD_{\mathcal{D}, S}\left(\boldxS\right)
\end{align*}
\end{proof}

\normalsize
\bibliographystyle{spbasic}
\bibliography{bibfile}   

\begin{thebibliography}{59}
\providecommand{\natexlab}[1]{#1}
\providecommand{\url}[1]{{#1}}
\providecommand{\urlprefix}{URL }
\expandafter\ifx\csname urlstyle\endcsname\relax
  \providecommand{\doi}[1]{~\discretionary{}{}{}#1}\else
  \providecommand{\doi}{~\discretionary{}{}{}\begingroup \urlstyle{rm}\Url}\fi
\providecommand{\eprint}[2][]{\url{#2}}

\bibitem[{Ai and Norton(2003)}]{ai_marginal_effects}
Ai C, Norton EC (2003) Interaction terms in logit and probit models. Economics
  Letters 80(1):123 -- 129

\bibitem[{Alt and Godau(1995)}]{alt_frechet}
Alt H, Godau M (1995) Computing the {F}réchet distance between two polygonal
  curves. International Journal of Computational Geometry \& Applications
  05(01n02):75--91

\bibitem[{{Ancona} et~al.(2017){Ancona}, {Ceolini}, {{\"O}ztireli}, and
  {Gross}}]{ancona_gradients}
{Ancona} M, {Ceolini} E, {{\"O}ztireli} C, {Gross} M (2017) Towards better
  understanding of gradient-based attribution methods for deep neural networks.
  \eprint{arXiv:1711.06104}

\bibitem[{Apley and Zhu(2020)}]{apley_ale}
Apley DW, Zhu J (2020) Visualizing the effects of predictor variables in black
  box supervised learning models. Journal of the Royal Statistical Society:
  Series B (Statistical Methodology) 82(4):1059--1086

\bibitem[{Athey(2017)}]{athey_policy}
Athey S (2017) Beyond prediction: Using big data for policy problems. Science
  355(6324):483--485

\bibitem[{Atzmueller(2015)}]{atzmueller_subgroups}
Atzmueller M (2015) Subgroup discovery. WIREs Data Mining and Knowledge
  Discovery 5(1):35--49

\bibitem[{Bartus(2005)}]{bartus_marginal_effects}
Bartus T (2005) Estimation of marginal effects using margeff. The Stata Journal
  5(3):309 -- 329

\bibitem[{Belogay et~al.(1997)Belogay, Cabrelli, Molter, and
  Shonkwiler}]{belogay_hausdorff_distance}
Belogay E, Cabrelli C, Molter U, Shonkwiler R (1997) Calculating the hausdorff
  distance between curves. Information Processing Letters 64(1):17--22

\bibitem[{Bertsimas and Dunn(2017)}]{bertsimas_global_trees}
Bertsimas D, Dunn J (2017) Optimal classification trees. Machine Learning
  106(7):1039--1082

\bibitem[{Breiman(2001{\natexlab{a}})}]{breiman_randomforests}
Breiman L (2001{\natexlab{a}}) Random forests. Machine Learning 45(1):5--32

\bibitem[{Breiman(2001{\natexlab{b}})}]{breiman_twocultures}
Breiman L (2001{\natexlab{b}}) Statistical modeling: the two cultures. Statist
  Sci 16(3):199--231, with comments and a rejoinder by the author

\bibitem[{Breiman et~al.(1984)Breiman, Friedman, Olshen, and
  Stone}]{breiman_cart}
Breiman L, Friedman JH, Olshen RA, Stone CJ (1984) Classification and
  Regression Trees. Wadsworth and Brooks, Monterey, CA

\bibitem[{Chastaing et~al.(2011)Chastaing, Gamboa, and
  Prieur}]{chastaing_hdmr_dependent}
Chastaing G, Gamboa F, Prieur C (2011) Generalized hoeffding-sobol
  decomposition for dependent variables - {Application} to sensitivity
  analysis. Electronic Journal of Statistics 6

\bibitem[{Friedman(2001)}]{friedman_pdp}
Friedman JH (2001) Greedy function approximation: A gradient boosting machine.
  Ann Statist 29(5):1189--1232

\bibitem[{Gelman and Pardoe(2007)}]{gelman_predictive_comparisons}
Gelman A, Pardoe I (2007) 2. average predictive comparisons for models with
  nonlinearity, interactions, and variance components. Sociological Methodology
  37(1):23--51

\bibitem[{Goldstein et~al.(2015)Goldstein, Kapelner, Bleich, and
  Pitkin}]{goldstein_ice}
Goldstein A, Kapelner A, Bleich J, Pitkin E (2015) Peeking inside the black
  box: Visualizing statistical learning with plots of individual conditional
  expectation. Journal of Computational and Graphical Statistics 24(1):44--65

\bibitem[{Greene(2012)}]{greene_econometrics}
Greene W (2012) Econometric Analysis. Pearson International Edition, Pearson
  Education, Limited

\bibitem[{Hainmueller et~al.(2019)Hainmueller, Mummolo, and
  Xu}]{hainmueller_trust}
Hainmueller J, Mummolo J, Xu Y (2019) How much should we trust estimates from
  multiplicative interaction models? {Simple} tools to improve empirical
  practice. Political Analysis 27(2):163–192

\bibitem[{Hastie et~al.(2001)Hastie, Tibshirani, and
  Friedman}]{hastie_elemstatlearn}
Hastie T, Tibshirani R, Friedman J (2001) The Elements of Statistical Learning.
  Springer Series in Statistics, Springer New York Inc., New York, NY, USA

\bibitem[{Hawkins(1980)}]{hawkins_outliers}
Hawkins D (1980) Identification of outliers. Monographs on applied probability
  and statistics, Chapman and Hall

\bibitem[{Hooker(2004{\natexlab{a}})}]{hooker_CERT}
Hooker G (2004{\natexlab{a}}) Diagnosing extrapolation: Tree-based density
  estimation. In: Proceedings of the Tenth ACM SIGKDD International Conference
  on Knowledge Discovery and Data Mining, Association for Computing Machinery,
  New York, NY, USA, KDD '04, p 569–574

\bibitem[{Hooker(2004{\natexlab{b}})}]{hooker_fanova}
Hooker G (2004{\natexlab{b}}) Discovering additive structure in black box
  functions. In: Proceedings of the Tenth ACM SIGKDD International Conference
  on Knowledge Discovery and Data Mining, ACM, New York, NY, USA, KDD '04, pp
  575--580

\bibitem[{Hooker(2007)}]{hooker_generalizedfanova}
Hooker G (2007) Generalized functional anova diagnostics for high-dimensional
  functions of dependent variables. Journal of Computational and Graphical
  Statistics 16(3):709--732

\bibitem[{Hooker et~al.(2021)Hooker, Mentch, and Zhou}]{hooker_importance}
Hooker G, Mentch L, Zhou S (2021) Unrestricted permutation forces
  extrapolation: variable importance requires at least one more model, or there
  is no free variable importance. Statistics and Computing 31(6):82

\bibitem[{Hothorn et~al.(2006)Hothorn, Hornik, and Zeileis}]{hothorn_ctree}
Hothorn T, Hornik K, Zeileis A (2006) Unbiased recursive partitioning: A
  conditional inference framework. Journal of Computational and Graphical
  Statistics 15(3):651--674

\bibitem[{King and Zeng(2006)}]{king_extrapolation}
King G, Zeng L (2006) The dangers of extreme counterfactuals. Political
  Analysis 14(2):131–159

\bibitem[{Kriegel et~al.(2009)Kriegel, Kr\"{o}ger, Schubert, and
  Zimek}]{kriegel_lop}
Kriegel HP, Kr\"{o}ger P, Schubert E, Zimek A (2009) Loop: Local outlier
  probabilities. In: Proceedings of the 18th ACM Conference on Information and
  Knowledge Management, Association for Computing Machinery, New York, NY, USA,
  CIKM '09, p 1649–1652

\bibitem[{Last et~al.(2002)Last, Maimon, and Minkov}]{mark_tree_instability}
Last M, Maimon O, Minkov E (2002) Improving stability of decision trees.
  International Journal of Pattern Recognition and Artificial Intelligence
  16(02):145--159

\bibitem[{Leeper(2018)}]{leeper_margins}
Leeper TJ (2018) margins: Marginal effects for model objects. {R} package
  version 0.3.23

\bibitem[{Li et~al.(2006)Li, Hu, Wang, Georgopoulos, Schoendorf, and
  Rabitz}]{li_rs_hdmr_2006}
Li G, Hu J, Wang SW, Georgopoulos PG, Schoendorf J, Rabitz H (2006) Random
  sampling-high dimensional model representation ({RS-HDMR}) and orthogonality
  of its different order component functions. The Journal of Physical Chemistry
  A 110(7):2474--2485

\bibitem[{Liu et~al.(2012)Liu, Ting, and Zhou}]{liu_isolationforest}
Liu FT, Ting KM, Zhou ZH (2012) Isolation-based anomaly detection. ACM Trans
  Knowl Discov Data 6(1)

\bibitem[{Loh(2014)}]{loh_trees_review}
Loh WY (2014) Fifty years of classification and regression trees. International
  Statistical Review 82(3):329--348

\bibitem[{Lundberg and Lee(2017)}]{lundberg_shap}
Lundberg SM, Lee SI (2017) A unified approach to interpreting model
  predictions. In: Guyon I, Luxburg UV, Bengio S, Wallach H, Fergus R,
  Vishwanathan S, Garnett R (eds) Advances in Neural Information Processing
  Systems 30, Curran Associates, Inc., pp 4765--4774

\bibitem[{Mize et~al.(2019)Mize, Doan, and Long}]{mize_discrete_change}
Mize TD, Doan L, Long JS (2019) A general framework for comparing predictions
  and marginal effects across models. Sociological Methodology 49(1):152--189

\bibitem[{Molnar et~al.(2020)Molnar, Casalicchio, and
  Bischl}]{molnar_complexity}
Molnar C, Casalicchio G, Bischl B (2020) Quantifying model complexity via
  functional decomposition for better post-hoc interpretability. In: Cellier P,
  Driessens K (eds) Machine Learning and Knowledge Discovery in Databases,
  Springer International Publishing, Cham, pp 193--204

\bibitem[{Molnar et~al.(2021)Molnar, König, Herbinger, Freiesleben, Dandl,
  Scholbeck, Casalicchio, Grosse-Wentrup, and Bischl}]{molnar_pitfalls}
Molnar C, König G, Herbinger J, Freiesleben T, Dandl S, Scholbeck CA,
  Casalicchio G, Grosse-Wentrup M, Bischl B (2021) General pitfalls of
  model-agnostic interpretation methods for machine learning models.
  \eprint{arXiv:2007.04131}

\bibitem[{Morris(1991)}]{morris_method}
Morris MD (1991) Factorial sampling plans for preliminary computational
  experiments. Technometrics pp 161--174

\bibitem[{Mullahy(2017)}]{mullahy_marginal_effects}
Mullahy J (2017) Marginal effects in multivariate probit models. Empirical
  economics 53(2):447--461

\bibitem[{Norouzi et~al.(2015)Norouzi, Collins, Johnson, Fleet, and
  Kohli}]{norouzi_nongreedy_tree}
Norouzi M, Collins MD, Johnson M, Fleet DJ, Kohli P (2015) Efficient non-greedy
  optimization of decision trees. In: Proceedings of the 28th International
  Conference on Neural Information Processing Systems - Volume 1, MIT Press,
  Cambridge, MA, USA, NIPS'15, p 1729–1737

\bibitem[{Norton et~al.(2019)Norton, Dowd, and
  Maciejewski}]{norton_marginal_effects}
Norton EC, Dowd BE, Maciejewski ML (2019) Marginal effects—quantifying the
  effect of changes in risk factors in logistic regression models. JAMA
  321(13):1304--1305

\bibitem[{Onukwugha et~al.(2015)Onukwugha, Bergtold, and
  Jain}]{onukwugha_marginaleffects}
Onukwugha E, Bergtold J, Jain R (2015) A primer on marginal effects---part
  {II}: Health services research applications. PharmacoEconomics 33(2):97--103

\bibitem[{Quinlan(1993)}]{quinlan_c4_5}
Quinlan JR (1993) C4.5: Programs for Machine Learning. {Morgan Kaufmann}

\bibitem[{Ramsey and Bergtold(2021)}]{ramsey_me_neural_network}
Ramsey SM, Bergtold JS (2021) Examining inferences from neural network
  estimators of binary choice processes: Marginal effects, and
  willingness-to-pay. Computational Economics 58(4):1137--1165

\bibitem[{Razavi and Gupta(2016)}]{razavi_variogram}
Razavi S, Gupta HV (2016) A new framework for comprehensive, robust, and
  efficient global sensitivity analysis: 1. theory. Water Resources Research
  52(1):423--439

\bibitem[{Razavi et~al.(2021)Razavi, Jakeman, Saltelli, Prieur, Iooss,
  Borgonovo, Plischke, {Lo Piano}, Iwanaga, Becker, Tarantola, Guillaume,
  Jakeman, Gupta, Melillo, Rabitti, Chabridon, Duan, Sun, Smith,
  Sheikholeslami, Hosseini, Asadzadeh, Puy, Kucherenko, and
  Maier}]{razavi_sa_review}
Razavi S, Jakeman A, Saltelli A, Prieur C, Iooss B, Borgonovo E, Plischke E,
  {Lo Piano} S, Iwanaga T, Becker W, Tarantola S, Guillaume JH, Jakeman J,
  Gupta H, Melillo N, Rabitti G, Chabridon V, Duan Q, Sun X, Smith S,
  Sheikholeslami R, Hosseini N, Asadzadeh M, Puy A, Kucherenko S, Maier HR
  (2021) The future of sensitivity analysis: An essential discipline for
  systems modeling and policy support. Environmental Modelling and Software
  137:104954

\bibitem[{Ribeiro et~al.(2016)Ribeiro, Singh, and Guestrin}]{ribeiro_lime}
Ribeiro MT, Singh S, Guestrin C (2016) "{W}hy should {I} trust you?":
  Explaining the predictions of any classifier. In: Proceedings of the 22nd ACM
  SIGKDD International Conference on Knowledge Discovery and Data Mining,
  Association for Computing Machinery, New York, NY, USA, KDD '16, p
  1135–1144

\bibitem[{Saltelli et~al.(2008)Saltelli, Ratto, Andres, Campolongo, Cariboni,
  Gatelli, Saisana, and Tarantola}]{saltelli_sa}
Saltelli A, Ratto M, Andres T, Campolongo F, Cariboni J, Gatelli D, Saisana M,
  Tarantola S (2008) Global Sensitivity Analysis: The Primer. Wiley

\bibitem[{Seibold et~al.(2016)Seibold, Zeileis, and
  Hothorn}]{seibold_subgroups}
Seibold H, Zeileis A, Hothorn T (2016) Model-based recursive partitioning for
  subgroup analyses. The international journal of biostatistics 12(1):45—63

\bibitem[{Slack et~al.(2019)Slack, Hilgard, Jia, Singh, and
  Lakkaraju}]{slack_fooling_lime}
Slack D, Hilgard S, Jia E, Singh S, Lakkaraju H (2019) Fooling {LIME} and
  {SHAP}: Adversarial attacks on post hoc explanation methods.
  \eprint{arXiv:1911.02508}

\bibitem[{Stachl et~al.(2017)Stachl, Hilbert, Au, Buschek, De~Luca, Bischl,
  Hussmann, and Bühner}]{stachl_smartphone}
Stachl C, Hilbert S, Au JQ, Buschek D, De~Luca A, Bischl B, Hussmann H, Bühner
  M (2017) Personality traits predict smartphone usage. European Journal of
  Personality 31(6):701--722

\bibitem[{StataCorp(2019)}]{stata_manual}
StataCorp (2019) Stata: Release 16. College Station, TX: StataCorp LLC.

\bibitem[{{\v{S}}trumbelj and Kononenko(2014)}]{strumbelj_shapley}
{\v{S}}trumbelj E, Kononenko I (2014) Explaining prediction models and
  individual predictions with feature contributions. Knowledge and Information
  Systems 41(3):647--665

\bibitem[{Turney(1995)}]{turney_tree_stability}
Turney P (1995) Technical note: Bias and the quantification of stability.
  Machine Learning 20(1):23--33

\bibitem[{Wachter et~al.(2018)Wachter, Mittelstadt, and
  Russell}]{wachter_counterfactuals}
Wachter S, Mittelstadt B, Russell C (2018) Counterfactual explanations without
  opening the black box: automated decisions and the {GDPR}. Harvard Journal of
  Law and Technology 31(2):841--887

\bibitem[{Williams(2012)}]{williams_margins}
Williams R (2012) Using the margins command to estimate and interpret adjusted
  predictions and marginal effects. Stata Journal 12(2):308--331(24)

\bibitem[{Zeileis et~al.(2008)Zeileis, Hothorn, and Hornik}]{zeileis_mob}
Zeileis A, Hothorn T, Hornik K (2008) Model-based recursive partitioning.
  Journal of Computational and Graphical Statistics 17(2):492--514

\bibitem[{Zhao et~al.(2020)Zhao, Yan, Yu, and
  Van~Hentenryck}]{zhao_me_travel_application}
Zhao X, Yan X, Yu A, Van~Hentenryck P (2020) Prediction and behavioral analysis
  of travel mode choice: A comparison of machine learning and logit models.
  Travel Behaviour and Society 20:22--35

\bibitem[{Zhou et~al.(2018)Zhou, Zhou, and Hooker}]{zhou_approximation_trees}
Zhou Y, Zhou Z, Hooker G (2018) Approximation trees: Statistical stability in
  model distillation. \eprint{arXiv:1808.07573}

\bibitem[{Zhou et~al.(2021)Zhou, Hooker, and Wang}]{zhou_slime}
Zhou Z, Hooker G, Wang F (2021) S-LIME: Stabilized-LIME for Model Explanation,
  Association for Computing Machinery, New York, NY, USA, p 2429–2438

\end{thebibliography}

\end{document}